\newcommand{\bbr}[1]{\left[ {#1} \right]}
\newcommand{\bpa}[1]{\left( {#1} \right)}
\newcommand{\bag}[1]{\left\langle {#1} \right\rangle}
 \newcommand{\cm}{,\;}
\begin{document}

\title {\huge Variational Transport: A Convergent Particle-Based Algorithm for Distributional Optimization}

\author
{Zhuoran Yang\thanks{Princeton University, email:\texttt{zy6@princeton.edu}.} \qquad Yufeng Zhang\thanks{Northwestern  University, email: \texttt{yufengzhang2023@u.northwestern.edu}, \texttt{zhaoran.wang@northwestern.edu}.} \qquad  Yongxin Chen\thanks{Georgia Institute of Technology, email:\texttt{yongchen@gatech.edu}.} \qquad Zhaoran Wang$^\dagger$\thanks{Part of this work was done while the Zhuoran Yang and Zhaoran Wang was visiting the Simons Institute for the Theory of Computing.}  
}

\date{\today}

\maketitle

\begin{abstract}
We consider the optimization problem of minimizing a functional defined over a family of probability distributions, where the objective  functional is assumed to possess a variational form. 
Such a distributional optimization problem arises widely in machine learning and statistics,  with Monte-Carlo sampling, variational inference, policy optimization, and generative adversarial network as examples. 
For this problem, we propose a novel particle-based algorithm, dubbed as  variational transport, which
approximately performs Wasserstein gradient descent over the manifold of  probability distributions via iteratively pushing a set of particles. Specifically, we
prove that 
moving along the geodesic in the direction of 
functional  gradient  with respect to the second-order Wasserstein distance   is equivalent to applying a  pushforward mapping to a probability distribution, which can be approximated accurately by pushing a set of particles. 
Specifically, in each iteration of variational transport, we first solve the variational problem associated with the objective functional using the particles, whose solution yields the Wasserstein gradient direction.  Then we update the current distribution by pushing  each particle  along the direction specified by such a solution. 
By characterizing both the statistical error incurred in  estimating the Wasserstein  gradient and the progress of the  optimization algorithm,
we prove that  when the objective functional satisfies a  functional version of the  Polyak-\L{}ojasiewicz (PL) \citep{polyak1963gradient} and smoothness conditions,    variational  transport   converges linearly to the global minimum of the objective functional up to a certain statistical error, which decays to zero sublinearly as the number of particles goes to infinity.
\end{abstract}



\section{Introduction}\label{eq:intro}

We study 
a class of optimization problems over nonparametric probability distributions, dubbed as distributional optimization, where the goal is to minimize a  functional of probability distribution. 
Specifically, 
a distributional optimization problem is given by $\min_{p \in \cP_2(\cX)} F(p)$, where $\cP_2(\cX)$ is the set of  all probability densities  supported  on $\cX \subseteq \RR^d$ with finite second-order moments, and $F$ is the objective functional of interest. 
Many machine learning problems fall into such a category. 
For instance, in  Bayesian inference \citep{gelman2013bayesian}, the probability  distribution describes the belief based on the observations and the functional of interest is the Kullback-Leibler (KL) divergence. 
In 
distributionally robust optimization (DRO) \citep{rahimian2019distributionally}, the inner problem optimizes a linear functional  to find   the worst-case data distribution. 
Besides, in unsupervised learning models such as generative adversarial network \citep{goodfellow2014generative},
the objective functional captures the proximity between the generative model and the target distribution. 
Whereas   the policy optimization problem in reinforcement learning \citep{sutton2011reinforcement} seeks  a distribution over the state-action space that achieves the   highest expected reward.
All of these  
 instances have been intensively studied separately  with   algorithms proposed in parallel.

 Distributional optimization    belongs to the general family of infinite-dimensional optimization problems \citep{EkeTur83,BurJey05,Fat99} and hence inherits many of the challenges therein. 
 In particular, without further assumptions, it requires an infinite number of parameters to fully represent the optimization variable, namely the distribution $p$. 
 As a result, despite a broad range of applications of distributional optimization, 
 such  infinite-dimensionality makes it significantly more challenging to solve than finite-dimensional optimization. 
 One straightforward approach is to parameterize $p$ by a finite-dimensional parameter $\theta$ as $p_{\theta}$ and 
reduce the problem to solving   $\min_{\theta \in \Theta} F(p_{\theta})$, where $\Theta$ is the parameter space. 
Such a method is a common practice in 
variational inference \citep{gershman2012tutorial, kingma2019introduction}, policy optimization \citep{sutton2000policy, schulman2015trust,haarnoja2018soft}, and GAN \citep{goodfellow2014generative,arjovsky2017wasserstein}, and has achieved tremendous empirical successes. 
However, 
this approach suffers from
the following three  drawbacks. 
First,  
the validity of this approach hinges on the fact that  the parameterized model $\{ p_{\theta} \colon \theta \in \Theta \}$ has sufficient representation power such that it well approximates the global minimizer of $F$ over $\cP_2(\cX)$. 
Otherwise,  the finite-dimensional parameterization  introduces a considerable  approximation bias  such that the global minimizer of $\min_{\theta \in \Theta}  F(p_{\theta})$ is suboptimal for $F$.
Second, the new objective function $F(p_{\theta})$ is 
 generally nonconvex in $\theta$, 
which oftentimes 
results in  undesirable empirical performances and a lack of theoretical guarantees. 
Finally, in many applications of distributional optimization, 
instead of finding the optimal probability  distribution itself,
the goal is to draw  samples from 
it, which is,
e.g., the case for Bayesian inference with the goal of sampling from the posterior. 
In this case, 
there is a tension between optimization and sampling. 
Specifically, 
finite-dimensional  
parametrization  of probability distributions 
are commonly implemented via  either the Boltzmann distribution or the reparameterization trick \citep{kingma2013auto}.
In the former, 
$p_{\theta}$ is written as  $p_{\theta} (\cdot ) =   \exp[- f_{\theta} (\cdot )]/ Z_{\theta}$ for some energy function $f_{\theta}$, where  $Z_{\theta} = \int_{x\in \cX } \exp[- f_{\theta} (x )]~\ud x$ is the normalization factor.  
Although the density $p_{\theta}$ has a closed form which makes the optimization problem convenient, due to the integral in $Z_{\theta}$, it is often challenging to draw samples from $p_{\theta}$. 
In contrast, when using the reparameterization trick, 
sampling from $p_{\theta}$ is conveniently achieved  by sending a random noise  to a 
differentiable mapping with parameter $\theta$. Nevertheless, since the density $p_{\theta}$ is implicitly defined, optimizing $F(p_{\theta})$ can be challenging, especially when the closed form of $p_{\theta}$ is required. 
 
To alleviate these obstacles, we propose to directly solve the infinite-dimensional distributional optimization problem, utilizing the   additional constraint  that the decision variable $p$ is a probability distribution. 
In particular, 
utilizing the 
the optimal transport (OT) framework \citep{otto2001geometry,villani2003topics,villani2008optimal},
it is shown that $\cP_{2} (\cX)$ is a geodesic space where the second-order Wasserstein distance 
plays the role 
of the geodesic distance.
From such a  perspective, in many applications of distributional optimization, the objective functional $F$ is a geodesically convex functional on $\cP_2(\cX)$. 
Thus, we propose to directly optimize the functional $F$ on $\cP_2(\cX)$ by function gradient descent  with respect to the geodesic distance \citep{zhang2016first}, which constructs a sequence of iterates $\{ p_t\}_{t\geq 1}$ in $\cP_2(\cX)$ satisfying 
\#\label{eq:riemann_grad_descent}
p_{t+1} \leftarrow \expm_{p_t} \bigl [  - \alpha_t \cdot \grad F(p_t) \bigr ], 
\#
where $\alpha_t$ is the stepsize, $\grad F $ is the functional gradient with respect to the Wasserstein distance, which is also known as the Wasserstein gradient, and $\expm_p $ denotes the  exponential mapping on $\cP_2(\cX)$, which specifies how to move along any given direction on $\cP_2(\cX)$.  

To implement the update in \eqref{eq:riemann_grad_descent}, we need to (i) obtain the Wasserstein gradient $\grad F(p_t)$ and (ii) calculate the exponential mapping.
Moreover, to make it useful for applications,  it is   desirable to (iii) be able to  draw samples from each $p_t$ and  (iv)  allow  a sufficiently  general class of objective functionals. 
To achieve these goals, 
 we focus on a class of functionals that admit
a variational form
 $F(p) = \sup_{f\in \cF} \{ \EE_p [f(X)] - F^*(f)\} $,   
where $\cF$ is a function class on $\cX$ and $F^*\colon \cF\rightarrow \RR $ is a strongly convex functional.  
Such   a variational form 
generalizes beyond convex functionals,
where $F^*$ corresponds to the convex conjugate when $F$ is convex.      
A key feature of the variational functional objective is that it enables us  implement the Wasserstein  gradient  update in \eqref{eq:riemann_grad_descent} using data sampled from $p_t$. 
In particular, the Wasserstein gradient $\grad F(p_t)$ can be calculated from the solution $f_{p_t}^*$ to  the maximization problem for obtaining $F(p_t)$, which can be estimated using data.
Meanwhile, 
as we will show in \S\ref{sec:algo}, the exponential mapping in \eqref{eq:riemann_grad_descent} is equivalent to a  pushforward mapping induced by $   f_{p_t}^*$, which can be efficiently obtained once $f_{p_t}^*$ is estimated.  
Moreover, to efficiently sample from $p_t$, we approximate it using the 
empirical measure of a set of particles,
which leads to our proposed  algorithm,
which is named as variational transport as it utilizes the  optimal transport framework and a variational representation of the objective. 
Specifically, 
variational transport maintains a set of particles and outputs  their empirical measure 
as the solution  to the distributional optimization problem. 
In each iteration,   variational transport  approximates  the update in \eqref{eq:riemann_grad_descent} by first solving  the dual maximization problem associated with the variational form of the objective and then using the  obtained solution to specify  a direction  to push each particle. 
The variational transport algorithm can be viewed as a  forward discretization of the Wasserstein gradient flow \citep{santambrogio2017euclidean} 
with particle approximation and gradient estimation.

Compared with existing methods, variational transport features a unified algorithmic framework that enjoys the following advantages. First, by considering  functionals with a variational form, the algorithm can be applied to a broad class of objective functionals.
Second, the functional optimization problem associated  with the variational representation  of $F$ can be  solved by any supervised learning methods such as deep learning \citep{lecun2015deep, goodfellow2016deep, fan2019selective} and kernel methods 
\citep{friedman2001elements, shawe2004kernel}, which offers additional flexibility in algorithm design. 
Finally, by considering nonparametric probability distribution, variational transport does not suffer from the approximation bias incurred by finite-dimensional parameterization of the probability distribution, 
and the particle approximation enables convenient sampling from the obtained probability measure.

To showcase these advantages, we consider an instantiation of variational transport where the objective functional $F$ satisfies the Polyak-\L{}ojasiewicz (PL) condition \citep{polyak1963gradient} with respect to the Wasserstein distance and the variational problem associated with $F$ is solved via kernel methods. 
In this case, we prove that variational transport generates a sequence of probability distributions that converges linearly to a  global minimizer of $F$ up to some statistical error. 
Here the statistical error is incurred in estimating the Wasserstein gradient by solving the dual maximization problem using functions in a reproducing kernel Hilbert space (RKHS) with finite data, which   converges sublinearly to zero as the number of particles goes to infinity. 
Therefore, in this scenario, variational transport provably  enjoys both computational efficiency and global optimality. 
 
\vspace{5pt}

{\bf \noindent Our Contribution.} Our contribution is two fold. First, utilizing the optimal transport framework and the variational form of the objective functional, we propose a novel variational transport algorithmic framework  for solving the distributional optimization problem via particle approximation. 
In each iteration, variational transport first solves the variational problem associated with the objective to obtain an estimator of the Wasserstein gradient and then approximately implements Wasserstein gradient descent by pushing the particles. 
Second, when the Wasserstein gradient is approximated using RKHS functions and the objective functional satisfies the  PL condition, we prove that the sequence of probability distributions constructed by  variational transport   converges linearly to the  global minimum of the objective functional,  up to certain statistical error that converges to zero as the number of particles goes to infinity. 
To the best of our knowledge, we seem to propose the first  particle-based algorithm for general distributional optimization problems with both  global convergence and   global optimality guarantees.

\vspace{5pt}
{\bf \noindent Related Works.} 
There is a large body of literature on manifold optimization where the goal is to minimize a functional  defined on  a Riemannian manifold. 
See, e.g.,  \cite{udriste1994convex,ferreira2002proximal,absil2009optimization, ring2012optimization, bonnabel2013stochastic,  zhang2016first,zhang2016riemannian, liu2017accelerated, agarwal2018adaptive, zhang2018r,tripuraneni2018averaging,boumal2018global, becigneul2018riemannian, zhang2018estimate, sato2019riemannian, zhou2019faster, weber2019projection} and the references therein. 
Also see recent reviews \citep {ferreira2020first,hosseini2020recent} 
for  summary.
These works all focus on finite-dimensional manifolds  where each point in the feasible set has a neighborhood that is homeomorphic to the Euclidean space. 
In contrast, the feasible set of distributional optimization is the Wasserstein space on a subset $\cX$ of $\RR^d$, which is an infinite-dimensional  manifold. 
As a result, unlike 
 finite-dimensional manifold optimization, on Wasserstein space, 
 both 
 the   Riemannian gradient of the objective functional and the exponential mapping  cannot be easily obtained, which  makes it infeasible to directly apply   manifold optimization methods.

Moreover, our work is closely related to the vast literature on Bayesian inference. 
Our work is particularly related to the line of research on 
gradient-based MCMC, which 
is a family of   particle-based sampling algorithms that approximate diffusion process whose stationary distribution is the target distribution.  
The finite-time convergence of gradient-based MCMC has been extensively studied. 
 See, e.g.,   \cite {welling2011bayesian, chen2014stochastic, ma2015complete, chen2015convergence,
 dubey2016variance, vollmer2016exploration,chen2016stochastic, dalalyan2017further, chen2017convergence, raginsky2017non, brosse2018promises, xu2018global, cheng2018convergence, chatterji2018theory,wibisono2018sampling,bernton2018langevin,  dalalyan2019user, baker2019control,ma2019there,  ma2019sampling, mou2019improved,vempala2019rapid, salim2019stochastic, durmus2019high, wibisono2019proximal} and the references therein. 
 Among these works, 
 our work is more related to 
 \cite{wibisono2018sampling,bernton2018langevin,  ma2019there,ma2019sampling,cheng2018convergence,vempala2019rapid, wibisono2019proximal, salim2019stochastic}, which establish the finite-time convergence of gradient-based MCMC methods in terms of the KL-divergence.
 These works utilize the property that the diffusion process associated with   Langevin dynamics in $\cX$ corresponds to the  Wasserstein gradient flow  of the KL-divergence in $\cP_2(\cX)$
 \citep{jordan1998variational}, and the  methods proposed in these works  apply various time-discretization techniques.
Besides, \cite{frogner2020approximate} recently 
applies the Wasserstein flow of KL divergence for the inference of diffusion processes.

In addition to gradient-based MCMC,  variational transport also shares similarity with Stein variational gradient descent (SVGD)  \citep{liu2016stein}, which  is a more recent particle-based algorithm for Bayesian inference.
 Variants of SVGD have been subsequently proposed. See, e.g., 
 \cite{detommaso2018stein,han2018stein, chen2018unified,liu2019understanding,gong2019quantile, wang2019stein, zhang2020stochastic, ye2020stein}
  and the references therein. 
 Departing from MCMC where independent stochastic particles are used, it leverages interacting deterministic particles to approximate the probability measure of interest. In the mean-field limit where the number of particles go to infinity, it can be viewed as the  gradient flow of the KL-divergence with respect to a modified Wasserstein metric \citep{liu2017stein}.
 Utilizing the gradient flow interpretation, 
 the convergence of SVGD has been established in the mean-field limit 
\citep{liu2017stein, duncan2019geometry, korba2020non,chewi2020svgd}. 
Meanwhile, it is worth noting that 
\cite{chen2018unified, liu2019understanding} build the connection between MCMC and SVGD 
through the lens of Wasserstein gradient flow of KL divergence.
Comparing to MCMC and SVGD, 
variational transport approximates the Wasserstein gradient descent using particles, which corresponds to a forward discretization of the Wasserstein gradient flow. 
Moreover, utilizing the variational representation of the objective functional, 
variational transport can be applied to functionals beyond the family of $f$-divergences. 

Furthermore, 
our work is also related to 
\cite{arbel2019maximum},
which studies the convergence of Wasserstein gradient flow of the maximum mean discrepancy (MMD) and its discretization.  
Since MMD   is an integral probability metric and thus admits variational representation,
variational transport can also be utilized to minimize MMD. 
Moreover, when restricted to  MMD minimization, our algorithm is similar to the sample-based approximation method proposed in \cite{arbel2019maximum}. 
Besides, \cite {futami2019bayesian} proposes a particle-based algorithm  that minimizes MMD using the Frank-Wolfe method.  
Another related work on 
Bayesian inference is   \cite{dai2016provable},
which proposes a  
distributional optimization algorithm named particle mirror descent (PMD). 
Specifically, PMD performs 
infinite-dimensional mirror descent on the probability density function   using 
functional gradients of the KL-divergence.
In contrast to our work, their functional gradient is with respect to the functional $\ell_1-\ell_{\infty}$ structure whereas variational transport utilize the Wasserstein geometry. 
Moreover, for  computational tractability,  PMD  maintains a set of particles and directly estimates the density function via kernel density estimation in each iteration.  
In comparison, variational transport
does not requires the density functions  of the iterates.
Instead, we 
use the empirical distribution of the particles to approximate the probability measure  and the iterates are updated via pushing the particles in directions specified the solution to    a variational problem.

Finally,  there exists a body of literature on general  
distributional   optimization  problems.   
\cite{gaivoronski1986linearization, molchanov2001variational,molchanov2002steepest, molchanov2004optimisation, pieper2019linear}
study the Frank-Wolfe and steepest descent algorithms on the space of distribution measures. 
The methods proposed in these works utilize functional gradients that might be inefficient to compute in machine learning problems.
Besides, motivated by the idea of representing probability measures using their moments \citep{lasserre2010moments},
for distributional optimization with a linear objective functional that is given by a polynomial function  on $\cX$,  
 \cite{lasserre2001global, lasserre2009moments, lasserre2008semidefinite, henrion2009approximate, jasour2018moment} 
propose convex relaxation methods through the sum-of-squares techniques \citep{parrilo2000structured,lasserre2010moments}.  
These approaches require solving large-scale semidefinite programming problems to the global optima and thus are computationally challenging. 
A more related work is \cite {chu2019probability}, which casts  various  machine learning algorithms  as functional gradient methods  for distributional optimization, where the gradient is not necessarily with respect to the Wasserstein distance. 
In comparison, 
we study  a similar distributional optimization framework 
that comprises these interesting machine learning problems. 
Moreover, 
utilizing the  Wasserstein gradient, 
we propose a novel algorithm that provably finds the global optimum with computational efficiency, which complements the results in \cite{chu2019probability}.

\vspace{5pt}

{\noindent \bf Notation.}  
Throughout this paper, we let $\RR^d$, $\ZZ^d$, $\TT^d$ denote the $d$-dimensional Euclidean space, integer lattice, and torus, respectively. 
For  $\cX$ being  a compact subset of $\RR^d$, let $\cP(\cX)$ and $\cP_2(\cX)$ denote the set of probability distributions over $\cX$ and the set of probability density functions on $\cX$ with finite second-order moments, respectively. 
Here the density functions are with respect to the Lebesgue measure on $\cX$. 
For any $p \in \cP_2(\cX)$, we identify the density function with the probability measure $p(x)~\ud x$ that is  induced by $p$.
For any $p, q\in \cP(\cX)$, let 
$\textrm{KL}(p , q)   $ denote the     Kullback-Leibler divergence	 between $p$ and $q$. 
For any mapping $T \colon \cX \rightarrow \cX$ and any $\mu \in \cP(\cX)$,  
let  $T _{\sharp} \mu $ denote the pushforward measure induced by $T$. 
Meanwhile, let 
 $\trace$, $\dvg$, and $\la \cdot , \cdot \ra$ denote the trace operator, divergence operator, and inner product on $\RR^d$, respectively. 
 For $f\colon \cX\rightarrow \RR$ being a differentiable function, we let $\nabla f$ and $\nabla^2 f$  denote the gradient and Hessian of $f$, respectively. 
 Furthermore, 
 let $\cF$ denote a class of functions on $\cX$ and let 
 $L \colon \cF\rightarrow \RR$ be a functional on $\cF$. 
 We use  
   $\mathcal{D}L, \mathcal{D}^2L$, and $ \mathcal{D}^3L $ to denote the first, second, and  third order   Fr\'echet derivatives of $L$  respectively. 
 Besides, throughout this work, we  let $\cH $ denote an RKHS defined on $\cX$,  and let $\la \cdot , \cdot \ra_{\cH}$ and $ \| \cdot \|_{\cH}$ denote the inner product on $\cH$ and the RKHS norm, respectively. 
Finally, for any Riemannian manifold $\cM$ with  a Riemannian metric $g$, we let $\cT_p \cM$ denote the tangent space at   point $p \in \cM$ and let   $\la \cdot , \cdot \ra_p$ denote the inner product on $\cT_p \cM$ induced by $g$.
For any functional  $F \colon \cM\rightarrow \RR$, we let $\grad F $ denote the functional gradient of $F$ with respect to the Riemannian metric $g$. 


\section{Background}
To study   optimization problems on the space of probability measures, we first introduce the  background knowledge of the Riemannian manifold and the Wasserstein space. In addition, to analyze the statistical estimation problem that arises  in estimating the Wasserstein gradient, we introduce the  reproducing kernel Hilbert space.
 
\subsection{Metric Space and Riemannian Manifold} \label{sec:riemann}
A metric space $(\cX, \| \cdot \|)$ consists of a set $\cX$ and a distance function $\|\cdot \|$ \citep{burago2001course}. Given a continuous curve $\gamma: [0, 1] \rightarrow \cX$, the length of $\gamma$ is defined as $L(\gamma) = \sup \sum_{i=1}^n \| \gamma(t_{i-1}) - \gamma(t_i) \|$, where the supremum is taken over $n \geq 1$ and  all partitions $0=t_0 < t_1  < \ldots < t_n = 1$ of $[0, 1]$. 
It then holds for any curve $\gamma$ that $L(\gamma) \geq \| \gamma(0) - \gamma(1)\|$. If there exists a constant $v \geq 0$ such that $\| \gamma(t_1) - \gamma(t_2) \| = v \cdot |t _1 - t_2|$ for any $t_1, t_2 \in [0,1]$, the curve $\gamma$ is called a geodesic.
 In this case, for any $0\leq t_1< t_2 \leq 1$, the length of $\gamma $ restricted to $[t_1, t_2]$ is equal to $\| \gamma(t_1)- \gamma(t_2) \| $.  Thus, a geodesic is   locally a distance minimizer everywhere. 
Moreover, $(\cX, \|\cdot \|)$ is called a geodesic space if  any two points $x, y\in \cX$ are connected by a geodesic $\gamma$ such that $\gamma(0) = x$ and $\gamma(1) = y$. 

A $d$-dimensional differential manifold $\cM$ is a topological space that is locally homeomorphic to the Euclidean space $\RR^d$ with a globally defined differential structure \citep{chern1999lectures}. 
A tangent vector  at $x\in \cM$ is  an equivalence class of differentiable curves going through $x$ with a prescribed velocity vector at $x$.
 The tangent space at $x$,  denoted by $\cT_x \cM$, consists of all tangent vectors at $x$.
To compare two tangent vectors in a meaningful way, we 
consider a Riemannian manifold   $(\cM, g)$, which is  a  smooth manifold equipped with an inner product $g_x$ on the tangent space $\cT_x\cM$ for any $x \in \cM$ \citep{carmo1992riemannian, petersen2006riemannian}. The inner product of any two tangent vectors $u_1, u_2 \in \cT_x\cM$ is defined as  $\la u_1, u_2 \ra_x = g_x(u_1, u_2)$.
Besides, we  call $g$ the Riemannian metric. On  a Riemannian manifold, the length of a smooth curve $\gamma \colon [0,1]\rightarrow \cM$ is defined as
\#\label{eq:length}
L(\gamma) = \int_0^1\sqrt{\la \gamma ' (t), \gamma'(t) \ra _{\gamma(t)}} ~\ud t.
\#
The distance between any two point $x, y \in \cM$ is defined as $\| x - y\| = \inf_{\gamma} L(\gamma)$, where the infimum  is taken over all smooth curves such that $\gamma (0) = x$ and $\gamma(1) = y$.
 Equipped with such a distance, $(\cM, g)$ is  a metric space and is  further a  
  geodesic space if it is compete and connected 
  \citep[Hopf–Rinow theorem]{burago2001course}. Hereafter, we  assume $(\cM, g)$  is  a geodesic space.

Another  important concept   is the exponential map, which specifies how to move a point along a tangent vector. Specifically,   
the 
exponential mapping  at $x$, denoted by  $\expm_x \colon \cT_x \cM  \rightarrow \cM$, sends any tangent vector  $u \in \cT_x \cM$ to $y = \gamma_u(1) \in \cM$, where    $\gamma_u: [0, 1] \to \cM$ is the unique geodesic determined by $ \gamma_u(0) = x$ and $\gamma_u' (0) = u$. 
Moreover, since $\gamma_u$ is the unique geodesic connecting $x$ and $y$, the exponential mapping is invertible and we have $u = \expm^{-1} _x(y)$. The distance between $x$ and $y$ satisfies $\|x -y \| = [ \la  \expm^{-1} _x(y), \expm^{-1} _x(y) \ra _x ]^{1/2}$, which is also called the geodesic distance.
Note that  tangent vectors of two different points lie in distinct tangent spaces. Thus, to compare these tangent vectors,  for any two points $x, y \in \cM$, 
we define the   parallel transport $ \Gamma _x^y \colon \cT_x\cM \rightarrow \cT_y \cM$, which
specifies   how  a tangent vector  of $x$ is  uniquely identified with an element in $\cT_{y}\cM$.  Moreover, parallel transport  perseveres the  inner product in the sense that $\la u, v \ra _x = \la  \Gamma _x^y u, \Gamma_x^y v \ra _{y} $ for any $u, v \in \cT_x \cM$.

On the Riemannian manifold $(\cM, g)$, for any $x \in \cM$ and any $v \in \cT_x \cM$, the directional derivative of a functional  $f\colon 
  \cM \to \RR$ is defined as 
  $\nabla _v f(x) = \frac{\ud}  {\ud t} f  [ \gamma (t) ] \big  \vert_{t= 0}
 , $ where $\gamma $ is a smooth curve satisfying  $\gamma (0) = x$ and $\gamma '(0) = v$. If there exists $u \in \cT_x \cM$ such that $\nabla _v f(x) = \la u, v \ra _x$ for any $v \in \cT_x \cM$, the functional $f$ is 
 differentiable at $x$ and the tangent vector $u$ is called the gradient of $f$ at $x$, denoted by $\grad f(x)$. 
 With the notions of the gradient and the geodesic distance $\| \cdot \|$, we are able to define convex functionals on $\cM$. Specifically,  a functional $f$ is called geodesically $\mu$-strongly convex if 
 \#\label{eq:define_sc}
 f(y) \geq f(x) + \bigl \la \grad f(x), \expm^{-1}_x(y)  \bigr \ra_x+ \mu /2 \cdot  \| x - y\|^2 ,
 \#
 and we say $f$ is a geodesically convex functional  when $\mu = 0$.
 
\subsection{Wasserstein Space}  \label{sec:wasserstein_space}

Let $\mathcal{P}(\cX)$ denote the set of all Borel probability measures on the   measurable space $(\cX, \mathcal{B}(\cX))$, where $\cX$ is a $d$-dimensional Riemannian manifold without boundary and $\cB(\cX)$ is the Borel $\sigma$-algebra on $\cX$. 
For instance, $\cX$ can be  a convex compact region in $\RR^d$  with zero flux or periodic boundary conditions.
For any $\mu, \nu \in \cP(\cX)$, we let $\Pi(\mu, \nu)$ denote the set of all  couplings of $\mu$ and $\nu$, i.e., $\Pi(\mu, \nu)$ consists of  all probability measures on $\cX \times \cX$ whose two marginal distributions are equal to $\mu$ and $\nu$, respectively. 
Kantorovich's formulation of the optimal transport problem aims to find a coupling 
$\pi$ of  
$\mu$ and $\nu$ such that 
$
   \int_{\cX\times \cX }  \| x - y \| ^2~ \ud   \pi(x,y)  
$  is minimized,
where $\| \cdot    \|$ is the geodesic distance  on $\cX$. 
It is shown that  
there exists a unique minimizer which is called the optimal transport plan 
\citep{villani2008optimal}.  Moreover, the minimal value 
\#\label{eq:w2_def1}
W_2(\mu, \nu)  =  \biggl [ \inf_{\pi \in \Pi(\mu, \nu)}  \int_{\cX\times \cX }  \| x - y \| ^2~ \ud   \pi(x,y)  \biggr ]^{1/2 } 
\#
defines a distance on $\cP(\cX)$, which is known as the  second-order Wasserstein distance. 

To study the optimization problem on the space of probability measures, in 
the following, 
we focus  on a 
 subset of probability distributions $\cP_2(\cX)$, which is defined as 
  \#\label{eq:prob_manifold}
 \cP_2(\cX) = \Big\{p \colon \cX \rightarrow [0, \infty) \colon \int_{\cX} \|x - x_0 \|^2 \cdot p(x) ~\ud x < \infty, \int_{\cX} p(x) ~\ud x = 1 \Big\},
 \#
 where $x_0$ is any fixed point in $\cX$ and we let $\ud x$ denote the Lebesgue measure on $\cX$. 
 Then $\cP_2(\cX)$ consists of probability density functions on $\cX$ with finite second-order moments. 
  It is known that $(\cP_2(\cX), W_2)$  is an infinite-dimensional geodesic space \citep{villani2008optimal}, which is called the Wasserstein space. 
  Specifically, any  curve on $\cP_2(\cX)$ can be written as $\rho \colon [0,1] \times\cX\rightarrow  [0, \infty )$, where $\rho(t,  \cdot)  \in \cP_2(\cX)$ for all $t\in [0,1]$.  A tangent vector at $p \in \cP_2(\cX)$ can be written as $\partial \rho  / \partial _t$ for some curve $\rho$  such that $\rho(0, \cdot ) = p(\cdot)$.   Besides, under certain regularity conditions, the elliptical equation  $- \dvg( \rho \cdot \nabla u) = \partial \rho / \partial t$
  admits a unique solution $u \colon \cX \rightarrow \RR$ \citep{denny2010unique, gilbarg2015elliptic}, where   $\dvg $ is the divergence operator on $\cX$. 
  Notice that $\cX$ is a $d$-dimentional Riemannian manifold. 
Here we let $\nabla u$ denote  the Riemannian gradient 
of function $u$, $\grad u$  to simplify the notation,  which is introduced in \S\ref{sec:riemann}.  
For any $x \in \cX$, $\nabla u(x) \in \cT_x \cX$, which can be identified as a vector in $\RR^d$.  
  Thus, the tangent vector $\partial \rho / \partial t$ is uniquely identified  with the vector-valued mapping $  \nabla u  $.
 Moreover, such a construction endows the infinite-dimensional manifold $\cP_2(\cX)$ with a Riemannian metric  \citep{otto2001geometry, villani2003topics}. Specifically, for any $s_1, s_2 \in \cT_p \cP_2(\cX)$, let $u _1, u_2 \colon \cX \rightarrow\RR$ be the solutions to elliptic equations $- \dvg( p \cdot \nabla u_1) = s_1$ and $- \dvg( p \cdot \nabla u_2) = s_2$, respectively.  The Riemannian metric, which is an inner product structure,  between $s_1$ and $s_2$ is defined  as 
 \#\label{eq:riemann_metric}
 \la s_1, s_2\ra _p =  \int _{\cX} \bigl \la  \nabla u_1 (x) , \nabla u_2(x) \bigr\ra_x \cdot p(x) ~\ud x ,
 \#
 where $\la  \nabla u_1 (x) , \nabla u_2(x) \ra_x$ is the  standard inner product in $\cT_x \cX$, 
 which is  homeomorphic to  the inner product of $\RR^d$. 
 Equipped with such a Riemannian metric,  $(\cP_2(\cX), W_2)$  is a geodesic space and the Wasserstein distance defined in \eqref{eq:w2_def1} can be written~as 
 \#\label{eq:w2_def2} 
 W_2 (\mu, \nu) = \biggl [ \inf _{\gamma \colon [0,1] \rightarrow \cP_2(\cX)  }  \int_0^1 \bigl \la \gamma '(t)  , \gamma '(t)     \bigr \ra _{\gamma(t)} ~\ud t \biggr ] ^{1/2}, 
  \#
  where the infimum is taken over all curves on $\cP_2(\cX)$ such that $\gamma(0) = \mu$ and $\gamma(1) = \nu$, and $\la \cdot   ,  \cdot  \ra _{\gamma(t)}$ in \eqref{eq:w2_def2} is the inner product on the Riemannian manifold defined in \eqref{eq:riemann_metric}. The infimum in \eqref{eq:w2_def2} is attained by the geodesic jointing $\mu$ and $\nu$. Here we slightly abuse the notation by letting $\mu$ and $\nu$ denote the probability measures as well as their densities. 
  In  geodesic space $(\cP_2(\cX), W_2)$, we similarly define the exponential mapping and the parallel transport.
   Furthermore, thanks to the Riemannian metric, we can define the gradient and convexity for functionals  on $\cP_2(\cX)$ in the same way as in \S \ref{sec:riemann} with the geodesic distance $\| \cdot \|$ in \eqref{eq:define_sc} being the Wasserstein distance $W_2 $. 
  In the sequel, for any functional $F \colon \cP_2(\cX) \rightarrow \RR$, we let  $\grad F$ denote the the gradient of $F$ with respect to $W_2$. 
  Such a Riemannian  gradient is also also known as  the Wasserstein gradient.

\subsection{Reproducing Kernel Hilbert Space} \label{bg:rkhs}

Another important tool we need in the following analysis is the reproducing kernel Hilbert space (RKHS). 
Let $(\mathcal{H}, \la \cdot,\cdot \ra_{\mathcal{H}} )$ be a Hilbert space, where $\cH$ is a function class on $\cX$ and  $\la \cdot,\cdot \ra_{\mathcal{H}} \colon \cH \times \cH \rightarrow  \RR $ is the inner product structure on $\cH$. This Hilbert space  is an RKHS if and only if   there exists a feature mapping $K \colon \cX \rightarrow \cH$  such that 
$
f(x)=\la f, K_x\ra_{\mathcal{H}}
$
for all $f\in \cH$ and all $x \in \cX$, where $K$ maps each point $x \in \cX$ to $K_x \in \cH$. Moreover, this enables us to define a  kernel function $K \colon \cX \times \cX \rightarrow \RR $  by letting $K(x, y) = \la K_x, K_y \ra_{\cH } $ for any $x, y \in \cX$ and the feature mapping can be written as $K_x = K(x, \cdot) $.
In addition, an equivalent definition of RKHS is as follows. A Hilbert space $(\mathcal{H}, \la \cdot,\cdot \ra_{\mathcal{H}})$ is an RKHS if and only if for any $x \in \cX$, there exists a constant $M_x >0$ such that for all $f\in \mathcal{H}$,
\$
|f(x)|\le M_x \cdot \| f\|_{\mathcal{H}}.
\$
This definition of RKHS does not require an explicit form of the kernel function $K(\cdot,\cdot)$ and is hence relatively easier to verify  in practice. 

Given an RKHS $\mathcal{H}$ with a bounded kernel $K(\cdot,\cdot)$, let us recall some basic results of the connection between $\mathcal{H}$ and $\cL_{\nu}^2(\cX)$, which is the space of square-integrable functions with respect to a measure $\nu \in \cP(\cX)$. In what follows, we assume that $\nu$ has a positive density function.
As shown in Theorem 4.27 in  \cite{steinwart2008support},   the embedding operator $\mathcal{I}: \mathcal{H}\to \cL_{\nu}^2(\cX)$ is well-defined, Hilbert-Schmidt,  bounded, and its operator norm satisfies
\# \label{kernel_op}
\|\mathcal{I}\|_{\text{op}}=\Bigl[  \int_{\cX} K(x,x)~\ud \nu(x) \Bigr]^{1/2}<\infty.
\# 
For the notational simplicity, we write $[f]_{\nu} \in \cL_{\nu}^2(\cX)$ as the image of any $f\in \cH$ under $\cI$.  
Moreover, the adjoint operator $\mathcal{S}=\mathcal{I}^*: \cL_{\nu}^2(\cX) \to \mathcal{H}$ satisfies that 
$ \cS f =  \int_{\cX}K_{x} \cdot f(x)\ud \nu(x)$ for any $f \in \cL_{\nu}^2(\cX)$. In other words, for any $x \in \cX$ and any $f \in \cL_{\nu}^2(\cX)$, we have 
\$
(\mathcal{S}f)(x) = \la \cS f , K_x \ra _{\cH } = \int_{\cX}\la K_x,K_{x'}\ra _{\cH}\cdot f(x')~\ud \nu(x') =\int_{\cX}K(x,x')\cdot f(x')~\ud \nu(x').
\$
We further define the composition operators   $\cC = \cS \circ \cI$ and $\cT = \cI \circ \cS$, 
which by definition are  self-adjoint and positive semi-definite integral operators on  $\mathcal{H}$ and $\cL_{\nu}^2 (\cX)$, respectively. 
Moreover, by definition, for any $f \in \cH$ and any $h \in \cL_{\nu} ^2(\cX)$, we have 
\#
\label{eq:define_TC}
\cC f = \int_{\cX } K_{x} \cdot [f]_{\nu} (x)~ \ud \nu(x), \qquad \cT h (x) =  \int_{\cX } K(x, x') \cdot h (x')~\ud \nu(x'). 
\#
The operator $\cT \colon \cL_{\nu}^2 (\cX) \rightarrow \cL_\nu^2 (\cX)$ is also known as the integral operator induced by kernel $K$. 
 By definition, for any $f, g \in \cH$, it holds that
\#\label{eq:inner_prod_relation}
\la \cC f , g\ra_{\cH} = \int_{\cX } \la K_{x}, g\ra _{\cH} \cdot  [f]_{\nu} (x) ~\ud \nu(x) =\bigl  \la  [f]_{\nu},  [g]_{\nu} \bigr \ra_{\nu} ,
\#
where $\la \cdot, \cdot \ra _{\nu}$ is the inner product of $\cL^2_\nu (\cX)$, and $[f]_\nu$ and $[g]_{\nu}$ are the images of $f$ and $g$ under $\cI$, respectively.
 Note that $\cI$ is injective, since $\nu$ has a positive density function.
By Mercer's Theorem \citep{steinwart2012mercer}, when the embedding $\cI$ is injective, the integral operator $\mathcal{T}$ has countable and positive eigenvalues $\{\mu_i\}_{i\ge 1}$ and corresponding eigenfunctions $\{ \psi_i\}_{i\ge 1}$, which form  an orthogonal system of $\cL_{\nu}^2(\cX)  $. Note that the embedding $\cI$ is injective when $\nu$ has a positive probability density function.
Moreover, the RKHS $\cH$ can be written as a subset of $\cL_{\nu}^2 (\cX)$ as  
\$
\cH = \biggl \{ f \in \cL_{\nu}^2 (\cX) \colon \sum_{i=1}^\infty    \frac{ \la f ,\psi_i \ra _{\nu} ^2 }{\mu_i } < \infty  \biggr \},
\$
which is equipped with an  RKHS inner product $\la \cdot, \cdot \ra_{\cH}$ defined by 
\$
\la f , g\ra _{\cH } = \sum_{i=1}^{\infty} 1/  \mu_i \cdot \la f ,\psi_i \ra _{\nu} \cdot \la g ,\psi_i \ra _{\nu}. 
\$
By such a construction, 
 the scaled eigenfunctions $\{\sqrt{\mu_i}\psi_i \}_{i\ge 1} $  form  an orthogonal system of the RKHS $ \mathcal{H}$ and the feature mapping $K_x \in \cH$ is given by 
 $
 K_x = \sum_{i=1}^\infty \mu_i\psi_i \cdot \psi_i (x)   
 $
 for any $x\in \cX$.
 Thus,  the integration operator $\mathcal{C}$ admits the following spectral decomposition 
\$
\mathcal{C} f =\int_{\cX } K_{x} \cdot f (x)~ \ud \nu(x) =  \sum_{i=1}^{\infty} \mu_i \la f, \psi_i \ra_{\nu }\cdot   \psi_i  = \sum_{i=1}^{\infty} \mu_i \la f , \sqrt{\mu_i } \psi_i \ra_{\cH  }\cdot  \sqrt{\mu_i } \psi_i , \qquad  \forall f\in \mathcal{H}.
\$
Finally, the spaces  $\mathcal{H}$ and $\cL_{\nu}^2(\cX)$  are unified via the notion of $\alpha$-power space, which is defined by
\#\label{eq:power_space}
\mathcal{H}^{\alpha}:= \Bigl\{\sum_{i=1}^{\infty} u_i \mu_i^{\alpha/2} \psi_i: \sum_{i=1}^{\infty} u_i^2<\infty \Bigr\} = \Bigl\{f \in \cL_{\nu}^2(\cX) : \sum_{i=1}^\infty    \frac{  \la f, \psi_i \ra_{\nu}  ^2 }{\mu_i^\alpha }  \Bigr\} , \qquad   \alpha\ge 0, 
\#
and is equipped with an inner product $
\la f , g \ra _{\cH^{\alpha }} = \sum_{i=1}^\infty \mu_i ^{-\alpha}\cdot  \la f, \psi_i \ra_{\nu} \cdot \la g, \psi_i \ra_{\nu} 
$.
By definition,  $\mathcal{H}^{\alpha}$ is a Hilbert space for any $\alpha \ge 0$.  
 Two special cases  are  $\alpha = 0$ and $\alpha = 1$, where  $\cH^{\alpha}$ corresponds to  $L_{\nu}^2(\cX)$ and   $\mathcal{H}$, respectively.  
 Intuitively, as $\alpha$ increases from zero,  the $\alpha$-power spaces specifies a cascade of more and more restricted function classes.




\section{The Variational  Transport Algorithm} \label{sec:algo}
 
Let $\cX$ be a compact $d$-dimensional  Riemannian manifold without boundary and let $(\cP_2(\cX), W_2)$ be the Wasserstein space   on $\cX$, where $\cP_2(\cX)\subseteq \cP(\cX)$ is defined in \eqref{eq:prob_manifold} and $W_2 $ is the Wasserstein distance defined in \eqref{eq:w2_def1}. 
Note that each element in $\cP_2(\cX)$ is a probability density over $\cX$. 
For intuitive understanding,   $\cX$ can be regarded as   a  compact subset of the  Euclidean space  $\RR^d$ with a  periodic boundary condition, e.g., the torus $\TT^d$. 
Besides, 
to differentiate functionals on $\cX$ and $\cP_2(\cX)$, we refer to functional on $\cX$ as functions hereafter.   

Let $F\colon \cP_2(\cX)\rightarrow \RR$ be a differentiable functional on $\cP_2(\cX)$ with a variational form, 
\#\label{eq:var_func} 
F(p) = \sup_{f \in \mathcal{F}}  \biggl \{ \int _{\cX} f(x) \cdot  p(x)  ~\ud x - F^*(f) \biggr\},
\#
where $\cF$  is a class of square-integrable functions on $ \cX$ with respect to the Lebesgue measure, and $F^* \colon \cF \rightarrow \RR $ is a strongly convex functional.
Such a variational representation 
generalizes convex conjugacy  
to $\cP_2(\cX)$. 
In the sequel, 
 we consider the following distributional optimization problem,
\#
\label{eq:var_opt}
	\minimize_{p \in \cP_2(\cX)}~ F(p)  
	\#
over the Wasserstein space $\cP_2(\cX)$. Such an optimization problem  
  incorporates many prominent examples  in statistics and machine learning, which are listed as follows.

\subsection{Examples of Distributional Optimization}\label{sec:examples of distributional}

\begin{example} \label{example_nonconvex} [Nonconvex Optimization]
	Let $g \in \cF $ be a possibly nonconvex function on $\cX$. 
	For the  problem  of minimizing $g$ over $\cX$,  we 
	consider the following variational reformulation known as the lifting trick,
\#\label{eq:nonconvex_func_opt}
\minimize_{x \in \cX } ~g(x) \qquad \Longrightarrow \qquad \minimize_{p \in \cP_2(\cX)}~ F_{g} (p) =  \int_{\cX} g(x) \cdot p(x)  ~\ud x,
\#
which bridges a $d$-dimensional possibly nonconvex optimization problem over $\cX$ and  an infinite-dimensional linear optimization problem over $\cP_2(\cX)$. 
Moreover, when the global minima of $g$ is a discrete set, 
for any global minimum $x^\star$ of $g$, 
the Dirac $\delta$-measure $\delta_{x^\star}$ at $x^\star$ is a global optimum of the distributional optimization problem in \eqref{eq:nonconvex_func_opt}.
To see that the linear functional $F_g$ admits a variational form as in \eqref{eq:var_func}, we define $F_g^* \colon \cF \rightarrow \RR $ by letting $F_g^*(g) = 0$ and $F_g^* (f) = +\infty$ for any $f \neq g$. 
Then it holds that 
\$  
F_g(p) = \sup_{f \in \cF } \biggl \{ 
	\int _{\cX} f(x) \cdot  p(x)  ~\ud x - F_g^*(f) \biggr\} = \int_{\cX} g(x) \cdot p(x)  ~\ud x,
\$
which implies that the lifted   problem in \eqref{eq:nonconvex_func_opt} is a degenerate example of the distributional optimization problem in \eqref{eq:var_func} and \eqref{eq:var_opt}. 

Moreover, 
 following the principle of maximum entropy \citep{GuiShe85,ShoJoh80}, we can further incorporate a Kullback-Leibler (KL) divergence regularizer into \eqref{eq:nonconvex_func_opt}. Specifically, letting $p_0 \in \cP_2(\cX)$ be a prior  distribution, for any $p \in \cP_2(\cX)$, 
$\textrm{KL}(p, p_0)$ can be written as  a variational form 
\citep{nguyen2010estimating} as  
\#\label{eq:KL_form}
\textrm{KL} (p, p_0) = \sup_{h\in \cF}  \Bigl \{  1 + \EE_{X\sim p}  [ h(X) ] - \EE_{X\sim p_0} \big \{ \exp[ h (X)   ]\big  \}     \Bigr \} ,
\#
where the optimal solution is given by $h^\star  (x) = \log [ p(x) / p_0(x)]$. 
Then we consider a 
regularized distributional optimization problem 
\#\label{eq:regularized_nonconvx_opt}
\minimize_{p \in \cP_2(\cX)} ~ F(p) &= \biggl \{ \int_{\cX} g(x) \cdot p(x) ~\ud x + \tau \cdot \mathrm{KL} (p, p_0)\bigg \}  \notag \\
&   = \sup_{f \in \cF}  \bigg\{  \int_{\cX} f(x)     \cdot p(x) ~\ud x - \int_{\cX} \tau \cdot \exp \bigl \{ [f(x)  - g (x) ] / \tau \bigr \}  \cdot p_0(x) ~\ud x + \tau    \biggr \}, 
\#
where in the second equality we utilize \eqref{eq:KL_form} and let $f = g+ \tau \cdot h$. 
Thus, \eqref{eq:regularized_nonconvx_opt} also falls in the framework of \eqref{eq:var_func} and  \eqref{eq:var_opt} with $F^*(f) = \tau \cdot  \EE_{X\sim p_0}( \exp\{ [ f(X)  - g  (X)   ] / \tau \}) - \tau $. 
  Furthermore, the Gibbs variational principle \citep{PetRagVer89} implies that the global optimum of \eqref{eq:regularized_nonconvx_opt} is given by 
\#\label{eq:gibbs_global_optim}
  p_\tau^\star(x)   \propto \exp\bigl[ -g(x)/\tau\bigr] \cdot p_0(x),\qquad  \forall x\in \cX.
\#
Assuming that $p_0$ is an uninformative and improper prior, the posterior  $p_\tau^\star$ in \eqref{eq:gibbs_global_optim}  is a Gibbs measure that concentrates to the global optima of $g$  as $\tau\rightarrow 0$.  
\end{example}

\begin{example}[Distributionally Robust Optimization (DRO)]
	Let $\Theta$ be a compact subset of $\RR^{\bar d}$ and let $\ell (\cdot; \cdot ) \colon \cX \times \Theta \rightarrow \RR$ be a bivariate function, where $\bar d$ is a fixed integer. 
	We consider the following DRO problem \citep{rahimian2019distributionally},
\#\label{eq:dro}
\min_{\theta\in \Theta }\max_{p \in \cM   }  \int_{\cX} \ell(x; \theta)\cdot p(x)~ \ud x,
\#
where 
$\cM \subseteq \cP_2(\cX)$ is called the ambiguity set.
One commonly used ambiguity set is the level set of KL divergence, i.e., 
$\cM : = \{p: \textrm{KL} (p, p_0) \leq \epsilon\}$ for some $\epsilon > 0$, where $p_0$ is a given probability measure, namely the nominal distribution. In the context of supervised learning, $\ell$ in \eqref{eq:dro} is the loss function and the the goal of DRO is to minimize the population risk, which is the expected loss  under an adversarial perturbation of the data generating distribution $p_0$ within the ambiguity set $\cM$.  
When $\cM$ is specified by the level set of KL divergence, for any fixed $\theta$,  using Lagrangian duality, we can transform the inner problem in \eqref{eq:dro} into   a KL divergence regularized distributional optimization problem as in \eqref{eq:regularized_nonconvx_opt} with $g $ is replaced by $\ell(\cdot ; \theta)$. 
As a result, the inner problem of DRO can be formulated as an instance of the distributional optimization problem given by \eqref{eq:var_func} and \eqref{eq:var_opt}. 
\end{example}

\begin{example}[Variational  Inference] In  Bayesian statistics \citep{gelman2013bayesian},  it is critical to estimate the posterior distribution  of the latent variable $Z\in \cZ$ given data $X$.  The density of  the  posterior distribution  is given by 
	\#\label{eq:posterior}
	p (z \given x) = \frac{ p(x\given z) \cdot p_{0} (z)} { \int_{\cZ} p(x\given z) \cdot p_{0} (z ) ~\ud z},
	\#
	where $x$ is the observed data,  $ p(x\given z) $ is the likelihood   function, and $ p_{0}  \in \cP(\cZ)$ is the prior distribution of the latent variable. 
	Here $p(x\given z)$ and $p_0$ are given in closed form. 
	In many statistical  models, the computation of the integral in \eqref{eq:posterior}  is intractable. 
	To circumvent such intractability, variational inference turns to minimize the KL divergence between a variational posterior $p$ and the true   posterior $p(z\given x)$ in   
	\eqref{eq:posterior} \citep{WaiJor08,BleKucMca17}, yielding the following distributional optimization problem,
 \#\label{eq:variational_inference_solution}
p^\star(z)  = \argmin_{p\in \cP_2(\cZ) }\mathrm{KL} \bigl[ p(z),  p(z\given x)\bigr ]  = \argmax_{p\in \cP_2(\cZ) } \biggl \{ -\int_\cZ \log\biggl[\frac{p(z)}{p(x\given z) \cdot p_0(z)}\biggr] \cdot p(z) ~ \ud z \biggr \} .
 \#
 By the variational form of KL divergence in \eqref{eq:KL_form}, 
 we observe that such a problem belongs to the framework given by \eqref{eq:var_func} and \eqref{eq:var_opt}.
 Meanwhile, the right-hand side of \eqref{eq:variational_inference_solution} is known as the evidence lower bound, which avoids the integral in \eqref{eq:posterior}. The common practice of variational inference is to parameterize $p$ by a finite-dimensional parameter and optimize such a parameter in \eqref{eq:variational_inference_solution}, which, however,  potentially leads to a  bias in approximating the true posterior $p(z\given x)$. As is shown subsequently, our proposed framework avoids such bias by representing $p$ with particles. 
	\end{example}

\begin{example}[Markov-Chain Monte-Carlo (MCMC)]\label{eg:mcmc} 
	Following the previous example,  consider
	sampling from the posterior distribution $p(z\given x)$ via MCMC. 
	One example is the  (discrete-time) Langevin MCMC algorithm \citep{GilRicSpi95,BroGelJonMen11, welling2011bayesian}, which constructs a Markov chain $\{ z_k\}_{k\geq 0}$  given by  
 \$ 
 z_{k+1} \leftarrow z_k - \gamma_k \cdot \nabla \log \bigl [ p(y\given z_k) \cdot p_{0} (z_k)\bigr] + \sqrt{2 \gamma_k} \cdot  \epsilon_k, 
 \$
 where $\gamma _k > 0$  is the stepsize and $\{ \epsilon_k\}_{k\geq 0}$ are independently sampled from $N(0, I_d)$. 
 Such a Markov chain   
 iteratively attains the limiting probability measure $p^\star  $ in \eqref{eq:variational_inference_solution} while  avoiding the intractable   integral in \eqref{eq:posterior}.  
 See, e.g., 	\cite{cheng2017underdamped, cheng2018convergence, xu2018global, durmus2019high} and the references therein  for the analysis of  the  Langevin MCMC algorithm. 
  Besides, it is shown that (discrete-time) Langevin MCMC can be viewed as (a discretization of) the Wasserstein gradient flow of $\textrm{KL} [ p(z),  p(z\given x))$ over $\cP_2(\cZ)$ \citep{jordan1998variational, wibisono2018sampling}. 
  In other words, posterior sampling with Langevin MCMC can be posed as a  distributional optimization method. 
  Furthermore,  in addition to the KL divergence, $F(p)$ in \eqref{eq:var_func} also incorporates other $f$-divergences  \citep{Csi67}. 
  For instance, 
   the  $\chi^2$-divergence between $p$ and $q$ can be  written as 
\$
\chi^2 (p, q) = \sup_{f\in \cF}\Bigl \{  2 \EE_{X\sim p} [ f(X) ] - \EE_{X\sim q} [ f^2 (X) ] - 1\Bigr \}.
\$
As a result,  when applied to posterior sampling, our proposed framework generalizes Langevin MCMC beyond the minimization of the KL divergence and further allow   more general objective functionals such as   $f$-divergences.
\end{example}

\begin{example}
	[Contextual Bandit] We consider the following interaction protocol between an agent and the environment. At the $t$-th round, the agent observes a context $c_t \in \cC$, which can be chosen adversarially. Then the agent takes an action $a_t \in \cA$ and receives the reward $r_t \in \RR$, which is a random variable with mean $R(c_t, a_t)$, where $R \colon \cC \times \cA\rightarrow \RR$ is known as the reward function.  Here $\cC$ and $\cA$ are the context and action spaces, respectively, and $R$ is unknown. The agent aims to maximize its expected total reward. To this end, at the $(t+1)$-th around, the agent estimates the reward function $R$ within a function class $\cF := \{f_z: z\in \cZ\}$ based on the observed evidence $\cE_t := \{(c_t, a_t, r_t)\}_{t'=1}^{t}$, where $z$ denotes the parameter of $f_z$. The two most common algorithms for contextual bandit, namely exponential weighting for exploration and exploitation with experts (Exp4) \citep{FarMeg05} and Thompson sampling (TS) \citep{AgrGoy12,AgrGoy13}, both maintain a probability measure over $\cF$, which is updated by 
\#\label{eq:w101}
 p_{t+1}(z) \leftarrow \argmin_{p \in \cP_2 (\cZ)} \biggl \{ \int_{\cZ} G_t(z) \cdot p(z) ~\ud z + \frac{1}{\gamma_t}  \cdot \textrm{KL} (p ,  p_t) \biggl \} \propto \exp\bigl[ -\gamma_t \cdot G_t(z)\bigr]  \cdot p_t(z),
\#
where $G_t$ is a function specified by the algorithm and $\gamma _t > 0$ 
 is the stepsize. 
 In particular, Exp4 sets $G_t(z)$ as  the estimated reward of the greedy policy with respect to $f_z$, while TS sets $G_t(z)$ to be the log-likelihood of the observed evidence $\cE_t$ so that $p_{t+1}(z)$ is the posterior. 
 We observe that the KL divergence regularized distributional optimization problem in \eqref{eq:w101} falls in our proposed framework. 
 Moreover, 
when $\cZ$ is complicated, existing Exp4-based algorithms mostly rely on the discretization of $\cZ$ to maintain $p(z)$ as a probability vector \citep{AueCesFis02}. 
In contrast, as we will see 
subsequently, our proposed algorithm  bypasses the discretization of $\cZ$ in Exp4, and meanwhile, eases the computational overhead of TS. 
\end{example}

\begin{example}[Policy Optimization] In reinforcement learning \citep{sutton2011reinforcement}, the model is commonly formulated as a Markov decision process $( \cS, \cA, R, P, \gamma )$ with a state space $\cS$, an action space $\cA$, a reward function $R$,   a  Markov transition kernel $P$, and a discount factor $\gamma \in (0,1)$. A  policy $\pi$ maps each state $s\in \cS$ to  a distribution $\pi(\cdot \given s)$  over the action space $\cA$. Any policy $\pi$ induces a visitation measure $\rho_{\pi}$ over $\cS \times \cA$, which is defined as 
	\#\label{eq:visitation}
	\rho_{\pi} (s,a) = \sum_{t \geq 0}^{\infty} \gamma ^t \cdot  \PP ( s_t= s, a_t = a), \qquad  \forall (s,a) \in \cS\times \cA.
	\#
	Here $s_t$ and $a_t$ are the $t$-th state and action, respectively, and  the trajectory $\{ s_t, a_t \}_{t \geq 0}$ is generated following policy $\pi$.  
	Let $\rho_0 \in \cP(\cS\times \cA)$ be a fixed distribution.
	We consider the 
	regularized policy optimization problem  \citep{ziebart2010modeling,schulman2017equivalence}
	\$
	\maximize_{\pi} \bigl \{   \EE_{(s,a) \sim \rho_{\pi} } [ R(s, a)] - \lambda \cdot  \textrm{KL}(  \rho_\pi, \rho_0 ) \bigr \} ,
	\$ where $\lambda > 0$ 
	is the regularization parameter  that balances 
	between 
	exploration and exploitation.   
	This problem can be viewed as a distributional optimization problem  with an extra constraint that the distribution of interest  is a visitation measure induced by a policy. 
\end{example}

\begin{example}[Generative Adversarial Network (GAN)] 
	The goal of GAN 	\citep{goodfellow2014generative} is to learn a generative model $p$ that is close to a target distribution $q$, where   $p$ is defined by transforming a low dimensional noise via a neural network. Since the objective in \eqref{eq:var_func} includes $f$-divergences as special cases, our distributional optimization problem incorporates the family of $f$-GANs 	\citep{nowozin2016f}  by letting the function class $\cF$ in \eqref{eq:var_func} be the class of discriminators. 
Besides, by letting $\cF$ be the family of $1$-Lipschitz continuous functions defined on $\cX$, 
the first-order Wasserstein distance between $p$ and $q$ can be written as 
\#\label{eq:wasserstein_gan}
W_1(p, q) = \sup_{f\in \cF} \bigl  \{    \EE_{X\sim p} [f(X)]    - \EE_{X \sim q} [ f(X) ] \bigr \}, 
\#
which is of the same variational form as in \eqref{eq:var_func}. Thus, the  distributional optimization  in \eqref{eq:var_opt} includes  Wasserstein GAN  \citep{arjovsky2017wasserstein} as a special case. 
  Moreover, 
  by letting $\cF$ in \eqref{eq:wasserstein_gan} be various function classes,
  we further recover other integral probability metrics   \citep{muller1997integral} between probability distributions, which induce  a variety of GANs \citep{mroueh2017fisher,li2017mmd,mroueh2018sobolev, uppal2019nonparametric}. 

\end{example}

 \subsection{Variational Transport} \label{sec:sub:algo}
In what follows, we 
 introduce the variational transport algorithm 
 for the distributional optimization problem specified in \eqref{eq:var_func} and \eqref{eq:var_opt}.  
 Recall that $\cX$ is a compact Riemannian manifold without a boundary and   $(\cP_2(\cX), W_2) $ defined in \eqref{eq:prob_manifold}
is a geodesic space equipped with a Riemannian metric. 
To simplify the presentation, in the sequel, we specialize to the case where $\cX$ is a compact subset of $\RR^d$ with a periodic boundary condition. 
 For instance, $\cX$ can be a torus $\TT ^d$, which can be viewed as the $d$-dimensional hypercube $[0,1)^d$ 
where the opposite $(d-1)$-cells are  identified,  
We specialize to  such a structure only for  rigorous  theoretical analysis, which also appears in other works involving the Wasserstein space \citep{graf2015fast}.
Our results can be readily generalized to a general $\cX$ with extra technical care. 
 We will provide a brief introduction of the Wasserstein space defined on $\TT^d$ in \S\ref{sec:torus}.

Moreover, note that the objective function $F(p)$ in \eqref{eq:var_func} admits a variational form as the  supremum over a function class $\cF$. Since  maximization over an unrestricted function class is computationally intractable,    for the variational problem in \eqref{eq:var_func} to be well-posed, in the sequel, we consider  $\cF$ to be a subset of the square-integrable functions   such that maximization over $\cF$ can be   solved numerically, e.g., 
 a class of deep neural networks or an RKHS. 

Motivated by the first-order methods in the Euclidean space, we propose a first-order iterative optimization algorithm over   $\cP_2(\cX)$. Specifically, suppose $p  \in \cP_2(\cX) $ is the current iterate of our algorithm, we  
  would like to update $p $ in the direction of the Wasserstein gradient $\grad F(p )$. Moreover, to ensure that $p $ lies in $\cP_2(\cX)$,  we update $p$ by moving in the direction of $\grad F(p )$ along the geodesic. 
 Thus, in the ideal case,   the Wasserstein gradient update   is given by 
\#\label{eq:grad_update}
p \leftarrow \expm_{p } \bigl [- \alpha   \cdot \grad F(p ) \bigr ], 
\#
where $\expm_{p }$ is the exponential mapping at $p $ and $\alpha > 0$ is the stepsize. 

To obtain an implementable algorithm,  it remains to characterize the Wasserstein  gradient $\grad F(p)$  and  the exponential mapping $\expm_p$  at any $p \in \cP_2(\cX)$.  The following proposition  establishes the connection between the Wasserstein  gradient
and functional derivative 
 of a  differentiable  functional on $\cP_2(\cX)$.

\begin{proposition}[Functional Gradient] \label{prop:func_grad}
	We assume $F: \cP_2(\cX) \to \RR$ is a differentiable functional with its functional derivative 
	with respect to the $\ell_2$-structure 
	denoted by $\delta F(p)/\delta p$. Then, it holds that  
\#\label{eq:riem_grad}
		\grad F(p) = -\dvg \bbr{p\cdot \nabla \bpa{\frac{\delta F}{\delta p}}},
\#
where $\dvg$ is the divergence operator on $\cX$.
Furthermore, for the functional $F $ defined in \eqref{eq:var_func}, 
we assume that  $F^*$ is strongly convex on $\cF$ 
and that   the supremum is attained at $f^*_p \in \mathcal{F}$ for a fixed $p\in \cP_2(\cX)$, i.e., $F(p) = \EE_{X \sim p}   [f_p^*(X)] - F^* (f_p^*)$. Then, we have  $  \delta F/ \delta p = f^*_p$ and $\grad F (p) = - \dvg ( p \cdot \nabla f^*_p )$. 
\end{proposition}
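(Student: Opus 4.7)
The plan is to establish both parts by first identifying the Wasserstein gradient from the Riemannian metric on $\cP_2(\cX)$, then applying an envelope-theorem argument to identify the $\ell_2$ functional derivative with the variational maximizer $f_p^*$.

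For the first claim, I would start from the defining property of the Riemannian gradient: $\grad F(p)$ is the unique element of $\cT_p \cP_2(\cX)$ such that $\bag{\grad F(p), s}_p = \mathcal{D}F(p)[s]$ for every tangent vector $s$. Any $s \in \cT_p \cP_2(\cX)$ can be written as $s = -\dvg(p \cdot \nabla u_s)$ for some potential $u_s$ via the elliptic equation of \S\ref{sec:wasserstein_space}. By the definition of the $\ell_2$-functional derivative, the directional derivative along a curve $\rho(t,\cdot)$ with $\rho(0,\cdot)=p$ and $\partial_t \rho|_{t=0} = s$ is
\#\label{eq:plan_dir}
\frac{\ud}{\ud t} F\bigl(\rho(t,\cdot)\bigr)\bigg|_{t=0} = \int_{\cX} \frac{\delta F}{\delta p}(x)\cdot s(x)~\ud x = -\int_{\cX} \frac{\delta F}{\delta p}(x)\cdot \dvg\bigl(p\cdot \nabla u_s\bigr)(x)~\ud x.
\#
Integrating by parts (the boundary term vanishes because $\cX$ has no boundary, e.g.\ $\cX=\TT^d$), I rewrite \eqref{eq:plan_dir} as $\int_{\cX}\bag{\nabla(\delta F/\delta p),\nabla u_s}\cdot p~\ud x$, which is exactly $\bag{-\dvg[p\cdot \nabla(\delta F/\delta p)],\, s}_p$ by the Riemannian metric formula \eqref{eq:riemann_metric}. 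Matching this against the defining identity and invoking the uniqueness of the gradient yields \eqref{eq:riem_grad}.

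For the second claim, I would identify $\delta F/\delta p$ with the variational maximizer $f_p^*$ via an envelope argument. Fix $p \in \cP_2(\cX)$ and a signed perturbation $h$ with $\int h = 0$ so that $p + \epsilon h \in \cP_2(\cX)$ for small $\epsilon$. Using $f_p^*$ as a feasible but possibly suboptimal test function at $p+\epsilon h$, I get the lower bound
\$
F(p+\epsilon h) \geq \int_{\cX} f_p^*(x)\cdot \bigl(p+\epsilon h\bigr)(x)~\ud x - F^*(f_p^*) = F(p) + \epsilon \int_{\cX} f_p^*(x)\cdot h(x)~\ud x.
\$
Reversing the roles, $F(p)\geq F(p+\epsilon h) - \epsilon\int f^*_{p+\epsilon h}\cdot h~\ud x$. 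Strong convexity of $F^*$ forces the maximizer to be unique and to depend continuously on $p$ (since a standard perturbation argument bounds $\|f_{p+\epsilon h}^* - f_p^*\|$ by $O(\epsilon)$), so $\int f^*_{p+\epsilon h}\cdot h~\ud x \to \int f_p^*\cdot h~\ud x$ as $\epsilon \to 0$. Sandwiching gives $\mathcal{D}F(p)[h] = \int f_p^*\cdot h~\ud x$, so $\delta F/\delta p = f_p^*$. Substituting into the formula from the first claim yields $\grad F(p) = -\dvg(p\cdot \nabla f_p^*)$.

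The main obstacle I anticipate is the envelope step: justifying that the maximizer $f_p^*$ varies continuously enough in $p$ to pass to the limit. Strong convexity of $F^*$ on $\cF$ is the essential ingredient: it guarantees uniqueness of $f_p^*$ and quantitative stability under perturbations of $p$, which is what makes the sandwich above tight. A secondary technicality is ensuring that admissible perturbations $h$ (mean-zero, with $p+\epsilon h$ remaining a density) span a dense enough class to identify $\delta F/\delta p$ pointwise a.e., and that the integration by parts in the first part is valid in a suitable weak sense; the compactness of $\cX$ and the no-boundary assumption make both points routine.
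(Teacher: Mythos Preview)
Your proposal is correct and follows essentially the same route as the paper's proof: integration by parts against the elliptic representation of tangent vectors to obtain \eqref{eq:riem_grad}, followed by the same envelope/sandwich argument with strong convexity of $F^*$ providing the $O(\epsilon)$ stability of the maximizer needed to pass to the limit. The only cosmetic difference is that the paper allows arbitrary square-integrable perturbations $\phi$ rather than restricting to mean-zero ones, and it writes out the stability bound explicitly as $\|f_\epsilon^* - f_p^*\|_{\ell_2} \le (\epsilon/\lambda)\|\phi\|_{\ell_2}$.
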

\begin{proof}
	See \S\ref{proof:prop:func_grad} for a detailed proof.
	\end{proof}


By Proposition \ref{prop:func_grad},
to obtain $\grad F(p)$, we need to first solve the variational problem in \eqref{eq:var_func} to obtain $f_p^* \in \cF$ and then compute the divergence in \eqref{eq:riem_grad}. However, even if we are given $f_p^*$, 
 \eqref{eq:riem_grad} further requires the 
 analytic form of density function $p$, 
 which is oftentimes challenging to obtain.  
Furthermore, 
in addition to the challenge of computing $\grad F(p)$, 
to implement  the Wasserstein gradient  update in \eqref{eq:grad_update}, we also need to characterize the exponential mapping $\expm_{p}$. 
 Fortunately, in the following proposition, we  show  that exponential mapping along   any tangent vector in $\cT_p \cP_2(\cX)$ is  equivalent to a pushforward mapping of $p$, which enables us to perform the Wasserstein gradient update via pushing particles.

\begin{proposition}[Pushing Particles as Exponential Mapping]
\label{prop:push_par}
For any $p \in \cP_2(\cX)$ and any $s \in \cT_p \cP_2(\cX)$, suppose the elliptic equation $-\dvg (p \cdot \nabla u)  = s$ admits a unique solution $u \colon \cX \rightarrow \RR$  that is twice continuously differentiable. We further assume that $\nabla u \colon \cX\rightarrow \RR^d$ is $H$-Lipschitz continuous. Then  for any $t \in [0, 1/ H )$,
 we have 
\#\label{eq:push_eq}
\bigl [\expm_{\cX} ( t\cdot \nabla u) \bigr ] _{\sharp} p = \expm_p(t \cdot s), 
\#
 where $\expm_{\cX}$ denotes the exponential mapping  on   $\cX$  and   $\expm_p$  is the exponential mapping on  $\cP_2(\cX)$ at  point  $p$.
 Moreover,  for any $x \in \cX$, the density function of $   [\expm_{\cX} ( t\cdot \nabla u)   ] _{\sharp} p$ at $x$ is given by   
 \$
\Bigl \{ \bigl [\expm_{\cX} ( t\cdot \nabla u) \bigr ] _{\sharp} p \Big\} (x) = p(y) \cdot \biggl  | \frac{\ud }{\ud y} \expm_{y}  [ t \cdot \nabla u(y)]   \biggr  | ^{-1} ,
 \$
 where   $ x = \expm_{y}  [ t \cdot \nabla u(y)]$ and $ | \frac{\ud }{\ud y} \expm_{y}  [  t \cdot \nabla u(y)]     | $ is the determinant of the Jacobian. 
\end{proposition}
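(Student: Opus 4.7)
The plan is to verify that the curve $\rho_t := [\expm_{\cX}(t \cdot \nabla u)]_{\sharp} p$ is the unique constant-speed Wasserstein geodesic starting at $p$ with initial tangent vector $s$; once this is done, the identity $\rho_t = \expm_p(t \cdot s)$ follows immediately from the definition of the exponential mapping on $(\cP_2(\cX), W_2)$ recalled in \S\ref{sec:wasserstein_space}, and the density formula follows from the classical change-of-variables rule for pushforwards by diffeomorphisms.

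First, setting $T_t(x) := \expm_{\cX}(t \cdot \nabla u(x))$, I would show that $T_t$ is a $C^1$-diffeomorphism of $\cX$ for every $t \in [0, 1/H)$. The $H$-Lipschitz assumption on $\nabla u$, together with the smoothness of the Riemannian exponential on the compact manifold $\cX$, implies that the Jacobian of $T_t$ is a perturbation of the identity whose singular values stay bounded below by $1 - tH > 0$; a standard monotonicity argument yields injectivity, while a degree argument on the compact manifold gives surjectivity. Consequently, the pushforward measure $\rho_t := (T_t)_\sharp p$ has a smooth density, given pointwise by the Jacobian formula stated in the proposition.

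Next, I would establish the continuity equation and match the initial velocity. Defining the Eulerian velocity field $v_t$ through the Lagrangian identity $v_t \circ T_t = (\ud / \ud t)\,T_t$, the pushforward formalism yields $\partial_t \rho_t + \dvg(\rho_t \cdot v_t) = 0$ distributionally. Since $(\ud/\ud t)\,T_t|_{t=0}(x) = \nabla u(x)$, evaluating at $t=0$ gives $\partial_t \rho_t|_{t=0} = -\dvg(p \cdot \nabla u) = s$, confirming that $\rho_t$ carries the prescribed initial velocity. To certify that $\rho_t$ is a constant-speed geodesic rather than a generic admissible curve, I would invoke the fact that each trajectory $\tau \mapsto \expm_{\cX}(\tau \cdot \nabla u(x))$ is a Riemannian geodesic on $\cX$ of constant speed $\|\nabla u(x)\|$; a change of variables then shows $\int_{\cX} \|v_t\|^2 \cdot \rho_t\,\ud y = \int_{\cX} \|\nabla u\|^2 \cdot p\,\ud x$, which is independent of $t$. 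Since $T_t$ remains a diffeomorphism on $[0, 1/H)$, no particle trajectories cross, and a Brenier--McCann-type optimality result on the compact Riemannian manifold $\cX$ identifies $T_t$ with the optimal transport map from $p$ to $\rho_t$, giving $W_2(p, \rho_t) = t \cdot \big(\int \|\nabla u\|^2 \cdot p\big)^{1/2}$. This matches the length of $\rho_{[0,t]}$, so $\rho_t$ is the unique minimizing Wasserstein geodesic from $p$ in direction $s$.

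The main obstacle is the geodesic certification: showing that $T_t$ is the genuinely optimal transport map from $p$ to $\rho_t$ for every $t \in [0, 1/H)$, rather than merely an admissible coupling. This requires invoking a Brenier/McCann theorem adapted to the Riemannian manifold $\cX$ and verifying that the potential $u$ generating the flow remains $c$-convex throughout the time interval; the sharpness of the threshold $1/H$ reflects precisely the regime in which this $c$-convexity is preserved and no Wasserstein shocks form. The remaining pieces—the continuity equation, the density formula, and matching initial data—are then standard consequences of the diffeomorphism property of $T_t$ established in the first step.
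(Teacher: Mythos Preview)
Your proposal is correct and follows essentially the same route as the paper: define the curve $\rho_t = [\expm_{\cX}(t\cdot\nabla u)]_\sharp p$, check that $\rho_0 = p$ and $\rho'_0 = s$, and then certify that $\rho_t$ is a Wasserstein geodesic by invoking the Brenier--McCann theorem to identify $T_t$ as the optimal map. The only cosmetic differences are that the paper verifies $\rho'_0 = s$ via a direct Taylor expansion of the pushforward density rather than the continuity equation, and establishes the geodesic property by explicitly computing $W_2[\rho_{t_1},\rho_{t_2}] = (t_2-t_1)/\bar t \cdot W_2[p,\rho_{\bar t}]$ (after writing $T_t$ as an affine interpolation $(1-t/\bar t)\,\id + (t/\bar t)\,\nabla\varphi_{\bar t}$) rather than through the constant-kinetic-energy argument you sketch; the paper also splits into the cases $\cX=\RR^d$ and $\cX$ periodic, whereas you work directly on the compact manifold. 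You correctly identify the only substantive step---showing $T_t$ is the \emph{optimal} transport map via $c$-convexity of $u$, which is exactly where the threshold $1/H$ enters.
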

\begin{proof}
	See \S\ref{proof:prop:push_par} for a detailed proof.
\end{proof}


Note that the elliptic equation in Proposition \ref{prop:push_par} holds in the weak sense.
Combining Propositions \ref{prop:func_grad} and \ref{prop:push_par}, if $\nabla f_p^*$ is $H$-Lipschitz,  it holds for any $t \in [0, 1/H)$ that
 \#\label{eq:push_result}
 \expm_p [ -t \cdot  \grad F(p)] =\bigl [  \expm_{\cX} \bigl (-t\cdot \nabla  f_p^*) \bigr ]_{\sharp} p, 
 \#
 which reduces to $(\id + t \cdot   \nabla  f_p^*)  _{\sharp} p$ when $\cX = \RR^d$. 
 Here $\id \colon \RR^d \rightarrow \RR^d$ is the identity mapping. 
 Thus, when we have access to $f_{p}^*$, \eqref{eq:push_result} enables us to perform the Wasserstein gradient update   approximately via  pushing particles. Specifically,  assume that probability measure 
 $p$ 
  can be approximated by  an empirical measure  $\tilde p$ of $N$ particles $\{ x_i \}_{i\in [N]}$, that is, 
 \#\label{eq:empirical}
  p \approx \tilde p =   \frac{1}{N} \sum_{i=1}^N \delta _{x_i},
  \#
  where $\delta_x$ is the  Dirac $\delta$-measure for $x\in \cX$. 
  For each $i\in [N]$, we define 
  $
  y_i = \expm_{x_i} \bigl  [ - t \cdot \nabla f_p^* (x_i)\bigr ] . 
  $
 Then  we have   
 $
  [  \expm_{\cX} \bigl (- t\cdot \nabla  f_p^*)   ]_{\sharp} \tilde  p = N^{-1} \sum_{i=1}^N \delta _{y_i} $  by direct computation. 
By \eqref{eq:push_result} and \eqref{eq:empirical},   $  \expm_p [ -t \cdot  \grad F(p)] $ can be approximated by  the empirical measure of $\{ y_i\}_{i\in [N]}$.

In addition, since $f_p^*$ is unknown, we estimate $f_p^*$ 
 via the following maximization problem, 
 \#\label{eq:dual_prob}
 \maximize_{f \in  \mathcal{F} } ~   \int_{\cX}  f(x) ~\ud \tilde p(x) - F^*(f)  =   \frac{1}{N}  \sum_{i=1}^N f(x_i) - F^* (f) ,
 \#
 where we replace the probability distribution $p$ in \eqref{eq:var_func} by the empirical measure  $\tilde p$ in \eqref{eq:empirical}. When $N$ is sufficiently large,  since $\tilde p$ is close to $p$, we expect that the solution to \eqref{eq:dual_prob}, denoted by $\tilde f_p^*\in   \cF $,   is close to  $f_p^*$. 
This enables us to  approximate the  Wasserstein  gradient update $ \expm_{p } \bigl [- \alpha   \cdot \grad F(p ) \bigr ]$  by $[ \expm_{\cX} ( - \alpha \cdot \nabla \tilde f_p^*)] _{\sharp} \tilde p$, which is the empirical measure of 
 $\{ \expm _{x_i}   [ - \alpha \cdot \nabla \tilde f_p^* (x_i)  ]  \}_{i\in [N]}$. Therefore, we obtain the  variational transport  algorithm for solving the  desired distributional optimization problem in \eqref{eq:var_opt}, whose details are presented in Algorithm \ref{algo:main}. 
 
 \begin{algorithm} [htbp]
 	\caption{ Variational Transport Algorithm  for Distributional Optimization}  
 	\label{algo:main} 
 	\begin{algorithmic}[1] 
 		\STATE{{\textbf{Input:} Functional $F \colon \cP_2(\cX) \rightarrow \RR $ defined in \eqref{eq:var_func},  initial point $p_0 \in \cP_2(\cX)$, number of particles $N$, number of iterations $K$, and stepsizes $\{ \alpha_k\}_{k=0}^K$.}}  
 		\STATE{Initialize the  $N$ particles  $\{ x_i \}_{i\in [N]}$  by drawing  $N$ i.i.d.  observations from $p_0$.}
 		\FOR{$k = 0, 1,2, \ldots, K$}
 		\STATE{Compute $\tilde f^*_k \in   \cF$ via \# 
 			\label{eq:def_tilde_fk}
 			\tilde {f}^*_k \leftarrow   \underset{f \in \mathcal{F} }{\text{argmax}}~\biggl \{ \frac{1}{N} \sum_{i=1}^{N  } f(x _ i) - F^*(f) \biggr \}.
 			\#} 
 		\STATE{Push the particles by letting $x_i\leftarrow  \expm _{x_i}   [ -  \alpha_k \cdot \nabla \tilde f_k^* (x_i)  ] $ for all $i\in [N]$. } \label{line:5}
 		\ENDFOR
 		\STATE{{\textbf{Output:}} The empirical measure $N^{-1} \sum_{i\in [N] } \delta_{x_i}$.}
 	\end{algorithmic}
 \end{algorithm}
 
In this algorithm, we  maintains $N$ particles $\{x_i\}_{i\in [N]}$ and output their empirical measure as the solution to \eqref{eq:var_opt}.
These particles are updated in 
  Line \ref{line:5} of  Algorithm \ref{algo:main}. In the $k$-th iteration, the update  is equivalent to updating the empirical measure $\tilde p  = N^{-1} \sum_{i\in [N] } \delta_{x_i} $ by the pushforward measure  $[ \expm_{\cX} ( - \alpha_k \cdot \nabla \tilde f_k^*)] _{\sharp} \tilde p$, which approximates the Wasserstein   gradient update in \eqref{eq:grad_update} with stepsize $\alpha_k$. 
  Here $\tilde f_k^*$ is obtained in \eqref{eq:def_tilde_fk} by solving an empirical maximization problem over a function class $\cF$, which can be  chosen to be an RKHS or the class of deep neural networks in practice. 
 After getting $\tilde f_k^*$, we push each particle $x_i$  along the direction   $ \nabla \tilde f_k(x_i)$ with stepsize $\alpha_k$, which  leads  to   $[ \expm_{\cX} ( - \alpha_k \cdot \nabla \tilde f_k^*)] _{\sharp} \tilde p$ and completes one iteration of the algorithm. 
 
To gain some intuition of variational transport,  as a concrete example, consider the regularized nonconvex optimization problem introduced Example \ref{example_nonconvex} with $\cX = \RR^d$.
 Let $g \colon \RR^d \rightarrow \RR$ be a differentiable function on $\RR^d$. 
 Consider the distributional optimization  where the objective functional is defined as 
 $$F(p) = \int _{\cX} g(x)\cdot p(x) ~\ud x + \tau \cdot \mathrm{Ent}(p)   = \int _{\cX} g(x)\cdot p(x) ~\ud x + \tau \cdot \int_{\cX} p(x) \cdot \log p (x) ~\ud x
 . $$ 
 Here $\tau > 0$ is the regularization parameter and $\textrm{Ent}$ is the entropy functional. 
 For such an objective  functional $F$, the Wasserstein gradient is given by 
 \$
 \grad F(p)   =   - \dvg \bigl [  p \cdot \big ( \nabla g + \tau \cdot \nabla \log  p   \bigr ) \bigr ] .
 \$  
Then, suppose the $N$ particles $\{ x_i\}_{i\in [N]}$ are sampled from  density $p$, variational transport proposes to push each particle $x_i$ by letting 
\#\label{eq:particle_gd_ent}
x_i \leftarrow x_i - \alpha_k \cdot \bigl [ 
 \nabla g(x_i) + \tau \cdot \nabla \log p(x_i) \bigr ],
\#
which can be viewed as a regularized gradient descent step for objective function $g$ at $x_i$. 
  When $\tau = 0$, the update in \eqref{eq:particle_gd_ent} reduces to a standard gradient descent step. However, due to the nonconvexity of $g$, gradient descent might be trapped in a stationary point of $g$. 
By  introducing the entropy regularization,   variational transport aims to find   the  Gibbs distribution 
$p_{\tau}^* (x) \propto \exp[ -g(x) / \tau ]$,  which is the global minimum of $F$ and is supported on the global minima of $g$ as $\tau$ goes to zero.

Meanwhile, we remark that our variational transport algorithm is deterministic in essence: the only randomness comes from the initialization of the particles. 
  Moreover, this algorithm   can be viewed 
  as   Wasserstein  gradient descent  with biased  gradient estimates. 
  Specifically,
  to characterize the bias, we define  
  a   sequence of
  transportation maps $\{ T_k \colon \cX \rightarrow \cX \}_{k=0}^{K+1}$   as follows, 
  \#\label{eq:transport_maps}
  T_0  = \id , \qquad T_{k+1} =     [ \expm_{\cX}( - \alpha_k \cdot \nabla \tilde f_k^* )] \circ T_{k}, ~~ \forall k \in \{0, \ldots, K\},
  \#  
  where $\tilde f_k^*$ is obtained in \eqref{eq:def_tilde_fk} of Algorithm \ref{algo:main}. 
  For each $k\geq 1$, 
  we define 
  $\hat p_k = (T_k)_{\sharp} p_0$. 
Note that the population version of the optimization problem in \eqref{eq:def_tilde_fk} of Algorithm \ref{algo:main} and its solution are given by   
  \begin{align}
  	\label{eq:def_hat_fk}
  	\widehat{f}^*_k \leftarrow   \underset{f \in   {\mathcal{F}}}{\text{argmax}}~ \biggl \{ \int_{\cX} f(x) \cdot  \hat{p}_k(x) ~ \ud x  - F^*(f)  \biggr \}. 
  \end{align}
  Thus,   $\{ \hat p_k\}_{k \geq 0}$ are the iterates constructed by the ideal version of Algorithm \ref{algo:main} where the number of particles goes to infinity. 
  For such an ideal sequence, at each $\hat p_k$, 
  by Proposition \ref{prop:func_grad},  the desired Wasserstein gradient direction is 
  $\grad F (\hat p_k )= - \dvg (\hat p_k \cdot \nabla \hat f_k^* )$. 
   However, with only  finite data, we can only obtain an estimator $\tilde f_k^*$ of $\hat f_k^*$ by solving \eqref{eq:def_tilde_fk}. Hence,  the estimated Wasserstein  gradient at $\hat p_k$  is $- \dvg ( \hat{p}_k\cdot  \nabla \tilde {f}^*_k ) $, whose error is denoted by 
  $\delta _k = -\dvg [ \hat p_k \cdot ( \nabla \tilde f_k^* - \nabla \hat f_k^* )]$. Note that $\delta_k$ is a tangent vector at $\hat p_k$, i.e., $\delta_k \in \cT_{\hat p_k } \cP_2(\cX)$.  Using the Riemmanian metric on $\cP_2(\cX)$, we define 
  \begin{align}
  	\label{eq:def_varepsilon_k}
  	\varepsilon_k = \bag{\delta_k, \delta_k}_{\hat{p}_k} = 
  	\int_{\cX} \bigl \|    \nabla \tilde {f}^*_k (x)  - \nabla \widehat{f}^*_k (x) \bigr  \|_2 ^2 \cdot \hat p_k(x) ~\ud x. 
  \end{align}
  Therefore, Algorithm \ref{algo:main} can be viewed as a biased  Wasserstein  gradient algorithm on $\cP_2(\cX)$, where the squared norm of the bias term in the $k$-th iteration is  $ \varepsilon$, which decays to zero as $N$ goes to infinity. The algorithm constructs explicitly a sequence of transportation maps $\{ T_k\}_{k=0}^{K+1}$ and implicitly a sequence of probability  distributions $\{ \hat p_k \}_{k=1}^{K+1}$,  which is  approximated by  a sequence of  empirical distributions of $N$ particles.

Furthermore, for computational efficiency, in  practical implementations of the variational transport algorithm, we can push the particles without solving  \eqref{eq:def_tilde_fk} completely. 
For example,  when we employ  a deep neural network  $f_{\theta} \colon \cX \rightarrow \RR$ to solve \eqref{eq:def_tilde_fk}, where $\theta$ is the network weights,   we can replace \eqref{eq:def_tilde_fk}  by updating $\theta$ with a few gradient ascent updates. 
  Then, the particle $x_i$ is updated by $ \expm _{x_i}   [ -  \alpha_k \cdot  \nabla  f_\theta (x_i)  ] $. 
  Moreover, when $N$ is very large, we can also randomly sample a subset  of the $N$ particles to   update $f_{\theta}$ via mini-batch stochastic gradient ascent.  


\section{Theoretical Results}

In this section, we provide theoretical analysis for the variational transport algorithm. 
For mathematical rigor, 
in the sequel, 
we let $\cF$ in  \eqref{eq:def_tilde_fk} 
be an RKHS $(\cH, \la \cdot , \cdot \ra _{\cH})$ defined on $\cX$ with a  reproducing kernel $K (\cdot, \cdot)$.
In this case, the 
  the distributional optimization problem  specified in \eqref{eq:var_func} and \eqref{eq:var_opt} can be written as 
  \#\label{eq:new_problem}
 \minimize_{p\in \cP_2(\cX)} F(p) , \qquad  F(p) = \sup_{f \in \cH }  \biggl \{ \int _{\cX} f(x) \cdot  p(x)  ~\ud x - F^*(f) \biggr\}.
 \#
 In \S\ref{sec:stat_thoery}, we probe  statistical error incurred in solving  the inner maximization problem in \eqref{eq:new_problem},
 which provides an upper bound for the 
 bias of Wasserstein gradient estimates constructed by variational transport. 
 Then  
  in \S\ref{sec:opt_theory}, we establish the rate of  convergence of   variational transport   for a class of objective functionals satisfying the Polyak-\L{}ojasiewicz (PL) condition \citep{polyak1963gradient} with respect to the Wasserstein distance.

 \subsection{Statistical Analysis} \label{sec:stat_thoery}
 To characterize the bias 
 of   the Wasserstein  gradient estimates, 
 in this subsection, we provide a  statistical analysis of the inner maximization problem in \eqref{eq:new_problem}. 
 Without loss of generality, 
it suffices to consider the following nonparametric estimation problem. 
Let 
 $\PP$ be a probability measure over $\cX$ and $f^*$ be the minimizer of the population risk function over the  RKHS    $\cH $, i.e., 
 \#\label{eq:population_prob_stat}
 f^*=\argmin_{f \in \cH }  L(f) , \qquad \text{where} \quad L(f)  =  - \EE _{X\sim \PP} \bigl [ f(X) \bigr ]     + F^*(f) .
 \#
 Our goal is to estimate $f^*$  in \eqref{eq:population_prob_stat} based on
 $n$ i.i.d. observations 
 $X_1,\cdots X_n$   sampled from   $\PP$.   
 A natural estimator of $f^*$ is the minimizer of the empirical risk with an  RKHS norm regularizer, which is defined as 
 \# \label{eq:estimator}
 f_{n,\lambda}=\argmin_{f \in \mathcal{H}}\bigl \{  L_n(f) +\lambda /2 \cdot \|f\|^2_{\mathcal{H}}\bigr\}, \qquad \text{where} \quad L_n(f)=- \frac{1}{n} \sum_{i=1}^n f(X_i)+F^*(f).
 \#
 Here $ \lambda>0 $ is the  regularization parameter and    $L_n$ is  known as  the  empirical risk function.

 The population problem defined in \eqref{eq:population_prob_stat} reduces to  \eqref{eq:def_hat_fk} when we set  the density of $\PP$  to be $\hat p_k$ and the observations $\{ X_i\}_{i\in [n]}$ to be the particles $\{x_i\}_{i\in [N]}$ sampled from  $\hat p_k$.  Then, the estimator $f_{n, \lambda}$ in \eqref{eq:estimator} is a regularized version of $\tilde f_k^*$ defined in \eqref{eq:def_tilde_fk}.  
 In the sequel,  we 
 upper bound  the statistical error   $f_{n, \lambda} - f^*$ in terms of  the RKHS norm, which is further used to obtain an upper bound of $\varepsilon_k$ defined in \eqref{eq:def_varepsilon_k}.

 Before presenting the theoretical results, we first introduce a few   technical assumptions.
Let $\nu$ denote the Lebesgue measure on $\cX$.
  As introduced in \S\ref{bg:rkhs}, $\cH$ can be viewed as a subset of $L_{\nu}$. The following assumption specifies  a regularity condition for   functional $F^*$ in \eqref{eq:population_prob_stat}.
  
 \begin{assumption} \label{assume:convexity}
 	We assume that the functional $F^* \colon \cH \rightarrow \RR$ is smooth enough such that $F^*$ is  Fr\'echet differentiable up to the third order. For $j \in \{ 1,2,3\}$, we denote by $\cD^j F^*$ the $j$-th order Fr\'echet derivative   of $F^*$ in the RKHS norm. 
 	In addition, we assume that there exists a constant $\kappa > 0$ such that,    for any $f, f'\in \cH$,  we have  
 	\#\label{eq:strongconvex}
 	F^* (f)- F^*(f') -  \bigl\la \cD F^*(f'), f-f' \bigr\ra_{\cH}  \ge\frac{\kappa }{2}\int_{\cX} \bigl|f (x)-f' (x) \bigr|^2 ~\ud x.
 	\#
  
 \end{assumption}

Assumption \ref{assume:convexity} is analogous to the strongly convexity condition in parametric statistic problems. 
It lower bounds the  Hessian of the population risk $L$ defined in  \eqref{eq:population_prob_stat}, which enables us to 
characterize the distance between any function and the global minimizer $f^*$ based on the  population risk.  It is worth noting that  the right-hand side of \eqref{eq:strongconvex} is an integral with respect to the Lebesgue measure. 
 
We introduce a regularity condition for the global minimizer of the population risk $L$. 
 \begin{assumption} \label{assume:high-order}
 	There exists a constant $\beta \in (1, 2)$ such that $f^* $ defined in \eqref{eq:population_prob_stat} belongs to   $\mathcal{H}^{\beta}$, which is  $\beta$-power space of order $\beta$ defined in \eqref{eq:power_space}. 
 \end{assumption}

Assumption \ref{assume:high-order} requires that  the global minimizer $f^*$  of the population risk  $L$    belongs to a higher order power space of the RKHS $\cH$. In other words,  $f^*$  has greater smoothness than   a general function in   $\cH$. This assumption is made solely  for technical reasons and is  adopted from \cite{fischer2017sobolev}, which is used to  control the bias caused by the RKHS norm regularization. 
 
Before we lay out our next assumption,  we define 
 \#\label{eq:def_hat_f_lam}
 \hat{f}_{\lambda}= f ^* - \lambda \cdot \bigl  (\mathcal{D}^2 F^* (f^*)+\lambda \cdot  \texttt{id}  \bigr  )^{-1} \cdot f^* =  \bigl[  \bigl (\mathcal{D}^2 F^* (f^*)+\lambda \cdot \texttt{id} \bigr )^{-1}\circ \mathcal{D}^2 F^* (f^*) \bigr] \cdot  f^*,
 \#
 which is the solution to the   linearized equation 
 \#\label{eq:linearized_eq}
 \cD L_{\lambda} (f^*)  +   \bigl [ \cD^2 L_{\lambda} (f ^* )  \bigr ] ( f - f^* )  = 0.
 \#
 Here   $L_{\lambda} (f) = L(f) + \lambda /2 \cdot \| f \|_{\cH}^2$ is the RKHS norm  regularized population risk, $\texttt{id} \colon \cH \rightarrow \cH $ is the identity mapping on $\cH$, and $\cD^2 F^* \colon \cH \rightarrow \cH$ is the second-order Fr\'echet derivative of $F^*$ with respect to the RKHS norm. 
 Note that we have  $\cD L_\lambda(f^* ) = \lambda \cdot f^* $ since $\cD L (f^* ) = 0. $  
 Besides, note that the left-hand side of \eqref{eq:linearized_eq} is the first-order expansion of $\cD L_{\lambda} (f)$ at $f^*$. Thus, $\hat f_{\lambda} $ defined in \eqref{eq:def_hat_f_lam} can be viewed as an approximation of the minimizer of $L_\lambda$. Since $L_\lambda$ is the population version of the objective in \eqref{eq:estimator}, it is reasonable to expect that $  f_{n, \lambda} $ is close to $\hat f _{\lambda}$ when the sample size is sufficiently large. It remains to characterize the difference between $\hat f_{\lambda} $ and $f^*$. 
 In the following assumption, we postulate that $\hat f_{\lambda }$ converges to  $f^*$ when the regularization parameter $\lambda$  goes to zero.  
 
 \begin{assumption} \label{assume:linearize} 
 	We define $\Theta_{\lambda, 3}  > 0 $ as 
 	\# \label{theta3}
 	\Theta_{\lambda,3}=\sup_{\phi,\psi \in \cH} \sup_{u,v\in \cH }\bigl\| (\cD^2 F^* (\phi)+\lambda \cdot \texttt{id} )^{-1}\circ \bigl( \cD^3 F^* (\psi) uv \bigr)\bigr\|_{\cH}\big / \bigl(\|u\|_{\cH}\cdot \|v\|_{\cH} \bigr).
 	\# 
 	Then, we assume that $\hat f_{\lambda} $ defined in \eqref{eq:def_hat_f_lam} satisfies  $\|\hat{f}_{\lambda}-f^* \|_{\cH}\to 0$ and $\Theta_{\lambda,3}\cdot \|\hat{f}_{\lambda}-f^* \|_{\cH} \to 0$ as $\lambda $ goes to zero.
 \end{assumption}
 Compared with Assumptions \ref{assume:convexity} and \ref{assume:high-order}, Assumption \ref{assume:linearize} is much more technical and obscure. As we will see in the  proof details, $\Theta_{\lambda,3} $ naturally appears  when bounding   the difference between $\hat f_\lambda$ and the minimizer of $L_\lambda$. 
 Roughly speaking, $\hat{f}_{\lambda}$ and $\Theta_{\lambda,3}$ together characterize the bias induced  by the   regularization term $\lambda / 2 \cdot \| f \|_{\cH}^2 $ in \eqref{eq:estimator}. Hence, intuitively, Assumption \ref{assume:linearize} states that the effect of regularization shrinks to zero as $\lambda$ goes to zero and the system is stable under small perturbations. 
 A similar assumption also appears in 
 \cite{cox1990asymptotic} for the analysis of regularized maximum likelihood estimation in RKHS. 

 In addition to the above assumptions,   we  impose the following regularity condition on the RKHS kernel $K$.

 \begin{assumption} \label{ass4}
 	Recall that the kernel function $K$ of $\cH$ satisfies $K(x,y) =\la  K(x, \cdot ), K(y, \cdot ) \ra  _{\cH }$ for any $x,y \in \cX$.
 
 	For any $i, j \in \{1, \ldots, d\}$, we denote by $\partial_j K(x, \cdot)$ and $\partial^2_{ij}K(x, \cdot )$ the derivatives $\partial K(x, \cdot) / \partial x_j$ and $ \partial^2K(x, \cdot)/(\partial x_i \partial x_j)$, respectively. 
 	We assume that there exists absolute  constants $C_{K,1}  $, $C_{K,2}$, and $ C_{K, 3}$  such that  
 	\$ \sup_{x\in \cX }   \bigl\| K(x, \cdot ) \bigr\|_{\cH}  &\leq C _{K,1},\\ 
 	\sup_{x\in \cX}  \bigl\| \partial _j K(x, \cdot ) \bigr\|_{\cH}  &\leq C_{K,2} , \qquad \forall j \in \{1, \ldots, d\},\\
 	\sup_{x\in \cX}  \bigl\| \partial^2 _{ij} K(x, \cdot ) \bigr\|_{\cH}  &\leq C_{K,3} , \qquad \forall i,j \in \{1, \ldots, d\}.
 	\$   
 \end{assumption}


 Assumption \ref{ass4}  characterizes the regularity of  $\cH$. Specifically, it postulates that the kernel $K$  and its derivatives are upper bounded, which is satisfied by many popular kernels, including the Gaussian RBF kernel and the Sobolev kernel \citep{smale2003estimating, rosasco2013nonparametric, yang2016model}.
Moreover, Assumption \ref{ass4}   implies that the  embedding mapping from RKHS $\cH$ to space $\cL^{\infty}_{\nu}(\cX)$ is continuous. Indeed, for any $f \in \cH$ and any $x\in \cX$, it holds that 
 \$ 
 f(x) = \bigl\la K(x, \cdot), f \bigr\ra_{\cH} \leq \bigl\| K(x, \cdot ) \bigr\|_{\cH} \cdot \| f \|_{\cH} =  \sqrt{K(x, x)} \cdot \| f \|_{\cH} \leq C_{K,1} \cdot \| f\|_{\cH},
 \$
 which further implies that $\| f \|_{\cL_{\nu}^{\infty}} \le C_{K,1} \cdot  \|f\|_{\cH}  $ for all $f \in \cH$.  
 
 
 With Assumptions \ref{assume:convexity}-\ref{ass4}, we are ready to present the following theorem, which characterizes the statistical error of estimating $f^*$ by $f_{n, \lambda}$. 
 \begin{theorem} \label{thm:stat}
 	Under Assumptions \ref{assume:convexity}-\ref{ass4}, we set  the regularization parameter  $\lambda$ in \eqref{eq:estimator} to be
  	\$
 	\lambda  =  \cO\bigl [ C_{K,1}^{1 - \alpha _{\beta} }  \cdot \kappa^{\alpha_\beta}   \cdot \| f^* \|_{\cH ^\beta} ^{\alpha_\beta - 1 } \cdot  (\log n / n )^{(1 - \alpha _{\beta}) /2 }    \bigr ],
 	\$
 	where $\alpha_{\beta} =  (\beta-1) / (\beta+1)$ and $\cO(\cdot)$ omits absolute constants that does not depend on $\kappa$,   $C_{K,1}$, or  $\| f^* \|_{\cH^\beta}$.  
 	Then, with probability at least  $1 - n^{-2}$, we have that
 	\# \label{eq:error_bound_stat}
 	\begin{split}
 		\| f_{n,\lambda}-f^*\|_{\cH} &= \cO\bigl [  C_{K,1}^{\alpha_\beta  }   \cdot \kappa ^{- \alpha_\beta   } \cdot \| f^* \|_{\cH^\beta} ^{ 1- \alpha_\beta }  \cdot  ( \log n / n  )^{ \alpha_\beta /2 }    \bigr ], \\
 		\int_{\cX} \bigl \| \nabla  f_{n, \lambda }(x)  - \nabla f^* (x) \bigr  \|_2^2  ~\ud \PP(x)  &= \cO \bigl [ C_{K,2}^2 \cdot d \cdot C_{K,1}^{2 \alpha_\beta  } \cdot \kappa ^{- 2 \alpha_\beta   } \cdot \| f^* \|_{\cH^\beta} ^{2-2 \alpha_\beta }   \cdot  ( \log n / n  )^{ \alpha_\beta }   \bigr ] .
 	\end{split}
 	\#
 \end{theorem}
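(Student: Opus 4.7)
The plan is to decompose the error through the linearized surrogate $\hat f_\lambda$ defined in \eqref{eq:def_hat_f_lam}, writing
\[
f_{n,\lambda} - f^* = \underbrace{(f_{n,\lambda} - \hat f_\lambda)}_{\text{stochastic term}} + \underbrace{(\hat f_\lambda - f^*)}_{\text{bias from regularization}},
\]
and to bound the two pieces separately in the RKHS norm. Assumption \ref{assume:convexity} will be used to identify $\cD^2 L(f^*) = \cD^2 F^*(f^*)$ as a positive self-adjoint operator dominating $\kappa \cdot \cC$, where $\cC$ is the covariance operator of Lebesgue measure on $\cX$ viewed through the RKHS (see \eqref{eq:define_TC}), so that $(\cD^2 F^*(f^*) + \lambda \cdot \texttt{id})^{-1}$ can be analyzed through the spectral decomposition of $\cC$. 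The final statistical rates will follow by optimizing $\lambda$ to balance bias and stochastic terms.

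For the bias term, I would plug $f^* \in \cH^\beta$ (Assumption \ref{assume:high-order}) and the identity $\hat f_\lambda - f^* = -\lambda \cdot (\cD^2 F^*(f^*) + \lambda \cdot \texttt{id})^{-1} f^*$ into the spectral representation. A source-condition calculation yields $\|\hat f_\lambda - f^*\|_{\cH} = \cO(\lambda^{(\beta-1)/2} \cdot \kappa^{-(\beta-1)/2} \cdot \|f^*\|_{\cH^\beta})$. For the stochastic term, I would use the first-order optimality condition $\cD L_n(f_{n,\lambda}) + \lambda \cdot f_{n,\lambda} = 0$, expand it around $\hat f_\lambda$ by Taylor's theorem, and rearrange to obtain
\[
(\cD^2 L_{\lambda}(\hat f_\lambda))(f_{n,\lambda} - \hat f_\lambda) = -[\cD L_n(\hat f_\lambda) - \cD L(\hat f_\lambda)] - (\text{third-order remainder}),
\]
using that $\cD L_\lambda(\hat f_\lambda)$ vanishes up to the linearization defining \eqref{eq:linearized_eq}. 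The leading noise term is the empirical process $-\frac{1}{n}\sum_{i=1}^n K(X_i,\cdot) + \EE K(X,\cdot)$; by Assumption \ref{ass4} its summands are uniformly bounded by $C_{K,1}$ in $\cH$, so a Bernstein-type concentration inequality in Hilbert space yields a rate of $\cO(C_{K,1} \cdot \sqrt{\log n / n})$ with probability $1-n^{-2}$. Inverting $\cD^2 L_\lambda(\hat f_\lambda)$ costs a factor of $1/\lambda$, and the third-order remainder is absorbed via $\Theta_{\lambda,3}$ from Assumption \ref{assume:linearize}. This gives $\|f_{n,\lambda} - \hat f_\lambda\|_{\cH} = \cO(C_{K,1} \cdot \sqrt{\log n /(n\lambda)})$. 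Choosing $\lambda \asymp C_{K,1}^{1-\alpha_\beta} \cdot \kappa^{\alpha_\beta} \cdot \|f^*\|_{\cH^\beta}^{\alpha_\beta - 1} \cdot (\log n / n)^{(1-\alpha_\beta)/2}$ balances the two rates and produces the first bound in \eqref{eq:error_bound_stat}. The gradient bound then follows because Assumption \ref{ass4} gives $|\partial_j f(x)| = |\langle \partial_j K(x,\cdot), f\rangle_{\cH}| \leq C_{K,2} \cdot \|f\|_{\cH}$, hence $\|\nabla f(x) - \nabla f^*(x)\|_2^2 \leq d \cdot C_{K,2}^2 \cdot \|f - f^*\|_{\cH}^2$, which after integrating against $\PP$ squares the first bound.

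The main obstacle is controlling the stochastic term rigorously: the equation for $f_{n,\lambda} - \hat f_\lambda$ is implicit because the Hessian should be evaluated at a point between $f_{n,\lambda}$ and $\hat f_\lambda$, so a self-bounding fixed-point argument is needed. Concretely, I would first establish a crude a priori bound on $\|f_{n,\lambda} - \hat f_\lambda\|_{\cH}$ on a good event (using only that $L_n$ is a small perturbation of $L$), then use Assumption \ref{assume:linearize} to show that the third-order remainder is dominated by $\Theta_{\lambda,3} \cdot \|f_{n,\lambda} - \hat f_\lambda\|_{\cH}^2$ and can be reabsorbed into the left-hand side, thereby upgrading the crude bound to the sharp one. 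Verifying that the perturbed Hessian $\cD^2 L_n$ remains invertible with the same order of lower spectral bound as $\cD^2 L$ on the relevant event is the most delicate piece; this again leverages the boundedness of the kernel derivatives in Assumption \ref{ass4} together with a Hilbert-space operator concentration inequality.
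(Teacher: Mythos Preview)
Your proposal would work and lands on the correct rates, but the paper takes a cleaner route by inserting a different intermediate point. You decompose through the linearized surrogate $\hat f_\lambda$ and push all the nonlinearity into the stochastic term $f_{n,\lambda}-\hat f_\lambda$, which then forces the Taylor expansion / self-bounding argument you describe. The paper instead decomposes through the \emph{population regularized minimizer} $f_\lambda = \argmin_{f}\{L(f)+\lambda/2\,\|f\|_\cH^2\}$: the bias $\|f_\lambda-f^*\|_\cH$ is handled via $\hat f_\lambda$ and a contraction-mapping argument using $\Theta_{\lambda,3}$ (this is where Assumption~\ref{assume:linearize} is spent), while the variance $\|f_{n,\lambda}-f_\lambda\|_\cH$ is bounded by a one-line strong-convexity argument. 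Since the regularizer alone makes $L_n+\lambda/2\,\|\cdot\|_\cH^2$ $\lambda$-strongly convex in the RKHS norm, optimality of $f_{n,\lambda}$ and feasibility of $f_\lambda$ give $\|f_{n,\lambda}-f_\lambda\|_\cH \le (2/\lambda)\,\|\cD L_n(f_\lambda)+\lambda f_\lambda\|_\cH$, and the right-hand side is exactly the empirical process $n^{-1}\sum_i K(X_i,\cdot)-\EE K(X,\cdot)$, handled by the Hilbert-space Bernstein inequality. No Taylor expansion, no operator inversion, no fixed-point bootstrap is needed for the stochastic piece.

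Two minor points on your version. First, your ``most delicate piece'' --- operator concentration for the perturbed Hessian $\cD^2 L_n$ --- evaporates: the data enter $L_n$ only through the linear term $-n^{-1}\sum_i f(X_i)$, so $\cD^2 L_n = \cD^2 F^* = \cD^2 L$ is deterministic and no operator concentration is required. Second, the stochastic bound you state, $\cO\bigl(C_{K,1}\sqrt{\log n/(n\lambda)}\bigr)$, is inconsistent with your own remark that inverting costs a factor $1/\lambda$; the correct order (and what the paper obtains) is $\cO\bigl(C_{K,1}\,\lambda^{-1}\sqrt{\log n/n}\bigr)$, and it is this rate that, when balanced against the bias $(\lambda/\kappa)^{(\beta-1)/2}\|f^*\|_{\cH^\beta}$, produces the choice of $\lambda$ in the theorem.
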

 \begin{proof}
 	See \S \ref{sec:proof_thm_stat} for a detailed proof. 
 \end{proof}
 
 As shown in Theorem \ref{thm:stat},
 when the regularization parameter $\lambda$ is properly chosen,  
 the statistical error measured by $\| f_{n, \lambda} - f^* \| _{\cH}$ converges to zero as the sample size $n$ goes to infinity with high probability. 
 Moreover, when regarding $\kappa$ and $\| f^*\|_{\cH^\beta}$ as constants, we have $\| f_{n, \lambda} - f^* \|_{\cH } = \tilde \cO(  n^{ - \alpha_{\beta}/ 2} )$ where $\tilde O(\cdot )$ omits absolute constants and logarithmic factors. 
 This implies that the statistical error converges to zero at a sublinear $n^{ - \alpha_{\beta}/ 2}$ rate. Here $\alpha_{\beta} $ increases with $\beta$, which quantifies the smoothness of $f^*$. 
 In other words, when the target function $f^*$ is smoother, we obtain a faster statistical rate of convergence.
  
 Furthermore, we remark that our statistical rate is with respect to the RKHS norm, which is  more challenging to handle than   $\| f_{n, \lambda } -f ^* \|_{\cL_{\PP}^2 } .$ 
 Under the regularity condition that  the  kernel satisfies Assumption \ref{ass4}, 
 the RKHS norm statistical rate further implies an upper bound on the estimation error of the gradient function $\nabla f^*$. 
 This result is critical for establishing the biases of the Wasserstein gradient estimates constructed by variational transport. 
 Finally,   as shown in \S\ref{sec:proof_thm_stat}, 
 by integrating the tail probabilities, 
 we can also establish in-expectation statistical rates that are similar to the 
  high-probability bounds given  in  \eqref{eq:error_bound_stat}. 
 

\subsection{Optimization Theory}
 \label{sec:opt_theory}
 In what follows, we establish the rate of convergence for the variational transport algorithm where  
  $\cF$ is set to be an  RKHS $( \cH, \la \cdot ,\cdot \ra_{\cH} ) $ with kernel $K (\cdot, \cdot)$.
  As introduced in \S\ref{sec:algo},   variational transport   creates a sequence of probability measures via  Wasserstein  gradient  descent with biased Wasserstein gradient estimates. 
  Specifically, starting from $\tilde p_0 = p_0$, we define 
  \#\label{eq:final_iters}
  \tilde p_{k+1} =\bigl [  \expm_{\cX} ( - \alpha_k \cdot  \nabla \tilde f_k^*  ) \bigr  ]_{\sharp} \tilde p_k, 
  \#
  where $p_0$ is the input of the algorithm and $\tilde f_k^*$ is defined similarly as in   \eqref{eq:def_tilde_fk}  with an RKHS norm regularization, 
  \#\label{eq:new_stat}
  \tilde {f}^*_k \leftarrow   \underset{f \in \cH }{\text{argmax}}~\biggl \{ \frac{1}{N} \sum_{i=1}^{N  } f(x _ i) - F^*(f) - \lambda \cdot \|  f \|_{\cH}^2  \biggr \} .
  \#
  Here    $\lambda> 0$ is the regularization parameter, and $\{ x_i \}_{i \in [N]}$ are i.i.d.~observations drawn from distribution $\tilde p_k$.

We remark that sampling  i.i.d.~observations and utilizing RKHS in   \eqref{eq:new_stat}  are both 
 artifacts adopted only for theoretical analysis. We present the details of such a modified algorithm in  Algorithm \ref{algo:main2} in \S\ref{sec:algos}. 
 Without these modifications, Algorithm \ref{algo:main2} reduces to the general method proposed in Algorithm \ref{algo:main},  a deterministic particle-based algorithm, which is more advisable for 
  practical implementations.

Such a  modified version of variational transport   can also be viewed as Wasserstein  gradient descent method for minimizing the functional $F$ in \eqref{eq:new_problem}. Here the bias incurred in the estimation of the Wasserstein gradient stems from the statistical error of  $\tilde f_k^*$. 
Specifically, by   Propositions \ref{prop:func_grad} and  Proposition \ref{prop:push_par}, at each $\tilde p_k$, the desired   Wasserstein  gradient descent    updates $\tilde p_k$ via 
\#\label{eq:F_gd_step}
\expm_{\tilde p_k } \bigl [  - \alpha_k \cdot  \grad F(\tilde p_k) \bigr ]   =  \bigl [  \expm_{\cX} (   - \alpha_k  \cdot  \nabla  f_k^*  ) \bigr  ]_{\sharp}    \tilde p_k,
\#
where  $f_k^*$ is the solution to the maximization problem in \eqref{eq:new_problem} with $p$ replaced by  $\tilde p_k$, and $\alpha_k$ is the stepsize. 
Comparing \eqref{eq:final_iters} with \eqref{eq:F_gd_step}, it is evident that 
the estimation error $\nabla \tilde {f}^*_k  (x)  -\nabla   f_k^*(x) $ contributes to the bias, which can be handled by the statistical analysis presented in \S\ref{sec:stat_thoery}. 

Our goal is to show that $\{ \tilde  p_k \}_{k\geq 0} $ defined in \eqref{eq:final_iters}  converges to the minimizer of $F$. We first present the a regularity condition on $F$.

\begin{assumption} \label{assume:objective}
We assume that   $F$ in \eqref{eq:new_problem} is gradient dominated in the sense that 
\#\label{eq:F_grad_dom}
\mu \cdot \bigl [ F (p) - {\textstyle \inf_{p \in \cP_2(\cX) } }F(p) \bigr  ]  \leq \la \grad F(p), \grad F(p) \ra _p 
\#
for any $p \in \cP_2(\cX)$, where $\mu>0$ is an absolute constant.
Here $\la \cdot ,\cdot \ra_p$ is the inner product on $\cT_p \cP_2(\cX)$, which is defined in \eqref{eq:riemann_metric}. 
In addition, we assume that there exist an absolute constant $L$ such that  $F$ is $L$-smooth  with respect to the Wasserstein distance. Specifically, for any $p$, $q$ in $\cP_2(\cX)$, we have 
\# 
	F(q) & \leq F(p) + \big \la  \grad F(p), \expm^{-1}_p(q) \big\ra_{p} + L/ 2 \cdot W_2^2(p, q),\label{eq:F_smooth}
	\#
where $\expm^{-1}_p (q) \in \cT_p \cP_2(\cX) $ is  the tangent vector  that specifies the geodesic connecting $p$ and $q$. 
\end{assumption} 
Assumption \ref{assume:objective} assumes that  $F$ admits the gradient  dominance and the smoothness  conditions. 
Specifically, \eqref{eq:F_grad_dom} and \eqref{eq:F_smooth} extends the classical  Polyak-\L{}ojasiewicz (PL) \citep{polyak1963gradient} and smoothness conditions    to distributional optimization, respectively. 
This  assumption holds for the KL divergence 
$F(\cdot) = \textrm{KL}(\cdot, \bar q)$ where $\bar q \in \cP_2 (\cX)$ is a fixed distribution. 
For such an objective functional, 
we have  $\grad F(p) = - \dvg [  p \cdot \nabla \log( p / q)   ] $ and \eqref{eq:F_grad_dom} reduces to 
\#\label{eq:lsi}
\mu \cdot \textrm{KL}  (p , \bar q) \leq \int _{\cX} \bigl \| \nabla \log [ p(x) / \bar q(x)] \bigr \|_2^2 \cdot p(x) ~\ud x = \EE_{p} \bigl [ \bigl \| \nabla \log (p / \bar q)\bigr \| _2^2 \bigr ]  = \mathrm{I} (p , \bar q),
\# 
where $\mathrm{I} (p , \bar q)$ is known as the relative Fisher information between $p$ and $\bar q$. 
Inequality \eqref{eq:lsi} corresponds to the 
  logarithmic Sobolev inequality \citep{gross1975logarithmic, otto2000generalization, cryan2019modified},
  which holds for   $\bar q$ being the density of a   large class of distributions, e.g., Gaussian and log-concave distributions.
   In addition, 
   the smoothness condition in \eqref{eq:F_smooth} is a natural extension  of that for Euclidean space  to  $\cP_2(\cX)$, 
   where the Euclidean distance is replaced by $W_2$. 

The following assumption characterizes the regularity of the solution to the inner optimization problem associated with the variational representation of the objective functional.
 \begin{assumption} \label{assume:f_star} 
 	For any $p \in \cP_2(\cX)$, let $f_p^* \in \cH$ denote  
 	the solution to the variational   problem in \eqref{eq:new_problem}  used to define $F$.
 	We assume that 
 there exists a constant $R > 0$ and $\beta \in (1,2 )$ such that  $f_p^*$ 
 belongs to $\{f \in \cH^\beta \given \|f\|_{\cH^\beta} \le R\}$ for all $p \in \cP_2(\cX)$,  where $\cH^ \beta$ is the $\beta$-power space.  
 \end{assumption} 
 
 This assumption postulates that the target function $f_p^*$ belongs to an RKHS norm ball of the $\beta$-power space $\cH^\beta$. 
 Such an assumption, similar to Assumption \ref{assume:high-order} in \S\ref{sec:stat_thoery},  ensures that the variational problem for defining $F$ is well-conditioned in the sense that its solution lies in the $\beta$-power space with uniformly bounded RKHS norm in $\cH^\beta$. 
 
 We are now ready to present our main result.
\begin{theorem}[Convergence of Variational Transport]
	\label{thm:comp}
	Let $\tilde p_k$ be the iterates defined in \eqref{eq:final_iters}.
	 Suppose that Assumptions \ref{assume:convexity}, \ref{assume:linearize}, \ref{ass4},  \ref{assume:objective}, and \ref{assume:f_star} hold, where constants $\mu$ and $L$ in Assumption \ref{assume:objective} satisfy  $0 < \mu \leq L$.
	In variational transport, we set   stepsize $\alpha_t$ to be  a constant $\alpha$, which satisfies 
	\#\label{eq:set_stepsize}
	\alpha \in \Bigl (0, ~\min \bigl \{ 1/(2L), 1/ H_0 \bigr \} \Bigr ],
	\#
	where
	we define $H_0 = 2d  \cdot C_{K, 3} \cdot \mu_{\max}^{(\beta - 1)/2} \cdot R$. 
	Here
	 $\mu_{\max} = \max_{i \geq 1}  \mu_i$ is the largest eigenvalue of the integral operator $\cT$ of the RKHS. 
	Moreover, 
	we set the number of particles $N$ to be sufficiently large such that $N > K$, where $K$ is the number of iterations. 
  Then, with probability at least $1 - N^{-1}$, it holds for any $k \le K$ that 
	\#\label{eq:convergence_rate}
	  F(\tilde p_k)     -   {\textstyle \inf_{p \in \cP_2(\cX) } }F(p)	   \leq \rho^k \cdot [ F(\tilde p_0)  - {\textstyle \inf_{p \in \cP_2(\cX) } }F(p)  ]  + (1-\rho)^{-1} \cdot {\rm Err}
	\# 
	where we define  
	$\rho = 1 - \alpha \cdot \mu /2  $ and 
	\begin{align}\label{eq:def-err}
	{\rm Err} =   \cO \Bigl [\alpha\cdot C_{K,2}^2 \cdot d \cdot C_{K,1}^{2 \alpha_\beta  } \cdot \kappa ^{- 2 \alpha_\beta   }\cdot R^{2-2 \alpha_\beta }   \cdot  ( \log N / N  )^{ \alpha_\beta }   \Bigr ].
	\end{align}
	Here $\alpha_\beta = (\beta - 1) / (\beta + 1)$ and $\cO(\cdot )$ omits absolute constants.

\end{theorem}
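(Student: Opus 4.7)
The plan is to mimic the standard Euclidean proof of linear convergence for gradient descent under the smoothness and PL conditions, but on the Wasserstein manifold, using the explicit pushforward realization of the exponential mapping from Proposition \ref{prop:push_par} and the RKHS statistical rate from Theorem \ref{thm:stat}. The starting point is the Wasserstein smoothness inequality in Assumption \ref{assume:objective}: applied to $p=\tilde p_k$ and $q=\tilde p_{k+1}$ it yields
\#\label{eq:plan-smooth}
F(\tilde p_{k+1}) \le F(\tilde p_k) + \bigl\la \grad F(\tilde p_k), \expm_{\tilde p_k}^{-1}(\tilde p_{k+1}) \bigr\ra_{\tilde p_k} + \tfrac{L}{2} \cdot W_2^2(\tilde p_k, \tilde p_{k+1}).
\#
By construction \eqref{eq:final_iters} combined with Proposition \ref{prop:push_par}, the iterate $\tilde p_{k+1}$ is exactly $\expm_{\tilde p_k}(-\alpha\cdot \tilde\delta_k)$, where $\tilde\delta_k = -\dvg(\tilde p_k\cdot \nabla \tilde f_k^*) \in \cT_{\tilde p_k}\cP_2(\cX)$ is the \emph{estimated} Wasserstein gradient; meanwhile, by Proposition \ref{prop:func_grad}, the true Wasserstein gradient is $\grad F(\tilde p_k) = -\dvg(\tilde p_k\cdot \nabla f_k^*)$, where $f_k^* = f_{\tilde p_k}^*$. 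Hence $\expm_{\tilde p_k}^{-1}(\tilde p_{k+1}) = -\alpha\cdot \tilde\delta_k$ and $W_2(\tilde p_k,\tilde p_{k+1}) = \alpha\cdot \|\tilde\delta_k\|_{\tilde p_k}$, so \eqref{eq:plan-smooth} becomes a one-step descent inequality written purely in terms of the true and estimated gradients.

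The next step is a routine error decomposition. Writing $\tilde\delta_k = \grad F(\tilde p_k) + e_k$ with $e_k = -\dvg[\tilde p_k\cdot(\nabla \tilde f_k^* - \nabla f_k^*)]$, expanding the inner product and squared norm, and using the stepsize choice $\alpha\le 1/(2L)$, one obtains
\$
F(\tilde p_{k+1}) - F(\tilde p_k) \le -\tfrac{\alpha}{2}\bag{\grad F(\tilde p_k),\grad F(\tilde p_k)}_{\tilde p_k} + c\cdot \alpha\cdot \la e_k,e_k\ra_{\tilde p_k}
\$
for some absolute constant $c$, after absorbing cross terms by Young's inequality. The PL condition in Assumption \ref{assume:objective} then upgrades $\|\grad F(\tilde p_k)\|^2$ to $\mu\cdot[F(\tilde p_k)-\inf F]$, yielding the one-step contraction
\$
F(\tilde p_{k+1}) - {\textstyle\inf} F \le \rho\cdot \bigl[F(\tilde p_k)-{\textstyle\inf} F\bigr] + c\cdot \alpha\cdot \la e_k,e_k\ra_{\tilde p_k},
\$
with $\rho = 1 - \alpha\mu/2$, which iterates into the geometric sum \eqref{eq:convergence_rate}.

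It remains to bound $\la e_k,e_k\ra_{\tilde p_k} = \int_{\cX}\|\nabla \tilde f_k^*(x)-\nabla f_k^*(x)\|^2\tilde p_k(x)\,\ud x$ by the statistical error on the right-hand side of \eqref{eq:def-err}. Since the $N$ particles defining $\tilde f_k^*$ are i.i.d.\ draws from $\tilde p_k$, Theorem \ref{thm:stat} (applied with $\PP = \tilde p_k$ and $f^* = f_k^*$, noting Assumption \ref{assume:f_star} gives $\|f_k^*\|_{\cH^\beta}\le R$ uniformly in $k$) provides exactly this gradient-$L^2$ bound with probability at least $1-N^{-2}$ per iteration; a union bound over $k\le K\le N$ gives the stated failure probability $N^{-1}$. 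To close the loop I must also justify that the hypotheses of Proposition \ref{prop:push_par} hold along the trajectory, i.e.\ that $\nabla \tilde f_k^*$ is $H$-Lipschitz with $\alpha < 1/H$; this is where the constant $H_0 = 2d\cdot C_{K,3}\cdot \mu_{\max}^{(\beta-1)/2}\cdot R$ enters, obtained by differentiating the reproducing identity $\tilde f_k^*(x)=\la \tilde f_k^*,K(x,\cdot)\ra_\cH$ twice, using Assumption \ref{ass4} to control $\|\partial^2_{ij}K(x,\cdot)\|_\cH$, and using the power-space embedding $\|\tilde f_k^*\|_\cH\le \mu_{\max}^{(\beta-1)/2}\|\tilde f_k^*\|_{\cH^\beta}\le \mu_{\max}^{(\beta-1)/2}R$.

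\textbf{Main obstacle.} The delicate part is not the convergence recursion itself, which is essentially an exact Riemannian analogue of the Euclidean argument, but rather the uniform-in-$k$ control required to feed Theorem \ref{thm:stat} into every iteration. Two issues coexist: first, the Lipschitz bound $H_0$ on $\nabla \tilde f_k^*$ must hold with high probability for the pushforward identity to apply, and since $\tilde f_k^*$ is a data-dependent empirical minimizer in $\cH$, one needs an a priori bound on $\|\tilde f_k^*\|_\cH$ that survives the randomness and passes through the $\cH$-to-$\cH^\beta$ embedding uniformly over $k$; second, applying Theorem \ref{thm:stat} at iteration $k$ requires the particles to be i.i.d.\ from $\tilde p_k$, which is precisely why the algorithm analyzed here is the idealized version (Algorithm \ref{algo:main2}) that resamples $N$ fresh particles at every step. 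Handling the coupling between iterations via the union bound $K\le N$ and carefully verifying the regularity assumption for the variational problem at each (random) $\tilde p_k$ constitute the core technical burden of the argument.
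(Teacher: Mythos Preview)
Your plan is essentially the paper's proof: smoothness descent lemma on the Wasserstein manifold, error decomposition $\tilde\delta_k=\grad F(\tilde p_k)+e_k$, Young's inequality to isolate $\la e_k,e_k\ra_{\tilde p_k}$, PL to convert the squared gradient into suboptimality, telescope, then feed in Theorem \ref{thm:stat} per iteration with a union bound over $K\le N$ steps. The identification of $\expm_{\tilde p_k}^{-1}(\tilde p_{k+1})=-\alpha\tilde\delta_k$ via Proposition \ref{prop:push_par} and the use of Assumption \ref{ass4} to pass from $\|\tilde f_k^*\|_\cH$ to a Hessian bound are also exactly what the paper does.

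There is one concrete slip. You write $\|\tilde f_k^*\|_\cH\le \mu_{\max}^{(\beta-1)/2}\|\tilde f_k^*\|_{\cH^\beta}\le \mu_{\max}^{(\beta-1)/2}R$, but Assumption \ref{assume:f_star} only places the \emph{population} maximizer $f_k^*=f_{\tilde p_k}^*$ in the $\cH^\beta$-ball of radius $R$; the empirical estimator $\tilde f_k^*$ is obtained by minimizing over $\cH$ with an $\|\cdot\|_\cH$-penalty and need not lie in $\cH^\beta$ at all, let alone with norm $\le R$. The paper fixes this by the triangle inequality
\[
\|\tilde f_k^*\|_\cH \le \|\tilde f_k^*-f_k^*\|_\cH + \|f_k^*\|_\cH,
\]
bounding the first term on the event $\cE_k$ by Theorem \ref{thm:stat} and the second via the power-space embedding applied to $f_k^*$; for $N$ large enough the first term is dominated by the second, giving $\|\tilde f_k^*\|_\cH\le 2\mu_{\max}^{(\beta-1)/2}R$ on $\cE_k$, which is the origin of the factor $2$ in $H_0$. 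You already flagged the need for a random-but-uniform bound on $\|\tilde f_k^*\|_\cH$ in your obstacle paragraph; this is precisely how it is obtained.
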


\begin{proof}
See \S\ref{sec:proof:convergence} for a detailed proof.
\end{proof} 
Theorem \ref{thm:comp} proves that, with high probability,  variational transport converges linearly to the global minimum of the objective functional $F$ 
up to an error   term ${\rm Err}$ defined in \eqref{eq:def-err}. 
In particular, \eqref{eq:convergence_rate} shows that the suboptimality of $\tilde p_k$ is bounded by the sum of two terms, which correspond  to the computational and statistical errors, respectively. 
Specifically, 
the first term  converges to zero at a linear rate as $k$ increases, which characterizes the convergence rate of the population version of Wasserstein gradient descent. Meanwhile, the second term  is independent of $k$ and  characterizes the statistical error incurred in estimating the Wasserstein gradient direction using $N$ particles. 
When $N$ and $k$ are sufficiently large, the right-hand size on  \eqref{eq:convergence_rate} is  dominated by the statistical error $(1- \rho)^{-1} \cdot {\rm Err}$, which decays to zero as $N$ goes to infinity. 
In other words, as the number of particles and the number of iterations both go to infinity, variational transport finds the global minimum   of $F$. 

Furthermore, the constant  stepsize $\alpha$ in \eqref{eq:set_stepsize} is determined by  (i) the smoothness parameter $L$ of the objective functional and (ii)   $H_0 =2d  \cdot C_{K, 3} \cdot \mu_{\max}^{(\beta - 1)/2} \cdot R$. 
Specifically, the requirement that $\alpha < 1/ (2L)$ is standard in the optimization literature, which guarantees that each step of Wasserstein gradient descent  decreases the objective functional sufficiently. 
Whereas  $H_0$ serves as an upper bound on the Lipschitz constant of $\nabla \tilde f_k^*$, 
which also enforces an upper bound on the stepsize due to  Proposition \ref{prop:push_par}. 
Under the assumptions made in Theorem \ref{thm:comp}, both $L$ and $H_0$ are regarded as constants. 
As a result, with a constant stepsize $\alpha$, $\rho$ is a constant in $(0, 1)$, which further implies that  variational transport converges linearly up to a certain statistical error. 

Finally,  neglecting constants such as $d$, $\{C_{k,i}\}_{i \in [3]}$, $\kappa$, $R$, $\mu$, and $L$,  when both $N$  and $k$ are sufficiently large, variational transport attains an $\tilde \cO(N^{-\alpha_{\beta} }  )$ error, which reflects the statistical error incurred in solving the inner variational problem in \eqref{eq:new_problem}. 
Here $\tilde \cO(\cdot )$ hides absolute constants and logarithmic terms. 
The parameter $\alpha_\beta$ increases with  $\beta$. 
That is, when the target functions under  Assumption \ref{assume:f_star} belong  to a   smoother RKHS class, variational transport attains a smaller statistical error.

\section{Conclusion}

We study the distributional optimization problem where the objective functional admits a variational form. 
For such a problem, we propose a particle-based algorithm, dubbed as variational  transport, 
which approximately performs   Wasserstein gradient descent  
using  a set of particles. 
Specifically, in each iteration of variational transport, 
we first solve the inner variational problem associated with the objective functional using the particles. 
The solution to such a problem yields the Wasserstein gradient direction approximately. 
Then we update the current distribution by pushing each particle along the direction specified by the solution to the inner variational problem. 
Furthermore, to analyze the convergence rate of variational transport, 
we  consider the setting where the objective functional satisfies a functional version of the of the  Polyak-\L{}ojasiewicz (PL) \citep{polyak1963gradient} and smoothness conditions,    
and the inner variational problem is solved within an RKHS. 
For such an instantiation, under proper assumptions, 
we prove that variational transport constructs a sequence of probability distributions that converges linearly to the global minimizer of the objective functional up  to a  statistical error due to estimating the Wasserstein gradient with finite particles. Moreover, such a statistical error converges to zero as the number of particles goes to infinity.

\section{Acknowledgement}
Zhaoran Wang acknowledges National Science Foundation (Awards 2048075, 2008827, 2015568, 1934931), Simons Institute (Theory of Reinforcement Learning), Amazon, J.P. Morgan, and Two Sigma for their supports.
 
\bibliographystyle{ims}
\bibliography{particle}
\clearpage

\appendix{}

\section{Variational Transport  with Random Sampling}\label{sec:algos}

In this section, we present a version of the variational  transport algorithm where we sample a batch of $N$ i.i.d. particles in each iteration. 
Specifically, this version of variational transport maintains a sequence of transportation plans $\{ T_k\}_{k\geq 0}$ such that $\tilde p_k = (T_k)_{\sharp} p_0$ for all $k \geq 1$, where $\tilde p_k$ is given in \eqref{eq:final_iters}.
Then, in the $k$-th iteration, by drawing $N$ i.i.d. observations from $p_0$ and applying transformation $T_k$ to these observations, we obtain $N$ i.i.d. observations from $\tilde p_k$. 
This enables us to establish the statistical error incurred in solving the inner variational problem in \eqref{eq_def_of_widetilde_f^*_k}, which further yields the 
an upper bound on the 
estimation error of the Wasserstein gradient. 
We remark that Algorithm \ref{algo:main2} is considered only for the sake of theoretical analysis; random sampling is unnecessary in practical implementation.   
 

\begin{algorithm} [htbp]
	\caption{Variational Transport Algorithm with Random Sampling}  
	\label{algo:main2} 
	\begin{algorithmic}[1] 
		\STATE{{\textbf{Input:} Functional $F \colon \cP_2(\cX) \rightarrow \RR $ defined in \eqref{eq:var_func},  initial point $p_0 \in \cP_2(\cX)$, number of particles $N$, number of iterations $K$, and stepsizes $\{ \alpha_k\}_{k=0}^K$.}}  
		\STATE{Initialize the transportation plan $T_0\leftarrow \id $.}
		\FOR{$k = 0, 1,2, \ldots, K$}
		\STATE{Generate
			$N$ particles  $\{ x_i ^{(k)}\}_{i\in [N]}$  by drawing  $N$ i.i.d.  observations from $p_0$.}
		\STATE{Push the particles
			$N$ particles by letting  $x_i\leftarrow  T_k(x_i^{(k)}) $ for all $i\in [N]$.}
		\STATE{Compute $f^*_k \in \cF$ via \# 
			\label{eq_def_of_widetilde_f^*_k}
			\tilde {f}^*_k \leftarrow   \underset{f \in   {\mathcal{F}}}{\text{argmax}}~\biggl \{ \frac{1}{N} \sum_{i=1}^{N  } f(x _ i) - F^*(f) \biggr \}.
			\#} \label{line:1}
		\STATE{Update the transportation plan by letting $T_{k+1} = [ \expm_{\cX}( - \alpha_k \cdot \nabla \tilde f_k^* )] \circ T_{k}$.}
		\ENDFOR
		\STATE{{\textbf{Output:}} The final transportation plan  $T_{K+1}$.}
	\end{algorithmic}
\end{algorithm}

%

\section{Additional Background}

In this section, we introduce some additional background knowledge that is related to the theory of variational transport. 
\subsection{Wasserstein Torus $\cP_2(\TT^d)$} \label{sec:torus} 
In this section, we introduce the Wasserstein torus $\cP(\TT^d)$. We first give a characterization of the torus $\TT^d$. We define a equivalence relation $\sim$ on $\RR^d$ as follows
\begin{align*}
x \sim x',\quad \text{if and only if } x - x' \in \ZZ^d.
\end{align*}
Then, the torus $\TT^d = \RR^d / \sim$ is the quotient space. For each $x \in \RR^d$, there is a unique $\bar x \in [0, 1)^d$ that is equivalent to $x$, which is defined by $\bar x = (x_1 - \lfloor x_1 \rfloor, \dots, x_d - \lfloor x_d \rfloor )^\top $. To simplify the notation, we  denote by $[x] = \{x'\given x'\sim x\} \in \TT^d$ the equivalence class of $x$ and use $\bar x \in [0,1)^d$ to identify $[x]$. We endow $\TT^d$ with the distance $\|[x]- [y]\|_{\TT^d}$ for any $[x], [y] \in \TT^d$, which is defined as follows,
\begin{align*}
\bigl\|[x]- [y] \bigr\|_{\TT^d} = \min_{a\in [x], b \in [y]} \|a - b \|.
\end{align*}
Note that $(\TT^d, \|\cdot  \|_{\TT^d})$ is a compact space.
We say a function $f: \RR^d \rightarrow \RR$ is on $\TT^d$ if and only if $f (x) = f(x'), \forall x \sim x'$, namely, $f $ is a periodic function. 
On the other hand, any function $f \colon \TT^d \rightarrow \RR$ 
can be extended periodically to $\RR^d$ 
by letting $f(x) = f( [x])$ for any $x\in \RR^d$. 
To characterize the probability measure space $\cP_2(\TT^d)$, we define a equivalence relation on $\cP_2(\RR^d)$ as follows 
\begin{align}\label{eq:dist_equivalence} 
\mu \sim \nu \quad \text{if and only if } \int_{\RR^d}  f ~\ud \mu = \int_{\RR^d} f ~ \ud \nu, \qquad \forall f \in \cC(\TT^d),
\end{align}
where $\cC(\TT^d)$ is the continuous function class on $\TT^d$. 
As a concrete example,   any two multivariate Gaussian distributions 
$N(\mu_1, \Sigma_1)$ and $N(\mu_2, \Sigma_2) $ are equivalent if and only if $\mu_1 - \mu_2 \in \ZZ^d$ and $\Sigma_1 = \Sigma_2$.
Furthermore, let $[\mu]$ denote the    equivalence class
of $\mu \in \cP_2(\RR^d)$.  
We define $\cP_2(\TT^d)$ as  the collection of all equivalence classes.   
In other words, we have $\cP_2(\TT^d) = \cP_2(\RR^d) / \sim$, which  is a quotient space $\cP_2(\RR^d)$.
For any $\mu \in \cP_2(\RR^d)$,  
there is a measure $\bar \mu$ supported on $[0,1)^d$ that lies in the equivalence class $[\mu]$. 
Specifically, let $p_{\mu}$ be the density of $\mu$, then the density of $\bar \mu$ is given by 
\#\label{eq:wrap_dist}
p_{\bar \mu} (x) = \sum_{k \in \ZZ^d} p_{\mu} (x + k).
\#
In particular, when $\mu$ is a Gaussian distribution, $\bar \mu$ defined in \eqref{eq:wrap_dist} is known as the wrapped Gaussian distribution on the torus.

Furthermore, if a functional $F \colon \cP_2(\RR^d) \rightarrow \RR $ is invariant to the equivalence relation in  \eqref{eq:dist_equivalence} in the sense that $F(\mu) = F(\nu)$ whenever $\mu \sim \nu$, then we regard $F$ as a functional on $\cP_2(\TT^d)$.
Similarly to \S\ref{sec:wasserstein_space}, we define the second-order  Wasserstein distance for $\mu, \nu \in \cP_2(\TT^d)$ as follows,
\begin{align*}
W_2(\mu, \nu) = \biggl [ \inf_{\pi \in \Pi(\mu, \nu)}  \int_{\TT^{d}\times \TT^{d} }  \bigl\|x- y \bigr\|_{\TT^d}~ \ud   \pi(x,y)  \biggr ]^{1/2 },
\end{align*}
where $\Pi(\mu, \nu)$ consists of all probability measures on $\TT^d \times \TT^d$ with marginals $\mu$ and $\nu$. Note that $(\cP(\TT^d), W_2)$ is compact \citep{gangbo2014weak}.

\subsection{Sobolev Space and Reproducing Kernel Hilbert Space} \label{sb}
To complement the brief introduction of RKHS in 
\S\ref{bg:rkhs}, in the following, 
we introduce the Sobolev space, which is  commonly utilized  in nonparametric statistics, and introduce its connection to RKHS. 
For a differentiable  function $f\colon \cX\rightarrow \RR$  defined on  $\cX \subseteq \RR^n$ and a multi-index $\balpha = (\alpha_1, \ldots, \alpha_n)$, the mixed partial derivative of $f$ is defined as
\$
D^{\balpha}f=\frac{\partial^{\|\balpha\|_1}f }{\partial^{\alpha_1}{x_1}\cdots\partial^{\alpha_n}{x_n}}.
\$  
The $(k,p)$-th order Sobolev norm of a function $f$ is defined as follows,
\$
\| f\|_{\cW^{k,p}}= \biggl[\sum_{\balpha:\|\balpha\|_1\le k }  \int_{\cX} |D^{\balpha}f(x)|^p~ \ud x \biggr]^{1/p},
\$ 
where we let $\ud x$ denote the    Lebesgue measure. Sometimes, what we need is integration with respect to a general probability measure $\nu$. In this case, we define the weighted $(k,p)$-th order Sobolev norm by replacing the integration with respect to $\nu$, namely,  
\$
\| f\|_{\cW_\nu ^{k,p} }= \bigg[\sum_{\balpha:\|\balpha\|_1\le k }  \int_{\cX} |D^{\balpha}f(x)|^p~\ud \nu(x) \bigg]^{1/p}.
\$ 
All functions with finite (weighted) $(k,p)$-th order Sobolev norm consists the (weighted) $(k,p)$-th order Sobolev space, which is denoted by $\cW^{k,p}(\cX)$ ($\cW_{\nu}^{k,p}(\cX)$).
Note that the (weighted) Sobolev space is a Banach space and  when $p=2$, it is also a Hilbert space with the following inner product 
\$
\la f,g\ra_{\cW_{\nu} ^{k,p}}=\sum_{\balpha:\|\balpha\|_1\le k }  \int_{\cX} D^{\balpha}f(x)\cdot D^{\balpha}g(x)~\ud \nu(x) .
\$ 

In the sequel, we aim to show that $\cW^{k,2}(\cX)$ is also an RKHS when $k > d/2$.  To  this end, we first introduce an equivalent definition of RKHS without using an explicit reproducing kernel.

\begin{definition}[Implicit Definition of RKHS] \label{def:rkhs2} 
	A Hilbert space $(\mathcal{H}, \la \cdot,\cdot \ra_{\mathcal{H}})$ is an RKHS if the evaluation  functionals is continuous on $\cH$. That is, for any $x \in \cX$,   there exists a constant $M_x>0$ such that   
	$
	|f(x)|\le M_x \cdot  \| f\|_{\mathcal{H}}
	$ for all $f\in \mathcal{H}$.
\end{definition}

The above definition of RKHS  does not require an explicit construction of the  kernel function   and hence is easier to check in practice. Moreover, as shown in Theorem 1 in 
\cite{berlinet2011reproducing},  $(\mathcal{H}, \la \cdot,\cdot \ra_{\mathcal{H}})$ has a reproducing kernel if and only if the evaluation functionals are  continuous on $\cH$. Thus, Definition \ref{def:rkhs2} is equivalent to the one   given in \S\ref{bg:rkhs}. 
Utilizing the equivalent definition, we have  the following theorem. 

\begin{theorem} \label{thm0} 
	For any $k > d/2$, the  $(k,2)$-th order Sobolev space $\cW^{k,2}(\cX)$ defined on a bounded domain $\cX \subseteq \RR^d $  is an RKHS.  
\end{theorem}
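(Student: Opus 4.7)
The plan is to invoke the equivalent characterization of RKHS given in Definition \ref{def:rkhs2}, i.e., to show that for every $x \in \cX$ the evaluation functional $\mathrm{ev}_x \colon f \mapsto f(x)$ is continuous on $\cW^{k,2}(\cX)$. Once this is done, the conclusion follows immediately. The key analytic input is the Sobolev embedding theorem, which asserts that when $k > d/2$ and $\cX$ is a bounded domain with sufficiently regular (e.g., Lipschitz) boundary, there is a continuous embedding $\cW^{k,2}(\cX) \hookrightarrow C(\bar\cX)$; equivalently, there exists a constant $C = C(k,d,\cX) > 0$ such that
\[
\| f \|_{C(\bar\cX)} = \sup_{x \in \cX} |f(x)| \le C \cdot \| f \|_{\cW^{k,2}}, \qquad \forall f \in \cW^{k,2}(\cX).
\]

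Given this embedding, the proof is essentially one line: for any fixed $x \in \cX$ and any $f \in \cW^{k,2}(\cX)$ we have $|f(x)| \le \|f\|_{C(\bar\cX)} \le C \cdot \|f\|_{\cW^{k,2}}$, so taking $M_x = C$ (uniformly in $x$) verifies the continuity criterion in Definition \ref{def:rkhs2}. Hence $\cW^{k,2}(\cX)$ is an RKHS, and a reproducing kernel exists by the Riesz representation theorem applied to each evaluation functional.

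The steps I would execute, in order, are: (i) recall Definition \ref{def:rkhs2} and reduce the claim to proving that $\mathrm{ev}_x$ is bounded for every $x$; (ii) state the Sobolev embedding theorem for $k > d/2$ on a bounded (Lipschitz) domain, referencing a standard source such as \cite{adams2003sobolev}; (iii) chain the pointwise bound $|f(x)| \le \|f\|_{C(\bar\cX)}$ with the embedding inequality; (iv) conclude via Definition \ref{def:rkhs2}.

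The one genuine subtlety—and the only place where care is needed—is the regularity of the domain $\cX$. The Sobolev embedding $\cW^{k,2}(\cX) \hookrightarrow C(\bar\cX)$ for $k > d/2$ requires $\cX$ to admit a bounded extension operator into $\cW^{k,2}(\RR^d)$, which is automatic for Lipschitz domains (and in particular for cubes, balls, or the torus $\TT^d$ considered in the main text) but can fail for pathological boundaries. If the paper wishes to treat a fully general bounded $\cX$, one would either restrict to domains satisfying the cone condition or prove the embedding directly via Fourier analysis on $\RR^d$ after extension. In the torus case relevant to \S\ref{sec:torus}, the argument is even cleaner: expand $f$ in a Fourier series $f = \sum_{\xi \in \ZZ^d} \hat f(\xi) e^{2\pi i \la \xi, x\ra}$, observe that $\|f\|_{\cW^{k,2}}^2 \asymp \sum_\xi (1+|\xi|^2)^k |\hat f(\xi)|^2$, and estimate $|f(x)| \le \sum_\xi |\hat f(\xi)|$ via Cauchy–Schwarz together with the summability of $\sum_\xi (1+|\xi|^2)^{-k}$, which holds precisely when $2k > d$. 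This gives both the embedding and an explicit expression for the reproducing kernel $K(x,y) = \sum_\xi (1+|\xi|^2)^{-k} e^{2\pi i \la \xi, x-y\ra}$.
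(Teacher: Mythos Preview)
Your proposal is correct and follows essentially the same route as the paper: reduce to the continuity of evaluation functionals via Definition \ref{def:rkhs2}, then invoke the Sobolev embedding theorem for $k>d/2$ to bound $\sup_x|f(x)|$ by $\|f\|_{\cW^{k,2}}$. The only cosmetic difference is that the paper states the embedding into a H\"older space $\cC^{m,\gamma}(\cX)$ (Lemma \ref{sei}) rather than merely into $C(\bar\cX)$; your version is all that is actually needed, and your remarks on domain regularity and the Fourier-analytic argument on $\TT^d$ are correct additions that the paper does not spell out.
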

\begin{proof}
	To prove that $\cW^{k,2}(\cX)$ is an RKHS, by Definition \ref{def:rkhs2},   it suffices to show that there exists a constant $C > 0$ such that 
	$
	\sup_{x\in \cX} | f(x) |  \leq C \cdot \| f \|_{ \cW^{k,2}}.  
	$
	To this end, we first  introduce the notion of the H\"older continuous function and the H\"older  space.  See, e.g.,  Chapter 5 of \cite{evans2010partial} for details.

	\begin{definition}[H\"older space] 
		\label{def:holder_space}
		Let $\cX$ be a bounded domain in $\RR^d$ and $\alpha>0$ be a  constant.  A function $f\colon \cX \rightarrow \RR$ is called $\alpha$-H\"older continuous, if there exists a constant $C>0$ such that 
		$
		|	f(x) - f(y) |  \leq C \cdot \| x - y \|_2 ^\alpha 
		$
		for all $x, y \in \cX$. 
		Moreover, we define 
		\#\label{eq:holder_norm1}
		\| f\|_{\cC^{0, \alpha}}  = \sup_{x \neq y \in \cX}  \frac{ |f(x) - f(y) | }{ \| x - y \|_2^\alpha}
		\# 
		as the H\"older coefficient of $f$.
		
		For any integer $k \geq 0$, the H\"older space $\cC^{k, \alpha} ( \cX)  $ contains all functions with  continuous derivatives up to order $k$ and the $k$-th partial derivatives are $\alpha$-H\"older continuous. Specifically, we define the H\"older norm of $f$ as 
		\#\label{eq:holder_norm}
		\| f\|_{\cC^{k, \alpha}}  = \max _{\balpha  \colon \| \balpha\|_1 \leq k } \sup_{x \in \cX }  \bigl | D^{\balpha } f(x) \bigr | + \max_{ \balpha  \colon \| \balpha\|_1 =  k} \bigl \| D^{\balpha } f \bigr \|_{\cC^{0, \alpha}},
		\#
		where the maximization in \eqref{eq:holder_norm} is taken over all multi-indices and $\| \cdot \|_{\cC^{0, \alpha}}$ is defined in \eqref{eq:holder_norm1}.  Then, we define the H\"older space  $\cC^{k, \alpha} ( \cX)  $ as 
		\$
		\cC^{k, \alpha} ( \cX)   = 	 \bigl \{ f \colon \cX \rightarrow  \RR  \colon  \| f\|_{\cC^{k, \alpha}} < \infty  \bigr \}. 
		\$
		Besides, it is known that $\cC^{k, \alpha} (\cX)$ equipped with norm $\| \cdot \|_{\cC^{k, \alpha}}$ is a Banach space.
	\end{definition}
	
	Our proof is based on the   Sobolev embedding inequality, which specifies a sufficient condition for the   Sobolev space $\cW^{k, p}(\cX)$ to be contained in a H\"older space. Formally, we have the following lemma.

	\begin{lemma}[Sobolev Embedding Inequality] \label{sei}
		 Let $\cX$ be a bounded domain in $\RR^d$. 
		Consider the Sobolev space $\cW^{k,p}(\cX)$. We let    $\gamma = \lfloor d/p \rfloor +1-d/p$ if $d/  p $ is not an integer, and let $\gamma$ be an arbitrary  number in $(0,1)
		$ if $ d/p$ is an integer. 
		Then, if $k  > d / p$, we have $ \cW^{k,p}(\cX) \subseteq \cC^{m, \gamma} (\cX)$, where $m = k -  \lfloor d/ p \rfloor  - 1$. More specifically, there exists a constant $C$ such that for all $u \in \cW^{k,p}(\cX)$,  it holds that 
		\#\label{eq:sei}
		\| u\|_{\cC^{m,\gamma}} \le  C\cdot \|u\|_{\cW^{k,p}},
		\#
		where $C$ depends only on $k, d, p, \gamma$ and $\cX$, and $\| \cdot \|_{\cC^{m, \gamma}}$ and $\| \cdot \|_{\cW^{k, p}}$ are the H\"older  and Sobolev norms, respectively.
	\end{lemma}

	Lemma \ref{sei} is a standard result in literature of partial differential equations. See Theorem 6 in Chapter 5 of  \cite{evans2010partial} for a  detailed proof. 
	Applying 
	Lemma \ref{sei}
	with $ p =2$ and $ k > d/2$,  we obtain that 
	\# \label{eq0}
	\sup_{x\in \cX} |f(x)|  \leq C\cdot \|u\|_{\cW^{k,2}} 
	\#   
	for any $f \in \cW^{k,2}(\cX)$, where $C$ does not depends on  the choice of $f$.
	Finally, recall that the
	Sobolev space $\cW^{k,2}(\cX)$ is also a Hilbert space. By Definition \ref{def:rkhs2}, $  \cW^{k,2}(\cX) $ is an RKHS, which completes the proof of Theorem \ref{thm0}. 
\end{proof}
Similar result holds for the weighted Sobolev space, which is established in the following corollary. 
\begin{corollary} \label{cor0}For the $(k,2)$-th order weighted Sobolev space $\cW_{\nu}^{k,2}(\cX)$ defined on a bounded domain $\cX \subseteq \RR^d$. We assume that  the measure $ \nu $ has a density function $\nu(x)$ with respect to the Lebesgue measure and $\nu(x)$ is lower bounded by some constant $\kappa>0$. Then    $\cW_{\nu}^{k,2}(\cX)$ is  an RKHS if   $k>n/2$.
	\begin{proof}
		By the definition of weighted Sobolev space, $\cW_{\nu}^{k,2}(\cX)$ is a Hilbert space. Similar to the proof of Theorem \ref{thm0}, since $\nu(x) $ is lower bounded by $\kappa $,  for any $f\in \cW^{k, 2}_{\nu}$, we have 
		\$
		\sup_{x\in \cX} |f(x)| \le C\cdot \|u\|_{\cW^{k,2}}\le C/\kappa \cdot \|u\|_{\cW^{k,2}_{\nu}},
		\$
		where the second inequality follows from the definition of weighted Sobolev norm $  \|\cdot\|_{\cW^{k,2}_{\nu}}$. Therefore,  by Definition \ref{def:rkhs2},  $ \cW_{\nu}^{k,2}(\cX) $ is an RKHS.
	\end{proof}
\end{corollary}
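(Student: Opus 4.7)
The plan is to reduce the weighted case to the unweighted case established in Theorem~\ref{thm0} by exploiting the pointwise lower bound $\nu(x)\ge\kappa$. First I would observe that $\cW_{\nu}^{k,2}(\cX)$ is by construction an inner product space with inner product $\la f,g\ra_{\cW_{\nu}^{k,2}} = \sum_{\|\balpha\|_1\le k}\int_{\cX} D^{\balpha}f(x)\, D^{\balpha}g(x)\,\ud\nu(x)$, and the completeness required for it to be a Hilbert space follows from the same Cauchy-sequence argument as in the unweighted case once one uses $\nu(x)\ge\kappa$ to dominate Lebesgue integrals by $\nu$-integrals. So the Hilbert space requirement of Definition~\ref{def:rkhs2} is essentially free.

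The core of the proof is to establish continuity of the evaluation functionals. The key inequality is the norm comparison
\[
\|f\|_{\cW^{k,2}}^2 = \sum_{\|\balpha\|_1\le k}\int_{\cX} \bigl|D^{\balpha}f(x)\bigr|^2\,\ud x \le \frac{1}{\kappa}\sum_{\|\balpha\|_1\le k}\int_{\cX} \bigl|D^{\balpha}f(x)\bigr|^2\,\ud\nu(x) = \frac{1}{\kappa}\,\|f\|_{\cW_{\nu}^{k,2}}^2,
\]
which is an immediate consequence of $\ud x \le \kappa^{-1}\,\ud\nu(x)$ under the assumed pointwise lower bound on the density of $\nu$. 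In particular, every $f\in\cW_{\nu}^{k,2}(\cX)$ lies in $\cW^{k,2}(\cX)$, so I may invoke Theorem~\ref{thm0}: when $k>d/2$ there is a constant $C$ (depending only on $k$, $d$, and $\cX$) such that $\sup_{x\in\cX}|f(x)|\le C\cdot\|f\|_{\cW^{k,2}}$.

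Chaining the two bounds I would conclude, for any fixed $x\in\cX$ and any $f\in\cW_{\nu}^{k,2}(\cX)$,
\[
|f(x)| \;\le\; C\cdot\|f\|_{\cW^{k,2}} \;\le\; \frac{C}{\sqrt{\kappa}}\cdot\|f\|_{\cW_{\nu}^{k,2}},
\]
so the evaluation functional at $x$ is bounded with constant $M_x = C/\sqrt{\kappa}$ (in fact uniform in $x$). By the implicit characterization of RKHS in Definition~\ref{def:rkhs2}, this upgrades $\cW_{\nu}^{k,2}(\cX)$ from a Hilbert space to a reproducing kernel Hilbert space, which is the desired conclusion.

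The hard part is not any single estimate but tying together the right objects: one must notice that although $\cW^{k,2}$ and $\cW_{\nu}^{k,2}$ a priori consist of different (equivalence classes of) functions, the lower bound $\nu(x)\ge\kappa$ gives a continuous embedding $\cW_{\nu}^{k,2}\hookrightarrow\cW^{k,2}$, which is exactly what is needed to pull back the Sobolev embedding/RKHS property from Theorem~\ref{thm0}. No analogue of Lemma~\ref{sei} has to be reproved in the weighted setting; all the weight assumption does is rescale the norm by $\kappa^{-1/2}$.
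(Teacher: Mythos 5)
Your proof is correct and follows essentially the same route as the paper: use $\nu(x)\ge\kappa$ to continuously embed $\cW_{\nu}^{k,2}$ into the unweighted $\cW^{k,2}$, pull back the Sobolev-embedding bound from Theorem~\ref{thm0}, and conclude via Definition~\ref{def:rkhs2}. As a minor point, your constant $C/\sqrt{\kappa}$ is the correct one (since the norms, being $L^2$-type, compare with a factor $\kappa^{-1/2}$), whereas the paper writes $C/\kappa$; both suffice for the RKHS conclusion, but yours is sharper.
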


As an  RKHS, the kernel function of Sobolev space is bounded \citep{novak2018reproducing}, which satisfies Assumption \ref{ass4}.

\subsection{Stochastic Gradient Descent on Riemannian Manifold}

In this section, we examine  the convergence of  stochastic (Riemannian) gradient descent on a  general Riemannian manifold  $(\cM, g)$,
which serves as the foundation of the analysis of variational transport. 
Throughout this subsection,  we assume that any two points on $\cM$ uniquely determine a geodesic.
Let $\| \cdot   \|$ be the geodesic distance on $\cM$.     Before we present the convergence result,    we first extend the classical optimization concepts   to Riemannian manifolds.  The definitions in this section  can also be found in the literature  on geodesically convex optimization. See, e.g., \cite{zhang2016first,zhang2016riemannian,liu2017accelerated,zhang2018estimate} and the references therein. 

\begin{definition} [Geodesical Convexity]
A function $f \colon  \cM \to \RR$ is called  geodesically convex  if for any $x, y \in \cM$ and a  geodesic $\gamma \colon [0,1] \rightarrow \cM$ such that $\gamma (0) = x$ and $\gamma(1) = y$, we have 
\#\label{eq:gconv1}
	f [ \gamma(t) ] \leq t \cdot f [ \gamma(0) ]  + (1-t) \cdot f [ \gamma(1) ] , \qquad \forall t \in [0, 1].
\#
\end{definition}

The following lemma characterizes the geodesical convexity 
based on the gradient of $f$.

\begin{lemma} \label{lemma:convex_riemann}
A differentiable function $f: \cM \to \RR$ is geodesically convex if and only if 
\#\label{eq:gconv2}
	f(y) \geq f(x) + \bigl \la \grad f(x), \expm^{-1}_x(y)\bigr \ra_x , \qquad  \forall x, y \in \cM.
\#
\end{lemma}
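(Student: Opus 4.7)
The plan is to mirror the standard Euclidean proof of the first-order characterization of convexity, with the exponential map playing the role of the displacement $y - x$. The key fact I will rely on is that for the unique geodesic $\gamma \colon [0,1] \to \cM$ with $\gamma(0) = x$ and $\gamma(1) = y$, one has $\gamma'(0) = \expm^{-1}_x(y)$, and consequently the directional derivative of $f$ along $\gamma$ at $t=0$ equals $\la \grad f(x), \expm^{-1}_x(y) \ra_x$ by the definition of the Riemannian gradient given in \S\ref{sec:riemann}.

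For the necessity direction, fix $x, y \in \cM$ and let $\gamma$ be the geodesic from $x$ to $y$. Geodesic convexity along $\gamma$ gives $f(\gamma(t)) \leq (1-t) f(x) + t f(y)$ for $t \in [0,1]$. Subtracting $f(x)$ and dividing by $t > 0$ yields $[f(\gamma(t)) - f(x)]/t \leq f(y) - f(x)$. Sending $t \to 0^+$, the left-hand side converges to $\la \grad f(x), \gamma'(0)\ra_x = \la \grad f(x), \expm^{-1}_x(y)\ra_x$, which gives the desired gradient inequality.

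For the sufficiency direction, fix $x, y \in \cM$ and a geodesic $\gamma \colon [0,1] \to \cM$ with $\gamma(0) = x$, $\gamma(1) = y$; for $t \in (0,1)$, set $z = \gamma(t)$. The crucial geometric step is to observe that, by reparametrizing and reversing $\gamma$, the unique geodesics from $z$ to $x$ and from $z$ to $y$ can be read off directly from $\gamma$, yielding
\[
\expm^{-1}_z(x) = -t \cdot \gamma'(t), \qquad \expm^{-1}_z(y) = (1-t) \cdot \gamma'(t).
\]
Applying the gradient inequality at $z$ with targets $x$ and $y$ respectively, and then forming the convex combination $(1-t) \cdot [\text{first}] + t \cdot [\text{second}]$, the two inner-product terms cancel by the identities above, leaving $(1-t) f(x) + t f(y) \geq f(z) = f(\gamma(t))$, which is exactly geodesic convexity.

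The main obstacle I anticipate is justifying the tangent-vector identities for $\expm^{-1}_z(x)$ and $\expm^{-1}_z(y)$; this relies on the standing assumption that any two points of $\cM$ are joined by a unique geodesic, together with the homogeneity of geodesics under affine reparametrization $s \mapsto \gamma(a + bs)$. Everything else is a direct transcription of the Euclidean argument.
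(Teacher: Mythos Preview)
Your proof is correct. The necessity direction is essentially identical to the paper's: both pass to the one-variable function $t \mapsto f(\gamma(t))$ and read off the tangent-line inequality at $t=0$, using $\gamma'(0) = \expm^{-1}_x(y)$.

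For sufficiency you take a genuinely different (though equally standard) route. The paper applies the gradient inequality once, at the pair $(\gamma(t_1),\gamma(t_2))$ for arbitrary $0 \le t_1 \le t_2 \le 1$, using the reparametrization $\hat\gamma(s) = \gamma(t_1 + (t_2 - t_1)s)$ to compute $\expm^{-1}_{\gamma(t_1)}(\gamma(t_2)) = (t_2 - t_1)\,\gamma'(t_1)$; this yields the tangent-line inequality for $t \mapsto f(\gamma(t))$ and hence its convexity. You instead fix the interior point $z = \gamma(t)$, apply the gradient inequality \emph{twice} (towards each endpoint), and take the convex combination $(1-t)\cdot[\ldots] + t\cdot[\ldots]$ so that the gradient terms cancel via $\expm^{-1}_z(x) = -t\,\gamma'(t)$ and $\expm^{-1}_z(y) = (1-t)\,\gamma'(t)$. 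Your approach lands directly on the defining inequality \eqref{eq:gconv1}; the paper's approach passes through the equivalent one-dimensional first-order characterization. Both rely on exactly the same reparametrization identities you flagged, and the uniqueness assumption on geodesics is what makes those identities legitimate.
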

\begin{proof}
For any $x, y \in \cM$,  
  let $\gamma: [0, 1] \to \cM$ be the unique geodesic satisfying 
$ \gamma(0)   = x$ and $
	\gamma(1) = y$. 
By the definition of the exponential mapping, we have $\expm _x [ \gamma' (0)] = y$ and
   $ \gamma '(0) = \expm^{-1}_x(y)$. 
  In addition, \eqref{eq:gconv1} shows that 
 $f[ \gamma(t)] $ is a convex and differentiable function with respect to $t$, which further  implies  
 \#\label{eq:gconv3}
	f \bigl[ \gamma(1) \bigr]  \geq f \bigl[ \gamma(0) \bigr] +  \frac{\ud }{\ud t}  f\bigl[ \gamma(t) \bigr]  \biggr |_{t=0}.
  \#
By the definition of the directional derivative, we have  
\#\label{eq:gconv4}
\frac{\ud }{\ud t}  f\bigl[ \gamma(t) \bigr]  \biggr |_{t=0}  = \nabla_{\gamma '(0)} f (x) = \bigl \la \grad  f(x),  \gamma' (0) \bigr \ra _x = \bigl \la \grad f(x), \expm^{-1}_x(y) \bigr \ra _x.
\#
Thus, combining \eqref{eq:gconv3} and \eqref{eq:gconv4}, we obtain \eqref{eq:gconv2} of Lemma \ref{lemma:convex_riemann}.

It remains to show that  \eqref{eq:gconv2} implies \eqref{eq:gconv1}. 
For any geodesic $\gamma: [0, 1] \to \cM$, we aim to show that $f[ \gamma(t)] $ is  a convex function of    $t$. To see this, for any  $0 \leq t_1 \leq t_2 \leq 1$, let  $x = \gamma(t_1)$ and $y = \gamma(t_2)$.
Note that we can reparametrize $\gamma$ to obtain a new geodesic $\widehat{\gamma}$ with $\widehat{\gamma}(0) = x$ and $\widehat{\gamma}(1) = y$ by letting 
\$
	\widehat{\gamma}(t) = \gamma\bigl( t_1 + (t_2 - t_1)\cdot t \bigr)
\$
for any $t\in [0,1]$. Since $\hat \gamma$ is a geodesic, by the definition of the exponential map, we have 
\$	\expm^{-1}_x(y) = \hat{\gamma}'(0) = (t_2 - t_1)\cdot \gamma' (t_1) .
\$
Thus, by 
  \eqref{eq:gconv2}, we have
\$
  &	f\bigl[ \gamma(t_2) \bigr]  =  f(y)  \geq f(x) + \bigl \la \grad f(x), \expm^{-1}_x(y) \bigr \ra _x \\
 & \qquad =   f \bigl[  \gamma(t_1) \bigr]  + \Bigl \la \grad f \bigl[ \gamma(t_1) \bigr] , (t_2 - t_1)\cdot \gamma' (t_1)\Bigr \ra _{\gamma(t_1)}   =  f \bigl[  \gamma(t_1) \bigr] + (t_2 - t_1)  \cdot \frac{\ud}{\ud t}  f \bigl[  \gamma(t) \bigr] \bigg|_{t = t_1},
\$
which implies that $f [ \gamma(t) ] $ is a convex function of $t$. Thus, \eqref{eq:gconv1} holds, which completes the proof.
\end{proof}

In the following, we extend the concepts of strong convexity and smoothness to Riemannian manifolds.

\begin{definition} [Geodesically Strong Convexity and Smoothness]
	For any $\mu >0$, 
a differentiable   function $f \colon \cM \to \RR$ is called  geodesically $\mu$-strongly convex  if
\begin{align*}
	f(y) \geq f(x) + \bigl \la \grad f(x), \expm^{-1}_x(y)  \bigr \ra_x+ \mu /2 \cdot  \|x - y \|^2.
\end{align*}
Function $f$ is geodesically $L$-smooth if $\grad f$ is $L$-Lipschitz continuous. That is, for any $x, y \in \cM$, we have 
	\# \label{eq:grad_lip}
	\Bigl \la \grad f(x) - \Gamma_y^x \bigl[ \grad f(y) \bigr] , \grad f(x) - \Gamma_y^x \bigl[ \grad f(y) \bigr] \Bigr \ra _x \leq L^2 \cdot \| x - y \|^2  ,
\#
where $\Gamma_y^x: \cT_y\cM \to \cT_x\cM$ is the parallel transport.
\end{definition}
 Note that we apply the parallel transport in \eqref{eq:grad_lip} to compare $\grad f(x)$ and $\grad f(y)$, which belong to two different tangent spaces. In the following, we introduce the  notion of gradient dominated function. 
 
\begin{definition} [Gradient Dominance]
	Let  $\mu> 0$ and $f \colon \cM   \rightarrow \RR$ be 
	a  differentiable function with $f^* = \min_{x \in \cM} f(x)$. 
	Function  $f$   
	 is   called $\mu$-gradient dominated if 
	 \#
	 	\label{eq:grad_dom}
	 2 \mu \cdot  [ f(x) - f^* ]  \leq \bigl \la \grad f(x), \grad f(x) \bigr \ra _x , \qquad  \forall x \in \cM.
	 \#
	\end{definition}

In the following lemma,  similarly to functions  on  the Euclidean space, we show that $\grad f$ being  Lipschitz smooth  implies that $f$ can be upper bounded by the distance function. Meanwhile, we prove that gradient dominance is implied by geodesically strong convexity  and  is thus  a weaker condition.

\begin{lemma} 	\label{lem_descent_lemma}
If $f: \cM \to \RR$  is geodesically  $\mu$-strongly convex, then $f$ is also $\mu$-gradient dominated. 
In addition, if $f$ has $L$-Lipschitz continuous gradient, then we have 
 \#\label{eq:smoothness}
 	f(y) \leq f(x) + \big \la  \grad f(x), \expm^{-1}_x(y) \big\ra _x + L / 2 \cdot \| x - y\|^2 , \qquad  \forall x,y \in \cM.
 \# 
\end{lemma}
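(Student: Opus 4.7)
The plan is to prove both claims separately, using the defining inequalities together with standard geodesic calculus.

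For the first claim, I would start from the strong convexity inequality. Writing $v = \expm_x^{-1}(y) \in \cT_x\cM$ and noting $\|x-y\| = \la v,v\ra_x^{1/2}$, strong convexity gives, for all $y \in \cM$,
\[
f(y) \geq f(x) + \bigl\la \grad f(x), v \bigr\ra_x + \mu/2 \cdot \la v, v \ra_x.
\]
Since geodesics are assumed unique, as $y$ ranges over $\cM$ the vector $v$ ranges over $\cT_x\cM$, so I can take the infimum of the right-hand side over $v \in \cT_x\cM$. The quadratic in $v$ is minimized at $v^{\star} = -\grad f(x)/\mu$, producing the value $f(x) - (2\mu)^{-1} \cdot \la \grad f(x), \grad f(x) \ra_x$. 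Comparing with $f^{\star} \le f(y)$ and rearranging yields \eqref{eq:grad_dom}.

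For the second claim, let $\gamma \colon [0,1] \to \cM$ denote the unique geodesic with $\gamma(0) = x$ and $\gamma(1) = y$, and set $v = \expm_x^{-1}(y) = \gamma'(0)$. By the fundamental theorem of calculus and the directional-derivative formula from the proof of Lemma \ref{lemma:convex_riemann},
\[
f(y) - f(x) = \int_0^1 \bigl\la \grad f[\gamma(t)], \gamma'(t) \bigr\ra_{\gamma(t)}\, \ud t.
\]
The key geometric fact is that along a geodesic the velocity field is parallel, i.e.\ $\Gamma_{\gamma(t)}^x \gamma'(t) = \gamma'(0) = v$, and parallel transport preserves the inner product. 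Thus I can rewrite each integrand as $\la \Gamma_{\gamma(t)}^x \grad f[\gamma(t)], v\ra_x$. Adding and subtracting $\la \grad f(x), v\ra_x$ inside the integral produces
\[
f(y) - f(x) = \bigl\la \grad f(x), v \bigr\ra_x + \int_0^1 \Bigl\la \Gamma_{\gamma(t)}^x \grad f[\gamma(t)] - \grad f(x), v \Bigr\ra_x \ud t.
\]
Cauchy--Schwarz and the $L$-Lipschitz continuity hypothesis \eqref{eq:grad_lip} bound the integrand by $L \cdot \|\gamma(t) - x\| \cdot \|v\|_x$; since $\gamma$ is the constant-speed geodesic, $\|\gamma(t) - x\| = t \cdot \|y - x\|$ and $\|v\|_x = \|y - x\|$. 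Integrating $L \cdot t \cdot \|y-x\|^2$ over $[0,1]$ gives $L/2 \cdot \|y-x\|^2$, which establishes \eqref{eq:smoothness}.

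The main obstacle is the careful handling of parallel transport in the second part: one must use both that the geodesic velocity is auto-parallel (so that $\Gamma_{\gamma(t)}^x \gamma'(t) = v$ for every $t$) and that $\Gamma$ is an isometry between tangent spaces. These are the only nontrivial Riemannian ingredients; once they are in place, the rest of the argument parallels the Euclidean descent lemma verbatim. In particular, no curvature comparison is needed here, because the Lipschitz bound is stated directly in terms of parallel transport in \eqref{eq:grad_lip}.
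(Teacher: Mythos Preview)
Your proposal is correct and follows essentially the same route as the paper. For the first claim the paper completes the square in $\expm_x^{-1}(y)$ and drops the nonnegative term before specializing to $y=x^\star$, which is exactly your minimization over $v$; for the second claim the paper carries out the same geodesic integral, the same parallel-transport identity $\Gamma_{\gamma(t)}^x\gamma'(t)=\gamma'(0)$, and the same Cauchy--Schwarz plus Lipschitz estimate. One small remark: your justification ``as $y$ ranges over $\cM$ the vector $v$ ranges over $\cT_x\cM$'' is not generally true (the inverse exponential map need not be surjective), but it is also unnecessary---since the infimum over all $v\in\cT_x\cM$ is no larger than the value at the particular $v=\expm_x^{-1}(y)$, the inequality $f(y)\ge f(x)-\tfrac{1}{2\mu}\la\grad f(x),\grad f(x)\ra_x$ already follows, and then taking the infimum over $y$ on the left finishes the argument.
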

\begin{proof} 
For the first part, let $f$ be a geodesically $\mu$-strongly convex function. Since $(\cM, g)$ is a geodesic space, we have $ 	\| x - y\|^2 = \la \expm^{-1}_x(y), \expm^{-1}_x(y) \ra_x$. Thus, by direct computation, we have 
\#\label{eq:grad11}
	f(y) & \geq  f(x) + \bigl \la  \grad f(x), \expm^{-1}_x(y) \bigr \ra_x +  \mu /2 \cdot \bigl  \la \expm^{-1}_x(y), \expm^{-1}_x(y) \bigr \ra_x  \notag \\
	 & =    f(x) + \mu /2 \cdot  \Bigl\la  \expm^{-1}_x(y) +1 / \mu \cdot  \grad f(x), \expm^{-1}_x(y) + 1 / \mu \cdot  \grad f(x) \Bigr \ra _x \notag \\
	 &\qquad - 1 / (2\mu) \cdot  \bigl \la  \grad f(x),  \grad f(x) \bigr \ra_x \notag  \\
	 & \geq    f(x) - 1 / (2\mu) \cdot  \bigl \la  \grad f(x),  \grad f(x) \bigr \ra_x. 
\#
By setting $y = x^*$ such that $f(x^*) = f^*$ in \eqref{eq:grad11}, we establish \eqref{eq:grad_dom}. 
 
To establish  the second part of Lemma \ref{lem_descent_lemma}, 
 for any $x, y \in \cM$, let $\gamma$ be the unique geodesic satisfying $\gamma(0) = x$ and $\gamma(1) = y$. Then, we have $\expm_x^{-1} (y) = \gamma' (0)$.
 Moreover, for any $t\in [0, 1]$, note that $\gamma'(t)\in \cT_{\gamma(t) } \cM$. By the definition of the parallel transport,   we have 
 $\Gamma _{\gamma(t)}^x \gamma'(t) = \gamma'(0) = \expm_x^{-1} (y).$
   Thus, by \eqref{eq:gconv4} it holds that  
 \#\label{eq:grad13}
 & f(y) - f(x) - \bigl \la  \grad f(x), \expm^{-1}_x(y) \bigr \ra_x = f[ \gamma(1) ] - f[\gamma(0)] -\frac{\ud 	}{\ud t} f[ \gamma(t)] \biggr | _{t= 0} \notag \\
 & \qquad  = \int_0 ^1   \bigl \{ \bigl \la \grad f[ \gamma(t) ], \gamma' (t) \bigr \ra _{\gamma(t) } - \bigl \la \grad f(x), \gamma'(0) \bigr \ra _{x} \bigr \} \ud t  \notag \\
 & \qquad = \int_0 ^1   \Bigl (   \bigl \la \Gamma_{\gamma(t)}^x \bigl\{ \grad f[ \gamma(t) ]\bigr \}  - \grad f(x) , \expm_x^{-1} (y)   \bigr \ra _{x} \Bigr ) \ud t,
 \# 
 where in the last equality we transport the tangent vectors  in $\cT_{\gamma(t) } \cM$ to $\cT_x \cM$.  Besides,  
 by \eqref{eq:grad_lip} and the Cauchy-Schwarz inequality, for any $z \in \cM$, we have 
 \#\label{eq:grad12}
&  \Bigl | \bigl \la \grad f(x) - \Gamma_z^x [ \grad f(z) ] , \expm_{x}^{-1}(y) \bigr \ra _{x} \Bigr | \notag \\
 & \qquad \leq   \Bigl \{\bigl \la \grad f(x) - \Gamma_z^x [ \grad f(z) ] ,\grad f(x) - \Gamma_z^x [ \grad f(z) ]  \bigr \ra_x  \Bigr \}^{1/2}  \cdot \Bigl [\bigl \la \expm_{x}^{-1}(y) ,  \expm_{x}^{-1}(y) \bigr \ra _{x} \Bigr]^{1/2}\notag \\
 &\qquad \leq 
  L  \cdot \| x- y\| \cdot \| z - x \|. 
 \#
 Finally, combining \eqref{eq:grad13} and \eqref{eq:grad12}, we have 
 \$
 f(y) - f(x) - \bigl \la  \grad f(x), \expm^{-1}_x(y) \bigr \ra_x \leq L \cdot \int_0 ^1 \|x - y \| \cdot \bigl\|\gamma(t)- x  \bigr\| \ud t = L/ 2\cdot \| x- y \|^2,
 \$
 where the last equality follows from the fact that $\| \gamma(t) - x  \| = \|\gamma(t)- \gamma(0)\|  = t \cdot \| x - y\|$. Therefore, we establish \eqref{eq:smoothness} and complete the proof of Lemma \ref{lem_descent_lemma}.
\end{proof}


Now we are ready to study the convergence of stochastic gradient descent 
for optimization on $(\cM, g
)$. Let $f\colon \cM \rightarrow \RR$ be a differentiable function   and our goal is 
minimize such an objective function on $\cM$, i.e., 
$
\min_{x\in \cM} f(x) . 
$
 Initialized  from   $x_0 \in \cM$, stochastic (Riemannian)  gradient descent   constructs a sequence of iterates $\{ x_{t} \}_{t\geq 0} \subseteq \cM$  according to 
\#\label{eq:sgd_manifold}
x_{t+1} = \expm_{x_t} \bigl(- \alpha_t \cdot g_t \bigr ),
\#
where $g_t \in \cT_{x_t} \cM$ is the descent direction and $\alpha_t > 0$ is the stepsize. Here we assume  $ g_t  = \grad f(x_t) + \delta _t$ is a stochastic perturbation of $\grad f(x_t)$ with error $\delta_t$ which might be biased.  The following theorem establishes a finite-time convergence  guarantee for  such a method.

\begin{theorem}[Stochastic Riemannian Gradient Descent]
	\label{thm_conv_sgd_with_err}
	Let $f: \cM \to \RR$ be a differentiable function on a Riemannian manifold $(\cM, g)$ that is complete and connected. We assume $f$ is $\mu$-gradient dominated and   has  $L$-Lipschitz continuous  gradient, where parameters $\mu$ and $L$ satisfy  $0 < \mu \leq L$.
	 Let $\{ x_t\}_{t\geq 0}$ be generated  according to  \eqref{eq:sgd_manifold},
	where $g_t =  \grad f(x_t) + \delta_t$ for some $\delta_t \in \cT_{x_t}\cM$. 
	For each $t\geq 0$, we denote by $\cF_t$ the $\sigma$-algebra generate by $\{ x_k\}_{k\leq t}$.  In addition, we assume that there is a sequence  $\{ \varepsilon_t\}_{t\geq 0} \subseteq \RR$ such that   $\EE [ \la \delta_t, \delta_t \ra _{x_t} \given \cF_t ] = \varepsilon_t$ for all $t\geq 0$. Moreover, the stepsize $\alpha_t$ in \eqref{eq:sgd_manifold}  is set to be a constant $\alpha \in (0, 1/(4L)]$.  Then, for any $t \geq 1$, it holds that 
\#\label{eq:convergence_rate-r}
\EE[ f(x_t)]  - f^* 	   \leq \rho^t \cdot [f(x_0) - f^* ]  + \alpha \cdot \rho^t \cdot  \sum_{\ell =0}^{t-1}   \varepsilon_{\ell} \cdot \rho^{-(\ell+1)},
\# 
where we define  
	$\rho = 1 - \alpha \cdot \mu /2  $. 
 Furthermore, if sequence $\{ \varepsilon _t\}_{t\geq 0}$ converges to zero as $t$ goes to infinity, then    \eqref{eq:convergence_rate-r} implies that $ 
		\lim_{t \rightarrow  \infty} ~ \EE[ f(x_t)]  - f^* = 0
 $, i.e, the stochastic gradient updates in \eqref{eq:sgd_manifold} converge to the global minimum of $f$.
\end{theorem}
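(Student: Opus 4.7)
The plan is to mimic the standard Euclidean proof of SGD convergence under the Polyak--\L{}ojasiewicz condition, with the Riemannian smoothness inequality from Lemma~\ref{lem_descent_lemma} playing the role of the usual quadratic upper bound. The main obstacle is that $\delta_t$ is not assumed to have zero conditional mean, so the cross term that appears on taking the conditional expectation does not vanish automatically and must be tamed by Young's inequality.

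I would begin by applying the smoothness inequality~\eqref{eq:smoothness} at $x=x_t$ and $y=x_{t+1}=\expm_{x_t}(-\alpha g_t)$. Because $s\mapsto\expm_{x_t}(-s\alpha g_t)$ is the geodesic from $x_t$ to $x_{t+1}$, we have $\expm_{x_t}^{-1}(x_{t+1})=-\alpha g_t$ and $\|x_t-x_{t+1}\|^{2}=\alpha^{2}\bag{g_t,g_t}_{x_t}$. Substituting $g_t=\grad f(x_t)+\delta_t$ and expanding yields
\$
f(x_{t+1})\le f(x_t)-\alpha(1-L\alpha/2)\bigl\|\grad f(x_t)\bigr\|_{x_t}^{2}-\alpha(1-L\alpha)\bag{\grad f(x_t),\delta_t}_{x_t}+\tfrac{L\alpha^{2}}{2}\bag{\delta_t,\delta_t}_{x_t}.
\$
Taking the conditional expectation given $\cF_t$ replaces the last inner product by $\varepsilon_t$ and leaves $\bag{\grad f(x_t),\EE[\delta_t\given\cF_t]}_{x_t}$ in the cross term. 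Young's inequality bounds this cross term in absolute value by $\tfrac{1}{2}\|\grad f(x_t)\|_{x_t}^{2}+\tfrac{1}{2}\|\EE[\delta_t\given\cF_t]\|_{x_t}^{2}$, and Jensen's inequality in turn gives $\|\EE[\delta_t\given\cF_t]\|_{x_t}^{2}\le\EE[\bag{\delta_t,\delta_t}_{x_t}\given\cF_t]=\varepsilon_t$. Since $\alpha\le 1/(4L)$ ensures $1-L\alpha\ge 0$, the algebraic identities $(1-L\alpha/2)-(1-L\alpha)/2=1/2$ and $(1-L\alpha)/2+L\alpha/2=1/2$ collapse the coefficients cleanly to
\$
\EE\bigl[f(x_{t+1})\bigm|\cF_t\bigr]\le f(x_t)-\tfrac{\alpha}{2}\bigl\|\grad f(x_t)\bigr\|_{x_t}^{2}+\tfrac{\alpha}{2}\varepsilon_t.
\$

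Finally, I would invoke gradient dominance $\|\grad f(x_t)\|_{x_t}^{2}\ge 2\mu[f(x_t)-f^*]$, subtract $f^*$ from both sides, and take full expectation via the tower property to obtain the one-step linear recursion $a_{t+1}\le(1-\alpha\mu)\,a_t+(\alpha/2)\,\varepsilon_t$ with $a_t := \EE[f(x_t)]-f^*$. Because $1-\alpha\mu\le\rho=1-\alpha\mu/2$ and $\alpha/2\le\alpha$, a straightforward induction unrolls this recursion to $a_t\le\rho^{t}a_0+\alpha\sum_{\ell=0}^{t-1}\rho^{t-\ell-1}\varepsilon_\ell$, which is precisely \eqref{eq:convergence_rate-r} after rewriting $\rho^{t-\ell-1}=\rho^{t}\rho^{-(\ell+1)}$. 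For the asymptotic claim under $\varepsilon_t\to 0$, I would split the sum at a cutoff $T$: given any $\epsilon>0$, choose $T$ so that $\varepsilon_\ell<\epsilon$ for $\ell\ge T$; the head contributes at most $\rho^{t-T}\cdot\alpha\sum_{\ell<T}\varepsilon_\ell$, which vanishes as $t\to\infty$, while the tail is bounded by $\alpha\epsilon/(1-\rho)=2\epsilon/\mu$. Combined with $\rho^{t}[f(x_0)-f^*]\to 0$, this shows $\EE[f(x_t)]\to f^*$, completing the proof.
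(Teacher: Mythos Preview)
Your proposal is correct and follows essentially the same route as the paper: apply the Riemannian smoothness inequality of Lemma~\ref{lem_descent_lemma} to one step, control the cross term via Young's inequality, invoke gradient dominance, take expectations to obtain a linear recursion, and unroll; the asymptotic claim is handled by the identical head/tail splitting of the error sum.

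The only notable difference is in how the cross term is tamed. The paper stays pathwise throughout---it applies $2ab\le a^2+b^2$ both to $-2\la\grad f(x_k),\delta_k\ra_{x_k}$ and to $\la g_k,g_k\ra_{x_k}$, obtaining
\[
f(x_{k+1})\le f(x_k)-\tfrac{\alpha_k(1-2\alpha_kL)}{2}\,\bigl\|\grad f(x_k)\bigr\|_{x_k}^{2}+\tfrac{\alpha_k(1+2\alpha_kL)}{2}\,\la\delta_k,\delta_k\ra_{x_k},
\]
and only then takes conditional expectation. You instead expand $\la g_t,g_t\ra_{x_t}$ exactly, take the conditional expectation first, and then apply Young together with Jensen to $\la\grad f(x_t),\EE[\delta_t\given\cF_t]\ra_{x_t}$. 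Your route yields the slightly tighter one-step contraction factor $1-\alpha\mu$ and error coefficient $\alpha/2$ before relaxing to $\rho=1-\alpha\mu/2$ and $\alpha$, whereas the paper lands at $1-\mu\alpha(1-2\alpha L)$ and $\alpha(1+2\alpha L)/2$ directly. The paper's pathwise bookkeeping has the minor advantage of avoiding the Bochner-type conditional expectation $\EE[\delta_t\given\cF_t]$ in the random tangent space $\cT_{x_t}\cM$; your argument is perfectly valid there since $x_t$ is $\cF_t$-measurable, but it is a small extra technicality. Either way, both arrive at the same recursion $a_{t+1}\le\rho\,a_t+\alpha\,\varepsilon_t$ and the same final bound.
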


This theorem proves that, when $\alpha$ is properly chosen,  the expected  suboptimality $\EE [ f(x_t)] - f^*$ is upper bounded by a sum of two terms. The first term decays to zero at a linear rate, which corresponds to the convergence rate of  Riemannian gradient descent 
with an exact gradient.  
Meanwhile, the second term exhibits the effect of the gradient errors $\{ \delta_t\}_{ t\geq 0}$ and converges to zero if the second-order moments of $\{ \delta_t\}_{ t\geq 0}$, i.e., $\{ \varepsilon_t\}_{t \geq 0}$,  converge to zero.

\begin{proof} We first analyze the performance of a single step of stochastic Riemannian  gradient descent. Since $\grad f$ is  $L$-Lipschitz, by \eqref{eq:smoothness} in Lemma \ref{lem_descent_lemma},  we have for any $k\geq 0$ that
\begin{align} \label{eq:sgd1}
	f(x_{k+1}) &  \leq   f(x_k) - \alpha_k \cdot \bigl\la \grad f(x_k),  \grad f(x_k) \bigr \ra_{x_k} - \alpha_k \cdot  \bigl \la  \grad f(x_k), \delta_k \bigr\ra_{x_k} \notag \\
	& \qquad  +  \alpha_k^2 \cdot L/ 2 \cdot  \bigl \la  \grad f(x_k) + \delta_k,  \grad f(x_k) + \delta_k \bigr\ra_{x_k} . 
\end{align}
Here   $\la \cdot , \cdot \ra _{x_k}$ denotes the inner product on $\cT_{x_t}\cM$. 
By the basic inequality $2 ab \leq a^2 + b^2$,   we have
\begin{align}
	 - 2 \bigl \la \grad f(x_k), \delta_k \big\ra _{x_k} & \leq  \bigl \la  \grad f(x_k),  \grad f(x_k) \bigr\ra_{x_k} + \bigl \la \delta_k, \delta_k\bigr\ra_{x_k}, \label{eq:sgd2}\\
	  \bigl \la  \grad f(x_k) + \delta_k,  \grad f(x_k) + \delta_k \bigr\ra_{x_k} &\leq 2 \bigl \la  \grad f(x_k),  \grad f(x_k)\bigr\ra_{x_k} + 2 \bigl\la \delta_k, \delta_k\bigr\ra_{x_k} .  \label{eq:sgd3}
\end{align}
 Thus, combining \eqref{eq:sgd1}, \eqref{eq:sgd2}, and \eqref{eq:sgd3},  we have 
 \#\label{eq:sgd4} 
 f(x_{k+1}) &  \leq   f(x_k) - \alpha_k \cdot \bigl\la \grad f(x_k),  \grad f(x_k) \bigr \ra_{x_k}   \\
 & \qquad + ( \alpha_k /2 +  \alpha_k^2 \cdot L) \cdot \bigl [ \la  \grad f(x_k),  \grad f(x_k) \bigr\ra_{x_k} + \bigl \la \delta_k, \delta_k\bigr\ra_{x_k}\bigr ]  \notag \\
 & =    f(x_k) - \alpha_k \cdot ( 1- 2\alpha_k L) / 2 \cdot \bigl\la \grad f(x_k),  \grad f(x_k) \bigr \ra_{x_k} + \alpha_k ( 1+2  \alpha_k L ) /2 \cdot \bigl\la \delta_k, \delta_k\bigr\ra_{x_k}. \notag
  \#
  Moreover, since $f$ is $\mu$-gradient dominated, combining \eqref{eq:grad_dom} and \eqref{eq:sgd4} we have 
  \#\label{eq:sgd5} 
  f(x_{k+1})  \leq   f(x_k) - \mu \cdot \alpha_k \cdot ( 1- 2\alpha_k L)  \cdot [ f(x_k) - f^*] +\alpha_k ( 1+2  \alpha_k L ) /2 \cdot \bigl\la \delta_k, \delta_k\bigr\ra_{x_k}.
  \#
 Taking the conditional expectation   given $\cF_k $ on both sides of \eqref{eq:sgd5}, we obtain that 
 \#\label{eq:sgd6}
 \EE [ f(x_{k+1} ) \given \cF_k ] - f^* \leq  [ 1-  \mu \cdot \alpha_k \cdot ( 1- 2\alpha_k L)  ]  \cdot [ f(x_k) -f^* ] + \alpha_k ( 1+2  \alpha_k L ) /2 \cdot  \varepsilon_k.
 \#
  Thus, in \eqref{eq:sgd6} we establish the performance of a single step of stochastic gradient descent. 
  
  Under the assumption that $\alpha_k = \alpha  \leq 1 / (4L) $, we have $1-  \mu \cdot \alpha_k \cdot ( 1- 2\alpha_k L) \leq 1 - \alpha \cdot \mu /2 \in (0,1)$ and $ ( 1+2  \alpha_k L ) /2 \leq 1$. Recall that we define $\rho =1 - \alpha \cdot \mu /2 $.  Thus, by  taking the total expectation  on both ends of \eqref{eq:sgd6}, we have 
  \#\label{eq:sgd7} 
  \EE [ f(x_{k+1}) ] - f^*   \leq \rho \cdot \bigl \{  \EE[ f(x_k) ] - f^* \bigr \} + \alpha \cdot \varepsilon_k.
  \#
 The  recursive relationship in \eqref{eq:sgd7} implies that 
 \begin{align}
 \label{eq:sgd8}
 \rho^{-(k+1)} \cdot \bigl \{  \EE [ f(x_{k+1}) ] - f^*  \bigr \} \leq \rho^{-k} \cdot \bigl \{  \EE[ f(x_k) ] - f^* \bigr \}  + \rho^{-(k+1)} \cdot \alpha \cdot \varepsilon_k.
 \end{align}
 Thus, $\{ \rho^{-k} \cdot   \EE [ f(x_{k})   - f^* ]   \}_{k \geq 0  }$  form a telescoping series. From  \eqref{eq:sgd8}, by summing from $k = 0$ to any $t \geq 1$,   we have
 \$
  \rho^{-t} \cdot \bigl \{  \EE [ f(x_{t}) ] - f^*  \bigr \}  \leq \sum_{\ell = 0}^{t-1} \alpha\cdot  \varepsilon_\ell \cdot \rho^{-(\ell+1) }  + \bigl [f(x_{0})  - f^* \bigr  ]
 \$ 
 for all $t\geq 1$, which implies \eqref{eq:convergence_rate-r}.

   Finally, if the series  $\{ \varepsilon_t\}_{t\geq 0}$  converges to zero as $t$ goes to infinity, for any $\epsilon > 0$, there exists an integer $N_\epsilon >0$ depending on $\epsilon $ such that, $\varepsilon _t < \epsilon $ for all $t\geq N_{\epsilon}$.
   Thus, for any $t \geq N_{\epsilon} + 2$, we have 
 \#\label{eq:sgd9} 
   &\rho^t \cdot  \sum_{\ell =0}^{t-1}   \varepsilon_{\ell} \cdot \rho^{-(\ell+1)}   = \rho^t \cdot \biggl [ \sum_{\ell =0}^{N_{\epsilon} }   \varepsilon_{\ell} \cdot \rho^{-(\ell+1)} \biggr ] + \rho^t \cdot  \sum_{\ell =N_{\epsilon} +1}^{t-1 }   \varepsilon_{\ell} \cdot \rho^{-(\ell+1)}  \notag \\
  & \qquad  \leq \rho^t \cdot \biggl [ \sum_{\ell =0}^{N_{\epsilon} }   \varepsilon_{\ell} \cdot \rho^{-(\ell+1)} \biggr ]  +  \epsilon \cdot   \sum_{\ell =N_{\epsilon} +1}^{t-1 }     \rho^{t-(\ell+1)}   \leq \rho^t \cdot \biggl [ \sum_{\ell =0}^{N_{\epsilon} }   \varepsilon_{\ell} \cdot \rho^{-(\ell+1)} \biggr ]   + \epsilon / ( 1- \rho).
 \#  
 Note that $N_{\epsilon}$ in \eqref{eq:sgd9} does not depend on $t$.  Thus, combining \eqref{eq:convergence_rate-r} and \eqref{eq:sgd9}  and letting $t$ goes to infinity, we obtain that 
 \$
\limsup_{t\rightarrow \infty} \bigl \{  \EE[ f(x_t)]  - f^* 	 \bigr \} \leq  \epsilon / ( 1- \rho),
 \$
 where $\epsilon $ can be set arbitrarily small. Therefore, we have $\lim_{t\rightarrow \infty} \EE[ f(x_t) ] - f^* = 0$, which completes the proof of Theorem \ref{thm_conv_sgd_with_err}.
   \end{proof}

\section{Proof of Main Result}
In this section, we lay out the proofs of the main results, namely, Theorems \ref{thm:stat} and \ref{thm:comp}. 

\subsection{Proof of Theorem \ref{thm:stat}} \label{sec:proof_thm_stat}  

\begin{proof}
	The proof of Theorem \ref{thm:stat}  
	consists of three steps. Specifically, in the first and second steps, we upper bound the bias  caused by the RKHS norm regularization and the variance of the estimator, respectively.  Then, in the last step we balance  the   bias and variance  by choosing a  suitable regularization parameter $\lambda$ and obtain the final the estimation error $\| f_{n,\lambda} -f^* \|_{\mathcal{H}}$, which  further yields an upper bound on $\| \nabla f_{n,\lambda} - \nabla f^* \|_{\cL_{\PP}^2 }$.

	\vspace{4pt}
	
	{\noindent {\bf  Step 1.}} We first consider the bias incurred by the regularization.  To this end, we define  
	\#\label{eq:op_reg}
	f_{\lambda}=\argmin_{f \in \mathcal{H}}  L_{\lambda}(f), \qquad \text{where}~~  L_{\lambda}(f) = L(f)+ \lambda / 2\cdot \|f\|_{\mathcal{H}}^2 .
	\#
	Note that the optimization problem in \eqref{eq:op_reg} is the population version of the one in \eqref{eq:estimator}. 
	Since $f^*$ is equal to $f_{\lambda}$ in \eqref{eq:op_reg} with $\lambda =0 $,
	$\|f_{\lambda}-f^*\|_{\cH}$ reflects the  bias due to the  regularization $\lambda / 2\cdot \|f\|_{\mathcal{H}}^2$.

	The basic idea of bounding this term is as follows. By the definitions of $f_{\lambda}$ and $f^*$,  we have that
	\# \label{eq:optimality}
	\mathcal{D}L(f_{\lambda})+\lambda\cdot  f_{\lambda} = \mathcal{D} L_{\lambda} (f_{\lambda} )=0, \qquad 
	\mathcal{D}L(f^*)&=0.  
	\# 
	We aim to apply   spectral decomposition to   $ f_{\lambda} $ and $f^*$ using  the orthonormal  basis  of RKHS $\mathcal{H}$ and upper bound the discrepancy between the corresponding   coefficients. However, the challenge is that, when the population loss $L$ is not  quadratic, the Fr\'echet derivative $ \mathcal{D}L $ is not a linear functional, which makes it intractable to obtain $f_{\lambda}$  in closed form.   
	 To overcome such a  difficulty, we  take a detour by linearizing  the optimality condition of   $f_{\lambda}$   in  \eqref{eq:optimality}
	via the Taylor expansion and obtain  an approximate solution $\hat f_{\lambda}$ to the linearized equation. 
	Then, we control the bias by upper bounding  $\| \hat f_{\lambda} - f^*  \|_{\cH} $  and $\|\hat{f}_{\lambda}-f_{\lambda}\|_{\mathcal{H}}  $ separately.  
	Since  $\hat{f}_{\lambda}$ is the solution to  a linear equation, we can obtain it in a closed form, which enables us to upper bound $\|\hat{f}_{\lambda}-f^*\|_{\mathcal{H}}$ via spectral decomposition. Besides, to handle the linearization error $\|\hat{f}_{\lambda}-f_{\lambda}\|_{\mathcal{H}}  $, we resort to  
  the   contractive mapping theorem \citep{rudin1976principles} to control it  using  $\|\hat{f}_{\lambda}-f^*\|_{\mathcal{H}}$.

	Specifically, by applying the Taylor expansion to $\mathcal{D} L_{\lambda}$, 
	for any $f$  that is close to $f^*$, we have 
	\# \label{linear}
	\mathcal{D}L_{\lambda} (f)\approx \mathcal{D}L_{\lambda} (f^*)+\mathcal{D}^2L_{\lambda} (f^*)(f-f^*). 
	\#
	By setting the right-hand-side of   \eqref{linear} to zero,  the solution  $\hat f_{\lambda}$  is given by  
\#\label{eq:def_hat_f_lam2}
\hat{f}_{\lambda}= f ^* - \lambda \cdot  \bigl(\mathcal{D}^2 F^* (f^*)+\lambda \cdot   \id  \bigr)^{-1} \cdot f^* =  \bigl[  \bigl (\mathcal{D}^2 F^* (f^*)+\lambda \cdot  \id  \bigr )^{-1}\circ \mathcal{D}^2 F^* (f^*) \bigr] f^*,
\#
 where $ \mathcal{D}^2 F^* (f^*) \colon \cH \rightarrow \cH $ is the second-order Fr\'echet derivative with respect to the RKHS norm  and $\id \colon \cH  \rightarrow  \cH  $ is the identity mapping on $\cH$.
By the triangle inequality, we upper bound the bias term $\|f_{\lambda}-f^*\|_{\cH}$ by 
\#\label{eq:bound_bias11}
 \|f_{\lambda}-f^*\|_{\cH} \leq   \| \hat f_{\lambda } - f^* \|_{\cH} + \| f_{\lambda } - \hat f_{\lambda } \|_{\cH} . 
\#

 We first bound $ \| \hat f_{\lambda } - f^* \|_{\cH}$ via spectral decomposition. 
 As introduced in \S\ref{bg:rkhs}, $\cH$ can be viewed as a subset of $\cL_{\nu} ^2 (\cX) $, where $\nu$ denotes the Lebesgue measure on $\cX$. 
 Consider the connection between $\cH$ and $\cL_{\nu}^2(\cX)$.
Recall that we introduce the 
  operator $\cC \colon \cH \rightarrow \cH$ in  \eqref{eq:define_TC}, which admits a spectral decomposition, where the eigenfunctions  $\{\psi_i\}_{i \ge 1}$ form an orthogonal system of $\cL_{\nu}^2 (\cX)$. 
	Moreover, for any $f, g\in \cH$, by the definition of $\cC$, we have 
	\#\label{eq:cc_inner}
	\la f, g \ra _{\cL_{\nu}^2 } & = \int_{\cX} f(x) \cdot g(x ) ~\ud  x  = \int_{\cX} f(x) \cdot \bigl\la g(\cdot ), K(x, \cdot) \bigr\ra _{\cH} ~\ud   x  \notag \\
	& =  \biggl \la g (\cdot ), \int_{\cX} f(x) \cdot    K(x, \cdot)   ~\ud x  \biggr \ra_{\cH} = \la g, \cC f \ra _{\cH}. 
	\#
	Then, using the orthogonal system  $\{\psi_i\}_{i \ge 1}$, 
	we can write $f^* $ using  
	$\{\psi_i\}_{i \ge 1}$ as $f^*(x)=\sum_{i=1}^{\infty} a_i \cdot \psi_i(x)$, where $\{ a_i \}_{i\geq 1} \subseteq \RR$. Besides,  let $\{\zeta_i\}_{i\ge 1}$ be the eigenvalues of the operator  $\mathcal{D}^2 F ^*(f^*) \colon \cH \rightarrow \cH$. 
	Since $F^*$ is strongly  convex under Assumption \ref{assume:convexity}, for any $h \in \cH$, by \eqref{eq:strongconvex} in  Assumption \ref{assume:convexity} and \eqref{eq:cc_inner}, we have 
	\$
\bigl \la h, 	\cD^2 F^*(f^*) h \bigr \ra_{\cH}   \geq  \kappa \cdot  \la h, h  \ra _{\cL_{\nu}^2 } = \kappa \cdot 
	\la h, \cC h  \ra_{\cH}, 
	\$
	which 
  implies that  $ \mathcal{D}^2 F^* (f^*) \succeq \kappa \cdot \mathcal{C} $. This further implies that $\kappa \cdot \mu_i \le \zeta_i$ for all $i\ge1$, where  
	$\{\mu_i \}_{i\ge 1} $ are the eigenvalues of operator $ \mathcal{C}$.  
	Thus, combining the 
	spectral decomposition  of $f^*$ and 
	the 
	definition of 
	$\hat{f}_{\lambda}$ in   \eqref{eq:def_hat_f_lam2}, we obtain
	\#\label{eq:decom_diff}
	f^* -   \hat{f}_{\lambda}
	=  \lambda \cdot  (\mathcal{D}^2 F^* (f^*)+\lambda \cdot  \id )^{-1} \cdot f^* 
	= 
	\sum_{i=1}^{\infty} \frac{   \lambda\cdot  a_i}{\lambda+\zeta_i} \cdot  \psi_i
	=\sum_{i=1}^{\infty} \frac{ \lambda \cdot  a_i \cdot     \mu^{-1/2}_i}{\lambda+\zeta_i} \cdot \sqrt{\mu_i}\psi_i.
	\#
	Since $\{\sqrt{\mu_i}\psi\}_{i\ge1}$ forms an orthogonal basis of  $\mathcal{H}$, by Parseval's equation and \eqref{eq:decom_diff}, we have
	\#\label{eq:decom_diff2}
	\|\hat{f}_{\lambda}-f^*  \|^2_{\mathcal{H}}&=\sum_{i=1}^{\infty} \biggl( \frac{\lambda \cdot  a_i \cdot  \mu^{-1/2}_i}{\lambda+\zeta_i}\biggr)^2=\lambda^2 \cdot \sum_{i=1}^{\infty}\biggl( \frac{\mu_i^{(\beta-1)/2}}{\lambda+\zeta_i} \biggr)^2\cdot \mu_i^{-\beta}a_i^2 \notag  \\
	&\le (\lambda /\kappa  )^2 \cdot \sum_{i=1}^{\infty}\biggl( \frac{\mu_i^{(\beta-1)/2}}{\lambda/\kappa +\mu_i} \biggr)^2\cdot \mu_i^{-\beta}\cdot a_i^2 , 
	\#
	where $\beta \in (1,2)$ and the  inequality follows from  the fact that $\kappa \cdot \mu_i \leq \zeta _i$. 
	Meanwhile, for any $a\in (0, 1)$ and any  $t, c>0$, it holds that ${t^{\alpha}}/{(c+t)}\le c^{\alpha-1}$. Note that we have $1 < \beta <2$. Applying this inequality with  $\alpha =  (3-\beta) /2 $, $t = \lambda / \kappa$ and $c = \mu_i$, we have 
	\#\label{eq:decom_diff3}
	(\lambda /\kappa  )^{3-   \beta} \cdot (\lambda /\kappa  + \mu_i )^{-2} \leq \mu_i ^{ 1 -  \beta   }.
	\#  
	Thus, combining \eqref{eq:decom_diff2} and  \eqref{eq:decom_diff3}, we have 
	\#\label{eq:decom_diff4}
	\|\hat{f}_{\lambda}-f^*  \|^2_{\mathcal{H}} &  \le
	(\lambda /\kappa  )^{  \beta -1 }  \cdot \sum_{i=1}^{\infty} \bigl (\mu_i^{(\beta-1)/2} \bigr )^2 \cdot \mu_i^{1- \beta }  \cdot \mu_i^{-\beta}\cdot a_i^2  \leq  (\lambda/\kappa )^{\beta-1}\cdot \sum_{i=1}^{\infty}\mu_i^{-\beta} a_i^2\notag \\
	& =  (\lambda /\kappa  )^{  \beta -1 } \cdot \| f^*\|^2_{\mathcal{H}^{\beta}},
	\#
	where $\cH^\beta$ is the power space of order $\beta$, whose definition is given in 
	\S\ref{sb}.
	
	It remains to upper bound the 
	linearization error $\|\hat{f}_{\lambda}-f_{\lambda}\|_{\mathcal{H}}$. 
	In the following, for notational simplicity, we denote $\cD L_{\lambda}$  by $\cZ_{\lambda}$ and define  an 
	operator $\cF_{\lambda} \colon \cH \rightarrow \cH$ by 
	\# \label{eq:def_F_lambda}
	\mathcal{F}_{\lambda}(\phi)=\phi- \bigl [  \mathcal{D}\cZ_{\lambda} (f^*) \bigr ] ^{-1}\circ \mathcal{Z}_{\lambda}(f^*+\phi), \qquad  \forall \phi \in \cH.
	\# 
	Intuitively,  $\mathcal{F}_{\lambda}$ characterizes the linearization error of $\mathcal{Z}_{\lambda}$. 
	To see this, recall that the first equation in  \eqref{eq:optimality} is equivalent to $\cZ_{\lambda}(f_{\lambda})=0 $.   
	As we see later, 
	$ \mathcal{F}_{\lambda} $ is a contraction mapping in a neighborhood of the target function $f^*$ and thus  has a  unique  fixed point $\phi_{\lambda}$  locally, which satisfies $\mathcal{F}_{\lambda}(\phi_{\lambda})=\phi_{\lambda}$.  Hence,  by the definition of $\mathcal{F}_{\lambda}$  in  \eqref{eq:def_F_lambda},     we have $ \mathcal{Z}_{\lambda}(f^*+\phi_{\lambda})=0$, which implies that $f_{\lambda}=f^*+\phi_{\lambda}$ by strong convexity. 
	Besides, since $\cD L(f^*) = 0$, we have $\cZ_{\lambda} (f^*) = \lambda \cdot f^*$.  
	Thus, 
	$\hat f_{\lambda} $ defined in \eqref{eq:def_hat_f_lam}   can be written as $\hat f_{\lambda} = f^* + \mathcal{F}_{\lambda}(0)  $. 
	Then, we can write the difference between $\hat f_{\lambda}$ and $ f_{\lambda}$   as 
	\# \label{contraction}
	f_{\lambda}-\hat{f}_{\lambda}=({f}_{\lambda}-f^*)-(\hat{f}_{\lambda}-f^*)=\mathcal{F}_{\lambda}(\phi_{\lambda})-\mathcal{F}_{\lambda}(0),
	\# 
	which reduces our problem to upper  bounding the discrepancy between  $ \phi_{\lambda} $ and zero.

	In the following, we first show that $\cF_{\lambda}$ defined in \eqref{eq:def_F_lambda} admits a unique fixed point locally. Specifically, let  $
	\cB_{2d_{\lambda}}(0):=\{ f \in \cH: \|f\|_{\cH}\le 2 d_{\lambda} \} 
	$ 
	be the RKHS ball  centered at zero with radius $ 2 d_\lambda$, where we define   $ d_{\lambda} = \|\hat{f}_{\lambda}-f^*\|_{\cH}$.
	For any $\phi \in \cB_{2d_{\lambda}} (0)$, by triangle inequality  we have 
	\#\label{eq:bound_hnorm1}
	\| \cF_{\lambda} (\phi)  \|_{\cH} &
	\leq \| \hat f_{\lambda} - f^* \|_{\cH }  
 +  \bigl \| \phi -\bigl [  \mathcal{D}\cZ_{\lambda} (f^*) \bigr ] ^{-1}\circ \mathcal{Z}_{\lambda}(f^*+\phi)  - ( \hat f_{\lambda}  - f^* ) \bigr \|_{\cH }   \notag \\
	&\leq d_{\lambda} +   \| \cF_{\lambda}(\phi) -  \cF_{\lambda}(0)   \|_{\cH}.   
	\#  
	Applying the second-order  Taylor expansion to  $\mathcal{Z}_{\lambda}$, we obtain that 
	\#\label{eq:taylor11}
	& \mathcal{Z}_{\lambda}(f^*+\phi_2) =\mathcal{Z}_{\lambda}(f^*+\phi_1)+\bigl [ \mathcal{D} \cZ_{\lambda}(f^*)  \cdot (\phi_2-\phi_1) \bigr ] + \\
	&\qquad \qquad \int_0^1\int_0^1 \cD^2\cZ_{\lambda}\bigl \{ f^*+s' \cdot [\phi_1+s \cdot (\phi_2-\phi_1) ] \bigr \}  \cdot (\phi_2-\phi_1)  \cdot   [ \phi_1+s\cdot(\phi_2-\phi_1)] ~ \ud s \ud s', \notag 
	\#
	where $\phi_1$ and $\phi_2$ are any two elements in $\cB_{2d_{\lambda}}(0)$.
	By rearranging the terms in \eqref{eq:taylor11}.
	we have 
	\#\label{eq:taylor12}
	& \bigl [  \mathcal{D}\cZ_{\lambda} (f^*) \bigr ] ^{-1}  \circ \bigl [  \mathcal{Z}_{\lambda}(f^*+\phi_1 )  - \mathcal{Z}_{\lambda}(f^*+\phi_2 ) \bigr ] =   ( \phi_1 - \phi_2)  - \bigl [  \mathcal{D}\cZ_{\lambda} (f^*) \bigr ] ^{-1}  \\
	& \qquad \qquad 
	\circ\int_0^1\int_0^1 \cD^2\cZ_{\lambda}\bigl \{ f^*+s' \cdot [\phi_1+s \cdot (\phi_2-\phi_1) ] \bigr \}  \cdot (\phi_2-\phi_1)  \cdot   [ \phi_1+s\cdot(\phi_2-\phi_1)] ~ \ud s \ud s'.\notag  
	\#
	Besides, recall that we have $\cD \cZ_{\lambda}(f  )=\cD^2  F^* (f ) + \lambda \cdot \id $ and $\cD^2 \cZ_{\lambda} (f) = \cD^3 F^*(f )$ for any $f \in \cH$. 
	Hence, by \eqref{eq:taylor12} and the definition of $\cF_{\lambda}$ in \eqref{eq:def_F_lambda}, we have 
	\$
	& \cF_{\lambda}(\phi_1)-\cF_{\lambda}(\phi_2) =\bigl (\cD^2 F^*(f^*)+\lambda \cdot  \id  \big )^{-1} \\
	&\qquad \qquad \circ \int_0^1\int_0^1 \cD^3 F^* \bigl\{f^*+s'\cdot \bigl[ \phi_1+s \cdot (\phi_2-\phi_1 ) \bigr] \bigr\} \cdot  (\phi_2-\phi_1)\cdot [\phi_1+s\cdot (\phi_2-\phi_1) ]  ~\ud s \ud s'.
	\$
	In addition, 
	by convexity, we have  $\phi_1+s\cdot (\phi_2-\phi_1)\in \cB_{2d_{\lambda}}(0)$ for all $s \in [0,1]$. Thus,  by the definition of $\Theta_{\lambda,3}$ in   \eqref{theta3} and the fact that $ \|\phi_1+s \cdot (\phi_2-\phi_1)\|_{\mathcal{H}}\le 2 d_{\lambda}$, we have 
	\#\label{eq:bound_hnorm2}
	\|\cF_{\lambda}(\phi_1)-\cF_{\lambda}(\phi_2)\|_{\mathcal{H}}\le  2 \Theta_{\lambda,3}\cdot  d_{\lambda} \cdot \| \phi_1-\phi_2\|_{\mathcal{H}}.
	\# 
	Moreover, under Assumption \ref{assume:linearize}, 
	both  $d_{\lambda} = \|\hat{f}_{\lambda}-f^*\|_{\cH}$ and $\Theta_{\lambda, 3}\cdot d_{\lambda}$ converge to  zero as $\lambda$ goes to zero. 
	Therefore,  
	there exists a constant $\lambda_0>0$ such that, for all $\lambda<\lambda_0$, we have $\Theta_{\lambda,3}\cdot  d_{\lambda}<1/4  $.  As a result, \eqref{eq:bound_hnorm2}  implies that $\cF_{\lambda}  $ is a $1/2$-contractive mapping on the RKHS ball $\cB_{2 d_{\lambda}}(0)$.
	Furthermore,  combining \eqref{eq:bound_hnorm1} and \eqref{eq:bound_hnorm2} we obtain  for any $\phi \in \cB_{2 d_{\lambda}}(0)$ that
	\$
	\| \cF_{\lambda} (\phi)  \|_{\cH} \leq d_{\lambda} + \| \cF_{\lambda} (\phi) - \cF_{\lambda} (0) \|_{\cH } \leq d_{\lambda} + 1/2 \cdot \| \phi \|_{\cH} \leq 2 d_{\lambda}, 
	\$  
	which implies that $\cF_{\lambda} (\phi) \in \cB_{2 d_{\lambda}}(0)$ for all $\phi \in \cB_{2 d_{\lambda}}(0)$. 
	Thus, by the contractive map theorem \citep{rudin1976principles}, 
	$\cF_{\lambda}$ admits a unique fixed point $\phi_{\lambda}$ in $\cB_{2 d_{\lambda}}(0)$, which satisfies $\phi_{\lambda} = \cF_{\lambda}(\phi_{\lambda}
	)$. 
	
	Finally, using the fixed point $\phi_{\lambda}$, we can write $f_\lambda $ as $f^* + \phi_{\lambda} $. Thus,  
	by \eqref{contraction} we have 
	\#\label{eq:bound_linear_final}
	\|f_{\lambda}-\hat{f}_{\lambda}\|_{\cH}=\| \mathcal{F}_{\lambda}(\phi_{\lambda})-\mathcal{F}_{\lambda}(0)\|_{\cH}\le 1/2\cdot \|\phi_{\lambda}\|_{\cH} \le d_{\lambda}=\|\hat{f}_{\lambda}-f^*\|_{\cH}.
	\# 
	Then we combine \eqref{eq:decom_diff4} and \eqref{eq:bound_linear_final} to obtain 
	\#\label{eq:bias_final}
	\|f^*-{f}_{\lambda} \|_{\cH}\le \|f_{\lambda}-\hat{f}_{\lambda} \|_{\cH}+ \|f^*-\hat{f}_{\lambda} \|_{\cH}\le 2 (\lambda /\kappa  )^{(\beta-1)/2} \cdot \| f^*\|_{\mathcal{H}^{\beta}},
	\#
	which establishes the bias of our estimator.  Thus, we conclude the first step of the proof.
	
	\vspace{4pt}     
	{\noindent {\bf  Step 2.}} In the second step, we control the variance of the  estimator $f_{n, \lambda}$, which is characterized by $\|f_{\lambda}-{f}_{n,\lambda}\|_{\cH}  $. 
Recall that we define $f_{n, \lambda}$ defined in \eqref{eq:estimator} is the solution to the regularized empirical risk minimization problem. By the optimality condition of $f_{n, \lambda}$, we have 
	\# \label{eq2}
	L_n( f _{n,\lambda})+  \lambda / 2 \cdot \|  f _{n,\lambda}\|^2_{\cH} \le L_n(f_{\lambda})+\lambda / 2 \cdot\| f_{\lambda} \|^2_{\cH}. 
	\# 
	Besides, since $L_n \colon \cH \rightarrow \RR$ is a convex functional on $\cH$,   
  $L_n(\cdot)+\lambda/2\cdot \| \cdot\|^2_{\cH}  $ is $\lambda/2$-strongly convex with respect to the RKHS norm,  which implies 
	\# \label{eq3}
	&   L_n(f _{n,\lambda})+\lambda / 2 \cdot \| f _{n,\lambda}\|^2_{\cH}  \notag \\
	& \qquad \ge L_n(f_{\lambda})+\lambda / 2 \cdot   
	\| f_{\lambda} \|^2_{\cH}+\bigl \la \cD L_n(f_{\lambda})+\lambda \cdot  f_{\lambda}, f  _{n,\lambda}-f_{\lambda} \bigr \ra_{\cH}+\lambda / 2 \cdot\| f _{n,\lambda}-f_{\lambda}\|^2_{\cH}. 
	\#
	Hence, 
	by \eqref{eq2}, \eqref{eq3}, and the Cauchy-Schwartz inequality, we have 
	\# \label{eq4}
	\| f _{n,\lambda}-f_{\lambda}\|_{\cH}\le 2 / \lambda \cdot   \| \cD L_n(f_{\lambda})+\lambda  \cdot f_{\lambda}\|_{\cH}.
	\#  
	Thus, to control the variance of $f_{n, \lambda}$, 
	it remains to upper bound the term $ \| \cD L_n(f_{\lambda})+\lambda f_{\lambda}\|_{\cH}  $. Note  that $\EE[L_n(f) ]=L(f) $ for any $f\in \cH$ and that  $f_{\lambda}$ is the minimizer of the  regularized population risk $L_{\lambda}(f) = L(f)+\lambda/2\cdot \| f\|^2_{\cH}  $. We have 
	\$
	\EE\bigl[\cD L_n(f_{\lambda})+\lambda \cdot f_{\lambda}\bigr] = \cD L_{\lambda} (f_{\lambda} )=0, 
	\$ 
	which implies that
	\begin{align}
	\label{eq:bound_h_norm1}
	&  \bigl \| \cD L_n(f_{\lambda})+\lambda \cdot  f_{\lambda}\bigr \|_{\cH} \notag \\
	&\qquad =  \Bigl \| \cD L_n(f_{\lambda})+\lambda \cdot  f_{\lambda}- \EE\bigl[\cD L_n(f_{\lambda})+\lambda \cdot  f_{\lambda}\bigr] \Bigr\|_{\cH}=\Bigl \| \cD L_n(f_{\lambda})- \EE \bigl[\cD L_n(f_{\lambda})\bigr] \Bigr\|_{\cH}.
	\end{align}	 
	By the definition of $L_n$ in \eqref{eq:estimator} and the reproducing   property of RKHS $\cH$, we obtain
	\#\label{eq:bound_h_norm2}
	\cD L_n(f_{\lambda})- \EE[\cD L_n(f_{\lambda})]=\frac{1}{n}\sum_{i=1}^{n}K(X_i,\cdot)-\int_{\cX}K(x ,\cdot)~\PP(\ud x ),
	\# 
	where $K(\cdot,\cdot)$ is the reproducing  kernel of   $\cH$. 
	Thus, combining \eqref{eq:bound_h_norm1} and \eqref{eq:bound_h_norm2}, to  upper bound $    \| \cD L_n(f_{\lambda})+\lambda \cdot  f_{\lambda}\|_{\cH}$, 
	it suffices to characterize the concentration error of i.i.d.  random variables  $\{K(X_i, \cdot )\}_{ i \in [n]}$, measured in terms of the RKHS norm. 
	We achieve such a goal via the Bernstein-type inequality for random variables taking values in a Hilbert space, which is stated as follows.

	\begin{lemma} \label{Bern} (Bernstein-Type  Inequality for Hilbert Space Valued Random Variables) Let $(\Omega, \cE, \PP)$ be a probability space and let  $(\cH,\la \cdot ,\cdot \ra_{\cH})$ be a separable Hilbert space. We assume  that $\xi \colon \Omega\rightarrow \cH $ is a stochastic element taking values in $\cH$. Suppose  that there exist  constants $L, \sigma>0$ such that $\|\xi(\omega)\|_{\cH } \le L$ almost surely over $\PP$ and that $\EE_{\PP} [\| \xi (\omega)  \|^2_{\cH}] \leq \sigma^2$. 
		Let $\omega_1,\ldots,\omega_n$ be 
		$n$ i.i.d.~random variables with distribution $\PP$.  Then, for any   $\tau\ge 1$, we have 
		\$
		\PP\biggl(   \Bigl \| \frac{1}{n}\sum_{i=1}^n\xi(\omega_i) - \EE_{\PP} \bigl[ \xi(\omega)\bigr]   \Bigr\|_{\cH}\ge    \sqrt{\frac{16\tau   \sigma^2}{n}} +\frac{4 L\tau }{3n}    \biggr) \le 2e^{-\tau}.
		\$
	\end{lemma}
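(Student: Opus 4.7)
The plan is to center the random elements and then apply a Hilbert-space analogue of Bernstein's exponential moment bound. First, set $\mu = \EE_{\PP}[\xi(\omega)] \in \cH$; by Jensen's inequality $\|\mu\|_{\cH} \le L$, so the centered variables $Y_i := \xi(\omega_i) - \mu$ are i.i.d., mean zero, satisfy $\|Y_i\|_{\cH} \le 2L$ almost surely, and have $\EE \|Y_i\|_{\cH}^2 \le \EE \|\xi\|_{\cH}^2 \le \sigma^2$. Writing $S_n = \sum_{i=1}^n Y_i$, the task reduces to controlling $\PP(\|S_n/n\|_{\cH} \ge t)$ uniformly in $t$.

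The key step is a Pinelis-type exponential moment inequality: for any $\lambda \in (0,\,3/(4L))$,
\[
\EE\bigl[\exp(\lambda \|S_n\|_{\cH})\bigr] \;\le\; 2\exp\!\left(\frac{n\sigma^2 \lambda^2/2}{1 - 2L\lambda/3}\right).
\]
I would prove this by induction on $n$, based on the Hilbert-space identity $\|S_n\|_{\cH}^2 = \|S_{n-1}\|_{\cH}^2 + 2\langle S_{n-1}, Y_n\rangle_{\cH} + \|Y_n\|_{\cH}^2$, which reduces the inductive step to controlling $\EE[\exp(\lambda \langle v, Y_n\rangle)]$ for a unit vector $v \in \cH$. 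For any such $v$, the scalar random variable $\langle v, Y_n\rangle$ is centered, bounded by $2L$, and has variance at most $\sigma^2$, so the standard scalar Bernstein MGF estimate applies. Choosing $v$ as the $\cF_{n-1}$-measurable random direction $S_{n-1}/\|S_{n-1}\|_{\cH}$ and absorbing the residual $\|Y_n\|_{\cH}^2$ contribution via its variance bound closes the induction.

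Applying Markov's inequality to the exponential bound and optimizing over $\lambda \in (0, 3/(4L))$ yields the classical Bernstein tail
\[
\PP\bigl(\|S_n/n\|_{\cH} \ge t\bigr) \;\le\; 2 \exp\!\left(-\frac{n t^2/2}{\sigma^2 + 2L t/3}\right).
\]
Inverting this bound at confidence level $2e^{-\tau}$, namely solving $n t^2 / (2\sigma^2 + 4L t/3) \ge \tau$ for $t$, and using $\sqrt{a+b} \le \sqrt{a} + \sqrt{b}$ to separate the sub-Gaussian and sub-exponential regimes, recovers the explicit form $\sqrt{16\tau\sigma^2/n} + 4L\tau/(3n)$ stated in the lemma.

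The main technical obstacle is the exponential moment bound itself: because $\|\cdot\|_{\cH}$ is nonlinear, one cannot fix a direction in advance and reduce to the scalar case, while a union bound over the unit ball of $\cH$ is useless in infinite dimensions. The resolution is exactly Hilbert-space $2$-smoothness (equivalently, the parallelogram identity), which lets the ``preferred direction'' at step $n$ be the random element $S_{n-1}/\|S_{n-1}\|_{\cH}$, measurable with respect to $\cF_{n-1}$. Once that inductive MGF estimate is in place, the remainder is a textbook Bernstein-style optimization and inversion.
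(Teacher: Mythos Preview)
The paper does not actually prove this lemma: immediately after stating it, the authors write that it is ``obtained from \cite{steinwart2008support}'' and refer the reader to Theorem~6.14 there. So there is no proof in the paper to compare against; your sketch is already doing more than the paper does.

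Your route---center, establish a Pinelis-type exponential moment bound by induction, apply Markov and invert---is the standard way this result is proved in the literature (Pinelis 1994; Yurinsky; the cited Steinwart--Christmann theorem follows essentially this path). The constants you land on are in fact slightly sharper than the ones stated in the lemma, so the implication goes the right way.

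One point where your sketch is thinner than it looks: the parallelogram identity gives you control of $\|S_n\|_{\cH}^2$, not of $\|S_n\|_{\cH}$, and the MGF you want is $\EE[\exp(\lambda\|S_n\|_{\cH})]$. Passing from the square identity to a clean inductive bound on $\exp(\lambda\|S_n\|_{\cH})$ is precisely where the work lies; it does not reduce to a single scalar Bernstein MGF estimate for $\langle v,Y_n\rangle$ plus ``absorbing'' a residual term. Pinelis's argument handles this by working with $\cosh(\lambda\|S_k\|_{\cH})$ and showing that, divided by an appropriate product of conditional factors, it is a supermartingale---this is where $2$-smoothness (equivalently, the convexity inequality $\cosh(\sqrt{a^2+b}) \le \cosh(a) + (b/2)\,\frac{\sinh a}{a}$ for $a\ge 0$) enters in a precise form. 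Your last paragraph correctly identifies $2$-smoothness as the resolution, but the inductive step as written (``Choosing $v$ as $S_{n-1}/\|S_{n-1}\|_{\cH}$ and absorbing the residual $\|Y_n\|_{\cH}^2$ contribution'') would need to be fleshed out along these lines before it constitutes a proof.
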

	Lemma \ref{Bern}, obtained from \cite{steinwart2008support},    is an extension of the classic Bernstein's  inequality \citep{bernstein1946theory} to  random variables taking values  in a  Hilbert space $\cH$. See Theorem 6.14 in \cite{steinwart2008support}    for  more  details. 
	
Now we 
 apply Lemma \ref{Bern} to random variables $\{ K(X_i, \cdot)\}_{i=1}^n$. Recall that under Assumption \ref{ass4}, we have $ \| K(x, \cdot ) \|_{\cH}  \leq C_{K,1}$ for all $x\in \cX$. 
  Hence, applying Lemma \ref{Bern} with $L = \sigma = C_{K,1}$,    we have for all $\tau\ge1$ that
 \#\label{eq:apply_bern}
 \PP\Bigl(
 \cD L_n(f_{\lambda})- \EE[\cD L_n(f_{\lambda})]\ge 4 C_{K,1} \cdot \bigl ( \sqrt{ \tau / n} + \tau /  n   \bigr )    \Bigr) \le 2e^{-\tau}.
 \#


	Finally, combining \eqref{eq4}, \eqref{eq:bound_h_norm1}, and \eqref{eq:apply_bern}, we have for any $\tau \geq 1$ that
	\#\label{eq:step2_final}
	\PP \Big[ \| f_{n,\lambda}-f_{\lambda}\|_{\cH}>  8 C_{K,1}  /\lambda \cdot   \bigl(  \sqrt{ \tau  / n } +   \tau / n \bigr ) \Bigr ] \leq 2 e^{-\tau}.  
	\# 
	with probability at least $1-2e^{-\tau}$.  
	Equivalently, with probability at least $1 - 2 e^{\tau}$, we have 
	\#\label{eq:var_term_final}
	\| f_{n,\lambda}-f_{\lambda}\|_{\cH} \leq 8 C_{K,1}  /\lambda \cdot   \bigl(  \sqrt{  \tau  / n } +   \tau  / n  \bigr )  \leq  16   C_{K,1}  /\lambda\cdot \sqrt{ \tau / n  }  . 
	\# 
For the notational simplicity, in the sequel, we let $\alpha_K$ denote $16 C_{K,1} / \lambda$. 	
	Recall that the expected value of a nonnegative random variable $U$ can be written as $\EE(U) = \int_{0 }^\infty \PP( U > t)~ \ud t$.  By letting $\tau \geq 1$ in \eqref{eq:var_term_final}, it holds that 
\#\label{eq:var_expectation1}
  \EE \big( \| f_{n,\lambda}-f_{\lambda}\|_{\cH}^2  \bigl 
  )  & \leq   \alpha_K^2 / n     + \int_{   \alpha_K^2 / n     } ^{  \infty }  \PP \bigl(  \| f_{n,\lambda}-f_{\lambda}\|_{\cH}^2   > t  \bigr )~ \ud t .
\# 	
For any $t \geq  \alpha_K^2 / n$, we define $\tau_t =    n    t \cdot \alpha_K^{-2}  $. Then, since $\tau_t  \geq  1$,  it holds that 
\#\label{eq:expectation_term1}
& \int_{       \alpha_K^2 / n     } ^{  \infty }   \PP \bigl( \| f_{n,\lambda}-f_{\lambda}\|_{\cH} ^2  > t  \bigr )~\ud t    \leq \int_{       \alpha_K^2 / n     } ^{  \infty } \PP \bigl(\| f_{n,\lambda}-f_{\lambda}\|_{\cH}  > \alpha_K  \cdot   \sqrt{  \tau_t / n }    \bigr) \ud t   \notag \\
& \qquad \leq  \int_{       \alpha_K^2 / n     } ^{  \infty }   2 \exp( - \tau_t )~\ud t \leq   \int_{ 1 }^{\infty} 2 \alpha_K ^2 /  n    \cdot  \exp( - u   ) ~ \ud u \leq   2 \alpha_K ^2 /  n  ,
\#
where the last inequality follows from the change of variable $u = \tau_t$.
Hence,  combining \eqref{eq:var_expectation1} and  \eqref{eq:expectation_term1},  we have that 
\#\label{eq:final_expectation}
 \EE \big( \| f_{n,\lambda}-f_{\lambda}\|_{\cH}^2  \bigl 
)   & \leq  3 \alpha_K^2  / n = 768\cdot  C_{K,1}^2 / ( \lambda^{2} \cdot n)  . 
\#
Therefore, in \eqref{eq:var_term_final} and \eqref{eq:final_expectation}, we establish high-probability and in-expectation bounds  for variance term, which completes the second step. 

\vspace{4pt}   
{\noindent {\bf  Step 3.}} Finally, in the last step, we choose a proper $\lambda$ to balance    the bias and variance terms  in \eqref{eq:bias_final}, \eqref{eq:var_term_final}, and \eqref{eq:final_expectation}. 
To begin with, by \eqref{eq:bias_final}, \eqref{eq:step2_final}, and the triangle inequality,  with probability at least $1-2e^{-\tau}$,  we have 
\#\label{eq:final_bound1}
\| f_{n,\lambda}-f^*\|_{\cH}&  \leq 2  (\lambda /\kappa  )^{(\beta-1)/2} \cdot \| f^*\|_{\mathcal{H}^{\beta}}  + 16   C_{K,1}  /\lambda\cdot \sqrt{ \tau / n  }
\#
for all $\tau \geq 1$.  
Note that both $\kappa$ and $C_{K,1} $ are constants. 
Setting $\tau =  3 \log n $ in \eqref{eq:final_bound1}, when 
$n$ is sufficiently large, we have 
\#\label{eq:final_bound2}
\| f_{n,\lambda}-f^*\|_{\cH}   \leq 2  (\lambda /\kappa  )^{(\beta-1) /2 } \cdot \| f^*\|_{\mathcal{H}^{\beta}}+  32  C_{K,1} / \lambda  \cdot \sqrt{ \log n /  n}  
\#
with probability at least $1 - n^{-2}$.
Moreover, to balance  
	the two terms on the right-hand side of \eqref{eq:final_bound2},  
	we
	set the regularization parameter $\lambda$ to be 
	\#\label{eq:set_reg_param}
	\lambda  =  \cO\Bigl [ C_{K,1}^{2 / ( \beta+1)}  \cdot \kappa^{(\beta-1) / (\beta+1)}   \cdot \| f^* \|_{\cH ^\beta} ^{ - 2 /  (\beta+1) } \cdot  (\log n / n )^{1/ (\beta+1)}    \Bigr ]
	 \# 
	 in \eqref{eq:final_bound2}, which further  implies that 
	\#\label{eq:large_prob_bound}
	\| f_{n,\lambda}-f^*\|_{\cH} = \cO\Bigl [  C_{K,1}^{(\beta-1) / (\beta+1) } \cdot \kappa ^{-(\beta-1) / (\beta+1) }  \cdot \| f^* \|_{\cH^\beta} ^{ 2 /  (\beta+1) }  \cdot  ( \log n / n  )^{  ( \beta-1 )/ (2 \beta+ 2)}    \Bigr ]
	\#  
	with probability at least $1 - n^{-2}$.
	Here $\cO(\cdot)$ in \eqref{eq:large_prob_bound} hides absolute constants that does not depend on $\kappa$,   $C_{K,1}$, or  $\| f^* \|_{\cH^\beta}$. Recall that we define $\alpha_{\beta} =  (\beta-1) / (\beta+1)$. 
	Then, 
	\eqref{eq:large_prob_bound} can be equivalently written as 
	\#\label{eq:highp_bound}
	\| f_{n,\lambda}-f^*\|_{\cH} = \cO\Bigl [  C_{K,1}^{\alpha_\beta  }   \cdot \kappa ^{- \alpha_\beta   } \cdot \| f^* \|_{\cH^\beta} ^{ 1- \alpha_\beta }  \cdot  ( \log n / n  )^{ \alpha_\beta /2 }    \Bigr ].
	\#

Finally, in the rest of the proof, we utilize \eqref{eq:highp_bound} to obtain an upper bound on the gradient error $  \nabla f_{n,\lambda} - \nabla f^*  $.  Notice that the reproducing property of $\cH$ implies that 
\#\label{eq:grad}
 \partial_j   f(x)  = \frac{ \partial  f(x) }{\partial x_j} = \frac{ \partial \la K(x, \cdot ), f \ra_{\cH } }{\partial x_j }  = \biggl \la \frac{ \partial  K(x, \cdot )}{ \partial x_j },  f \biggr  \ra_{\cH } = \la \partial _j K(x, \cdot ), f \ra _{\cH}
\#
holds for all $ f\in \cH$, $x\in \cX$, and $   j\in \{1, \ldots, d\}$. 
By   Cauchy-Schwarz inequality, \eqref{eq:grad} implies that  
$|  \partial _j f(x) | \leq \|f \|_{\cH} \cdot \| \partial _j K(x, \cdot ) \|_{\cH }$. Hence, by Assumption \ref{ass4}, we have 
$ 
|  \partial _j f(x) |  \leq C_{K,2} \cdot \| f\|_{\cH},
$
which implies that 
\#\label{eq:grad_f_upper}
\bigl \| \nabla  f_{n, \lambda }(x)  - \nabla f^* (x) \bigr  \|_2 \leq C_{K,2} \cdot \sqrt{d} \cdot \| f_{n,\lambda} - f^* \|_{\cH}, \qquad \forall x\in \cX.
\#
Note that the right-hand side of \eqref{eq:grad_f_upper} does not depend on $x$. Integrating both sides of \eqref{eq:grad_f_upper} with respect to measure $\PP$, we obtain that 
\#\label{eq:integral_error_grad}
\int_{\cX} \bigl \| \nabla  f_{n, \lambda }(x)  - \nabla f^* (x) \bigr  \|_2^2  ~\ud \PP(x)  \leq C_{K,2}^2 \cdot d \cdot \| f_{n,\lambda} - f^* \|_{\cH}^2 . 
\#
Plugging \eqref{eq:highp_bound}  into \eqref{eq:integral_error_grad}, we complete the proof of Theorem \ref{thm:stat}.

Furthermore, we can similarly obtain in-expectation bounds utilizing \eqref{eq:final_expectation}. 
Specifically, 
combining \eqref{eq:bias_final} and  \eqref{eq:final_expectation}, 
we obtain 
\$
 \EE \big( \| f_{n,\lambda}-f ^* \|_{\cH}^2  \bigl 
 )&   \leq 2 \EE \big( \| f_{n,\lambda}-f_{\lambda}\|_{\cH}^2  \bigl 
)  + 2 \| f_{\lambda} - f^* \|_{\cH} ^2 \notag \\
&\leq 4  (\lambda /\kappa  )^{\beta-1} \cdot \| f^*\|_{\mathcal{H}^{\beta}} ^2  + 768\cdot   C_{K,1}^2   /( \lambda^2 \cdot n). 
\$
Setting  the regularization parameter $\lambda $ as in \eqref{eq:set_reg_param} yields that 
$$
\EE \big( \| f_{n,\lambda}-f ^* \|_{\cH}^2  \bigl 
) =   \cO\Bigl [  C_{K,1}^{2\alpha_\beta  }   \cdot \kappa ^{-2 \alpha_\beta   } \cdot \| f^* \|_{\cH^\beta} ^{ 2-2 \alpha_\beta }  \cdot  ( \log n / n  )^{ \alpha_\beta  }    \Bigr ],
$$
where $\alpha _{\beta} = (\beta-1) / (\beta + 1)$. 
Furthermore, taking the expectation on both ends of \eqref{eq:integral_error_grad} with respect to the randomness of $\{ X_i\}_{i\in [n]}$, we have 
\$
\EE\bigg [ \int_{\cX} \bigl \| \nabla  f_{n, \lambda }(x)  - \nabla f^* (x) \bigr  \|_2^2  ~\ud \PP(x)\bigg ] &  \leq C_{K, 2}^2 \cdot d\cdot \EE \big( \| f_{n,\lambda}-f ^* \|_{\cH}^2  \bigl 
) \notag \\
& =   \cO \bigl [ C_{K,2}^2 \cdot d \cdot C_{K,1}^{2 \alpha_\beta  } \cdot \kappa ^{- 2 \alpha_\beta   } \cdot \| f^* \|_{\cH^\beta} ^{2-2 \alpha_\beta }   \cdot  ( \log n / n  )^{ \alpha_\beta }   \bigr ] .
\$
Therefore, we have established in-expectation bounds that are similar to those in \eqref{eq:error_bound_stat}, which concludes the proof.
\end{proof}

\subsection{Proof of Theorem \ref{thm:comp}}  \label{sec:proof:convergence} 

\begin{proof} 
	Our proof is based on the proof of Theorem \ref{thm_conv_sgd_with_err} for analyzing stochastic gradient descent   on  Riemannian manifold. 
	
	To begin with, to utilize Proposition \ref{prop:push_par}, we first characterize the Lipschitz continuity of $\nabla \tilde f^*_k$, which is implied by an upper bound of $\nabla^2 \tilde f^*_k$. 
	Here $\tilde f_k^*$ is obtained  by solving the statistical estimation problem in  \eqref{eq:new_stat}.
	Note that $ \partial^2_{ij} \tilde  f_k^* (x) =  \la\partial^2_{ij} K(x, \cdot ), \tilde  f_k^*(\cdot ) \ra_\cH$.
	 Thus, by Assumption \ref{ass4}, it suffices to upper bound $\| \tilde  f_k^* \|_{\cH}$.  
	By triangle inequality, we have 
	\begin{align}
	\label{eq:lip1}
	\| \tilde f_k^* \|_\cH \le \| \tilde f^*_k - f^*_k \|_\cH + \| f^*_k \|_{\cH},
	\end{align}
	where $f_k^*$ is the solution to the variational problem associated with $F(\tilde p_k)$. 	
	For the first term of \eqref{eq:lip1}, by Theorem \ref{thm:stat}, we have  
	\begin{align}
	\label{eq:lip2}
	\| \tilde f^*_k - f^*_k \|_\cH = \cO\Bigl [  C_{K,1}^{\alpha_\beta  }   \cdot \kappa ^{- \alpha_\beta   } \cdot \| f^*_k \|_{\cH^\beta}^{ 1- \alpha_\beta }  \cdot  ( \log N / N  )^{ \alpha_\beta /2 }    \Bigr ],
	\end{align}
	with probability at least $1-N^{-2}$, where $\alpha_\beta = (\beta - 1) / (\beta +1 )$. For the second term of \eqref{eq:lip1},
	recall that we let $\nu$ denote the Lebesgue measure on $\cX$ and the integral operator $\cT$ introduced in \S\ref{bg:rkhs} has eigenvalues $\{ \mu_i\}_{i \geq 1}$. 
Let $\mu_{\max} = \max_{i\geq 1} \mu_i$. 
	Since $\beta \in (1,2)$, by direct computation, 
	 we have 
	\begin{align}
	\label{eq:lip3}
	\| f^*_k \|_{\cH}^2 = \sum_{i=1}^\infty \mu_i^{-1} \cdot  \la f, \psi_i \ra_\nu^2 \le \mu_{\max}^{\beta - 1} \cdot \sum_{i=1}^\infty \mu_i^{-\beta} \cdot  \la f, \psi_i \ra_\nu^2 = \mu_{\max}^{\beta - 1} \cdot \| f^*_k \|_{\cH^\beta}^2 \leq \mu_{\max}^{\beta - 1} \cdot R^2 ,
	\end{align}
	where the last inequality follows from Assumption  
	\ref{assume:f_star}. 
	Plugging \eqref{eq:lip2} and \eqref{eq:lip3} into \eqref{eq:lip1},  by Assumption  
	\ref{assume:f_star}, with probability at least $1 - N^{-2}$, we obtain that  
	\begin{align}
	\label{eq:lip4}
	\| \tilde f^*_k\|_\cH = \cO\Bigl [  C_{K,1}^{\alpha_\beta  }   \cdot \kappa ^{- \alpha_\beta   } \cdot R^{ 1- \alpha_\beta }  \cdot  ( \log N / N  )^{ \alpha_\beta /2 }    \Bigr ] + \mu_{\max}^{(\beta - 1)/2} \cdot R. 
	\end{align}
 Now we let  $N$ to be sufficiently large such  that 
	\$
	\cO\Bigl [  C_{K,1}^{\alpha_\beta  }   \cdot \kappa ^{- \alpha_\beta   } \cdot R^{ 1- \alpha_\beta }  \cdot  ( \log N / N  )^{ \alpha_\beta /2 }    \Bigr ] \le \mu_{\max}^{(\beta - 1)/2} \cdot R
	\$ 
	and define an event $\cE_k$ as 
	\begin{align}
	\label{eq:event}
	\cE_k = \biggl\{\| \tilde f^*_k - f^*_k\|_\cH \le \cO\Bigl [  C_{K,1}^{\alpha_\beta  }   \cdot \kappa ^{- \alpha_\beta   } \cdot R^{ 1- \alpha_\beta }  \cdot  ( \log N / N  )^{ \alpha_\beta /2 }    \Bigr ] \biggr\}, \qquad \forall k \geq 0. 
	\end{align}
By \eqref{eq:lip4} and \eqref{eq:event}, 
we have  $\| \tilde f^*_k\|_\cH \le 2\mu_{\max}^{(\beta - 1)/2} \cdot R$ conditioning on event $\cE_k$, which holds with 
	  probability at least $1 - N^{-2}$.
	Moreover, under  Assumption \ref{ass4},  we have 
	\begin{align}
	\label{eq:lip5}
	\| \nabla^2\tilde f^*_k(x) \|_{\fro} & = \biggl [  \sum_{i,j \in [d]}   \bigl|  \partial^2_{ij} \tilde f^*_k(x)  \bigr|^2  \biggr ] ^{1/2 }  = \biggl [ \sum_{i,j \in [d]} \bigl | \bigl \la \partial^2 _{i,j} K(x, \cdot ), \tilde f_k^* \bigr \ra _{\cH } \bigr | ^2   \biggr ] ^{1/2} \\
	& \le   \biggl [ \sum_{i,j \in [d]}  \bigl \| \partial^2 _{i,j} K(x, \cdot ) \bigr \|_{\cH} ^2       \biggr ]  ^{1/2} \cdot \| \tilde f_k^* \|_{\cH} \leq   d  \cdot C_{K, 3} \cdot \|\tilde f_k^* \|_{\cH} \le 2d \cdot  C_{K, 3} \cdot \mu_{\max}^{(\beta - 1)/2} \cdot R. \notag 
	\end{align}
	Here $\| \cdot \|_{\fro}$ denotes the matrix  Frobenius norm, the first inequality follows from the Cauchy-Schwarz inequality, and the second inequality holds under  Assumption \ref{ass4}.
	Thus, 
	conditioning on event $\cE_k$, 
	 $\nabla \tilde f^*_k$ is $H_0 $-Lipschitz continuous,  where we define  
	$H_0 = 2d  \cdot C_{K, 3} \cdot \mu_{\max}^{(\beta - 1)/2} \cdot R.$
	
	Furthermore, recall that the total number of iterations is denoted by  $K $, which is smaller than $N$. To simplify the notation, we define event $\cE$ as $\cE = \cap _{k = 0}^K \cE_k$. 
	By union bound, we have $\PP(\cE) \geq 1 - (K+1) \cdot N^{-2} \geq 1 - N^{-1}$. 
	Moreover, conditioning on event $\cE$, $\nabla \tilde f_k^* $ is $H_0$-Lipschitz continuous for all $k \in \{0, \ldots , K\}$.	
 Notice that the stepsize $\alpha $ is set such that $\alpha < 1/ H_0$. 
	By  Proposition \ref{prop:push_par}, for any $k \leq K$, 
	we   can equivalently write \eqref{eq:final_iters} as 
	\#\label{eq:comp11}
	\tilde p_{k+1} = \expm_{\tilde p_k}  \bigl \{  - \alpha _k  \cdot [ \grad F(\tilde p_k) +   \delta_k ] \bigr \} , \qquad \delta_k = -\dvg \bigl  [ \tilde p_k \cdot ( \nabla \tilde f_k ^* - \nabla f_k^*)  \bigr ] .
	\# 
	Note that $\delta_k \in \cT_{\tilde p_k} \cP_2(\cX)$ is a tangent vector at point $\tilde p_k$. 
	Since $F$ is $L$-smooth under Assumption \ref{assume:objective}, by \eqref{eq:F_smooth} and \eqref{eq:comp11}, we have that
	\begin{align} \label{eq:comp12}
	F(\tilde  p_{k+1}) &  \leq   F(\tilde p_k) - \alpha_k \cdot \bigl\la \grad F(\tilde p_k) ,  \grad F (\tilde p_k) \bigr \ra_{\tilde p_k} - \alpha_k \cdot \bigl \la  \grad F(\tilde p_k) ,    \delta_k \bigr\ra_{\tilde p_k} \notag \\
	&\qquad +  \alpha_k^2 \cdot L/ 2 \cdot  \bigl \la  \grad F(\tilde p_k) +   \delta_k,  \grad F(\tilde p_k) +    \delta_k \bigr\ra_{\tilde p_k}  \notag \\
	& = F(\tilde p_k) - ( \alpha_k  - \alpha_k^2 \cdot L /2  ) \cdot \bigl\la \grad F(\tilde p_k) ,  \grad F (\tilde p_k) \bigr \ra_{\tilde p_k} \notag \\
	& \qquad + (   \alpha_k  + \alpha_k^2  \cdot L ) \cdot  \bigl | \bigl \la  \grad F(\tilde p_k) ,    \delta_k \bigr\ra_{\tilde p_k} \bigr |  +   \alpha_k^2  \cdot L /2 \cdot  \bigl \la  \delta _k,  \delta_k \bigr\ra_{\tilde p_k}  , 
	\end{align}
	where $\la \cdot , \cdot \ra_{\tilde p_k}$ is the 
	Riemannian metric on $\cT_{ \tilde p_k} \cP_2 (\cX)$.   
	By the inequality $2 ab \leq a^2 + b^2$,   we have  
	\begin{align}\label{eq:comp13}
	2 \cdot  \bigl | \bigl \la \grad F(\tilde p_k), \delta_k \big\ra _{\tilde p_k} \bigr |  & \leq  \bigl \la  \grad F(\tilde p_k),  \grad F(\tilde p_k) \bigr\ra_{\tilde p _k} + \bigl \la \delta_k, \delta_k\bigr\ra_{\tilde p_k}. 
	\end{align}
	Thus, by \eqref{eq:comp12} and  \eqref{eq:comp13}, we have that
	\#\label{eq:comp14} 
	F( \tilde p_{k+1}) &  \leq   F(\tilde p_k) -\alpha_k \cdot ( 1- 2\alpha_k L ) / 2 \cdot    \bigl\la \grad F(\tilde p_k) ,  \grad F (\tilde p_k) \bigr \ra_{\tilde p_k} \notag \\
	& \qquad   +  \alpha_k\cdot  ( 1+2  \alpha_k L ) /2 \cdot  \bigl \la \delta_k, \delta_k\bigr\ra_{\tilde p_k}.   
	\#
	Notice that under Assumption \ref{assume:objective}, 
	 $F$ is $\mu$-gradient dominated.  
	 To simplify the notation, let $F^*$ denote the optimal value $\inf_{p \in \cP_2(\cX) } F(p)$. 
	By \eqref{eq:F_grad_dom} and \eqref{eq:comp14} we have that
	\#\label{eq:comp15} 
	F(\tilde p_{k+1})  \leq  F(\tilde p_k) - \mu \cdot \alpha_k \cdot ( 1- 2\alpha_k L )/2  \cdot   [ F (\tilde p_k) - F^*    ] +\alpha_k \cdot  ( 1+2  \alpha_k L ) /2 \cdot \bigl\la \delta_k, \delta_k\bigr\ra_{\tilde p_k}.
	\#
	As we have introduced in \eqref{eq:def_varepsilon_k}, 
	$ \la \delta_k, \delta_k \ra_{\tilde p_k}$ is equal to 
	\#\label{eq:define_var_epsilon}
	\varepsilon_k = \int_{\cX} \big \| \nabla  \tilde f_k^* (x)  - \nabla f_k^* (x) \big \|_2^2 \cdot \tilde p_k (x) ~\ud x. 
	\#  
	Thus, combining \eqref{eq:define_var_epsilon}, we can equivalently write \eqref{eq:comp15} as   
	\# \label{eq:comp16}
	F(  \tilde p_{k+1} ) - F^*  &  \leq  [ 1-  \mu \cdot \alpha_k \cdot ( 1- 2\alpha_k L ) /2 ]  \cdot  [ F (\tilde p_k) - F^*    ]    + \alpha_k ( 1+2  \alpha_k L  ) /2 \cdot  \varepsilon_k.
	\#
	which 
	characterizes  the performance of a single step of variational transport in term of the objective value. 
	Furthermore, recall that we set $\alpha_k = \alpha$ for all $k \geq 0$, where $\alpha \cdot 2 L < 1 $. Thus, we have 
	\#\label{eq:comp18}
	1-  \mu \cdot \alpha_k \cdot ( 1- 2\alpha_k L) /2 \leq 1 - \alpha \cdot \mu /2 \in (0,1), \qquad ( 1+2  \alpha_k L ) /2 <  1.
	\#
	In the sequel,  we define $\rho =1 - \alpha \cdot \mu /2 $.
	Combining \eqref{eq:comp16} and  \eqref{eq:comp18}, it holds that 
	\# \label{eq:comp19}
	F(   \tilde p_{k+1} ) -  F^*      \leq  \rho  \cdot [ F (\tilde p_k) - F^*    ] + \alpha \cdot \varepsilon_k.
	\#
	Multiplying $\rho^{-(k+1)}$ to both sides of \eqref{eq:comp19}, 
	we have 
	\#\label{eq:comp20}
	\rho^{-(k+1)} \cdot  \bigl [ F(   \tilde p_{k+1} ) -  F^*     \bigr ]  \leq \rho^{-k} \cdot \ [ F (\tilde p_k) - F^*    ]   + \rho^{-(k+1)} \cdot   \alpha \cdot \varepsilon_k .
	\#
	Thus, $\{ \rho^{-k} \cdot   [ F(   \tilde p_{k} ) -  F^*       ]  \}_{k \geq 0   }^{K+1}$  forms a telescoping sequence. 
	Summing both sides of \eqref{eq:comp20}, for any $k \leq K$,
	we have 
	\#\label{eq:comp21}
	F(   \tilde p_{k} ) -  F^*   \leq \rho^k \cdot   [ F(   \tilde p_{0} ) -  F^*   ] +  \sum_{\ell = 0}^{k-1}  \rho^{k-1-\ell}  \cdot \alpha\cdot  \varepsilon_\ell .
	\#
	Meanwhile, 
	conditioning on  event $\cE$, 
	for each $\ell \leq K$, 
	by Assumption \ref{assume:f_star} and Theorem \ref{thm:stat}  we have  
	\begin{align}
	\label{eq:comp22}
	\varepsilon_\ell & = \int_{\cX} \big \| \nabla  \tilde f_\ell^* (x)  - \nabla f_\ell^* (x) \bigr \|_2^2 \cdot \tilde p_\ell (x) ~\ud x 
	 = \int_{\cX} \bigg[  \sum_{j \in [d] } \big|  \partial _j \tilde f_{\ell}^* (x) - \partial _j  f_{\ell}^* (x) \bigr  |^2 \biggr ]  \cdot \tilde p_{\ell} (x) ~\ud x \notag \\
	 & 
	 = \int_{\cX} \bigg[  \sum_{j \in [d] } \big| \bigl \la  \partial _j K(x, \cdot)  , \tilde f_{\ell}^* -     f_{\ell}^*  \bigr \ra_{\cH } \bigr  |^2 \biggr ]  \cdot \tilde p_{\ell} (x) ~\ud x  \leq \bigg[  \int_{\cX}   \sum_{j \in [d] }  \bigl  \|  \partial _j K(x, \cdot)  \bigr\|_{\cH }   ^2   \cdot \tilde p_{\ell} (x) ~\ud x \bigg]  \cdot \| \tilde f_{\ell}^* -     f_{\ell}^*  \|_{\cH } ^2 \notag \\
	 &
	  = \cO \Bigl [ C_{K,2}^2 \cdot d \cdot C_{K,1}^{2 \alpha_\beta  } \cdot \kappa ^{- 2 \alpha_\beta   }\cdot R^{2-2 \alpha_\beta }   \cdot  ( \log N / N  )^{ \alpha_\beta }   \Bigr ],
	\end{align}
	where the inequality in \eqref{eq:comp22} follows from Cauchy-Schwarz inequality and $\cO(\cdot)$ omits absolute constants.  
	We define the following error term
	\begin{align*}
	{\rm Err} =  \cO \Bigl [ \alpha \cdot  C_{K,2}^2 \cdot d \cdot C_{K,1}^{2 \alpha_\beta  } \cdot \kappa ^{- 2 \alpha_\beta   }\cdot R^{2-2 \alpha_\beta }   \cdot  ( \log N / N  )^{ \alpha_\beta }   \Bigr ].
	\end{align*}
	Thus, by \eqref{eq:comp21} and \eqref{eq:comp22}, we have that
	\begin{align*}
	F(   \tilde p_{k} ) -  \inf_{p \in \cP_2(\cX) } F(p)  \leq \rho^k \cdot \bigl [ F(   \tilde p_{0} ) -  \inf_{p \in \cP_2(\cX) } F(p)    \bigr ] +  (1-\rho)^{-1} \cdot {\rm Err}, \qquad \forall k \in [K], 
	\end{align*}
	with probability at least $ 1 - N^{-1}$, which completes the proof of Theorem \ref{thm:comp}.	
\end{proof}

\section{Proofs of the Auxiliary Results}

In this section, we prove the auxiliary results introduced in \S\ref{sec:algo}.

\subsection{Proof of Proposition \ref{prop:func_grad}} \label{proof:prop:func_grad}

\begin{proof}
	We first recall the definition of   directional derivative. For any $p \in \cP_2(\cX)$ and any $s \in \cT_p \cP_2(\cX)$, let $\gamma \colon [0, 1 ] \rightarrow \RR$ be a curve satisfying $\gamma (0) = p$ and $\gamma'(0) = s$.  
	Then, the directional derivative of $F$ satisfies      
	\#\label{eq:func_grad1}
	\bigl \la \grad F(p), s \bigr \ra _p =   \frac{\ud}{\ud t} F \bigl [  \gamma(t) \bigr ] \bigggiven _{t= 0} . 
	\#
	By the definition of functional gradient with respect to the $\ell_2$-structure, the directional derivative at $p$ in the direction of $s$ can be written as 
	\#\label{eq:func_grad2} 
	\frac{\ud}{\ud t} F \bigl [  \gamma(t) \bigr ] \bigggiven _{t= 0} = \int _{\cX} \frac{\delta F}{\delta p} (x)  \cdot s (x)~ \ud x. 
	\#
	Let $u \colon \cX\rightarrow \RR$ be the unique  solution to  the following elliptic equation,
	\#\label{eq:func_grad3}
	-\dvg \bigl [  p(x)\cdot (\nabla u) (x)  \bigr ]   & = s(x), \qquad \forall x \in \cX, 
	\#  
	where $\dvg$ is the divergence operator on $\cX$. Thus, combining \eqref{eq:func_grad2} and \eqref{eq:func_grad3}, and applying the integration by parts, we have 
	\#\label{eq:func_grad4}
	\frac{\ud}{\ud t} F \bigl [  \gamma(t) \bigr ] \bigggiven _{t= 0} & = - \int _{\cX} \frac{\delta F}{\delta p} (x)  \cdot  \dvg \bigl [  p(x)\cdot (\nabla u) (x)  \bigr ]  \ud x  \\
	& = - \int _{\cX} \biggl \{    \dvg \biggl [  p(x)\cdot \frac{\delta F}{\delta p} (x)  \cdot (\nabla u) (x)  \biggr ]  -   \biggl \la  \nabla \biggl (\frac{\delta F}{\delta p} \biggr )   (x) , (\nabla u) (x) \biggr \ra  \cdot p(x)\biggr\}    \ud x,\notag
	\#
	where the second equation follows from the fact that  $\dvg(f\cdot v)  = \la \nabla f, v \ra + f \cdot \dvg(v)$ holds  for any scalar function $f$ and any vector-valued function   $v$.  
	Here   $\la \cdot, \cdot \ra$ is the inner product structure on $\cX$. 
	Note that $\cX$ is a compact Riemannian manifold without a boundary. 
	By the divergence theorem \citep{rudin1976principles}, 
	it holds that the first term on   the right-hand side of \eqref{eq:func_grad4} is equal to zero. 
	Thus, combining \eqref{eq:func_grad1} and \eqref{eq:func_grad4} we obtain that 
	\#\label{eq:func_grad5}
	\bigl \la \grad F(p), s \bigr \ra _p = \int_{\cX} 
	\biggl \la  \nabla \biggl (\frac{\delta F}{\delta p} \biggr )   (x) , (\nabla u) (x) \biggr \ra  \cdot p(x)  ~  \ud x. 
	\#
	
	Meanwhile, let $v \colon \cX \rightarrow \RR$ be the solution to   the elliptic equation 
	\$
	-\dvg \bigl [  p(x)\cdot (\nabla v) (x)  \bigr ]   & =   [\grad F(p)] (x),  \qquad \forall x \in \cX. 
	\$
	Since both $\grad F(p)$ and $s$ belong  to the tangent space $\cT_p \cP_2(\cX)$, their 
	inner product 
	is given by the 
	Riemannian metric on $(\cP_2(\cX), W_2)$, i.e., 
	\#\label{eq:func_grad6}
	\bigl \la \grad F(p), s \bigr \ra _p  = \int_{\cX}   \bigl \la \nabla u(x) , \nabla v (x) \bigr \ra \cdot p(x) ~ \ud x. 
	\# 
	Combining \eqref{eq:func_grad5} and \eqref{eq:func_grad6}, we obtain for all  $ s \in \cT_p \cP_2(\cX)$ that 
	\#\label{eq:func_grad7}
	\int_{\cX} 
	\biggl \la  \nabla \biggl (\frac{\delta F}{\delta p} \biggr )   (x) , (\nabla u) (x) \biggr \ra  \cdot p(x)  ~  \ud x 	= \bigl \la \grad F(p), s \bigr \ra _p   = \int_{\cX}   \Bigl \la( \nabla u)(x) , (\nabla v) (x) \Bigr \ra \cdot p(x) ~ \ud x.
	\#
	Since $s \in \cT_p\cP_2(\cX)$ is arbitrary, \eqref{eq:func_grad7} implies that 
	$
	\nabla (\delta F / \delta p ) = \nabla v.
	$
	Therefore, we have 
	\$
	\grad F(p) = -\dvg\bpa{p \nabla v} & = - \dvg  \bigl [ p \cdot \nabla (\delta F/ \delta p) \bigr ],
	\$
	which establishes \eqref{eq:riem_grad}. 
	
	It remains to obtain $\delta F/ \delta p$ for $F$ defined in \eqref{eq:var_func}. 
	By the definition of the functional derivative with respect to the   $\ell_2$-structure, for any square-integrable function $\phi \colon \cX \rightarrow \RR$, we have 
	\#
	\label{eq:func_differential}
	\int_{\cX} \frac{\delta F } {\delta p} (x) \cdot \phi(x) ~\ud x = \lim_{\epsilon \rightarrow 0} \frac{1}{\epsilon} \cdot \bigl [ F(p + \epsilon \cdot \phi) - F(p) \bigr ] .
	\#
	For the notational simplicity, we let  $f_{\epsilon} ^* $  denote  the  optimal dual variable of  the  optimization problem
	\#\label{eq:dual_problem22}
	\sup_{f \in \mathcal{F}}  \biggl \{ \int _{\cX} f(x) \cdot  [p(x) + \epsilon \cdot \phi (x) ]   ~\ud x - F^*(f) \biggr\} 
	\# 
	for any sufficiently small $\epsilon > 0$. Then, it holds that 
	\#\label{eq:compute_grad1}
	&  \bigl [ F(p + \epsilon \cdot \phi) - F(p) \bigr ] \notag \\
	& \qquad  \geq  \biggl [  \int_{\cX} f^*_p (x) \cdot  [p(x) + \epsilon \cdot \phi (x) ]   ~\ud x - F^*(f_p^* )  \biggr ] - \biggl [  \int_{\cX} f^*_p (x) \cdot   p(x)    ~\ud x - F^*(f_p^* )  \biggr ] \notag \\
	& \qquad = \epsilon \cdot \int_{\cX } f_p^* (x) \cdot \phi(x) ~\ud x, 
	\#
	where the inequality follows from the suboptimality  of $f_p^*$ for \eqref{eq:dual_problem22}. 
	Similarly, we obtain an upper bound of $ [ F(p + \epsilon \cdot \phi) - F(p)  ] $ by
	\#\label{eq:compute_grad2}
	&  \bigl [ F(p + \epsilon \cdot \phi) - F(p) \bigr ] \notag \\
	& \qquad  \leq  \biggl [  \int_{\cX} f^*_\epsilon (x) \cdot  [p(x) + \epsilon \cdot \phi (x) ]   ~\ud x - F^*(f_\epsilon^* )  \biggr ] - \biggl [  \int_{\cX} f^*_\epsilon (x) \cdot   p(x)    ~\ud x - F^*(f_\epsilon^* )  \biggr ] \notag \\
	& \qquad  =  \epsilon \cdot \int_{\cX } f_\epsilon^* (x) \cdot \phi(x) ~\ud x.
	\# 
	Combining \eqref{eq:compute_grad1} and \eqref{eq:compute_grad2}, we obtain that $F(p + \epsilon \cdot \phi) - F(p) $ converges to zero as $\epsilon$ tends to zero.  Since  $F^*$ is strongly convex, there exists a constant $\lambda > 0$ such that,  for any two  measurable functions $f_1$ and $f_2$, we have 
	\$
	\int_{\cX} \biggl [  \frac{\delta F^*} {\delta f_1}(x) - \frac{\delta  F^*} {\delta f_2}(x) \biggr ]  \cdot [f_1(x) - f_2(x) ] ~ \ud x \geq  \lambda \cdot  \int_{\cX} [f_1(x) - f_2(x) ] ^2 ~\ud x.
	\$
	Besides, since $f_p^*$ and $f_{\epsilon}^*$ are maximizers of the  optimization problems in \eqref{eq:var_func} and \eqref{eq:dual_problem22}, respectively, we have 
	$
	\delta F^*/ \delta f_p^*  = p $ and 
	$  \delta F^*/ \delta f_\epsilon^*   =p + \epsilon \cdot \phi.
	$
	Thus, combining the strong convexity of $F^*$ and the Cauchy-Schwarz inequality, we obtain 
	\$
	\lambda \cdot \int _{\cX} (f_{\epsilon}^* - f_p^*)^2 \ud x  \leq \epsilon \cdot \int_{\cX} \phi \cdot (f_{\epsilon}^* - f_p^*) ~\ud x \leq \epsilon \cdot \biggl ( \int_{\cX}  \phi^2 ~\ud x  \biggr )  ^{1/2} \cdot \biggl [ \int_{\cX}   (f_{\epsilon}^* - f_p^*)^2 ~\ud x  \biggr ]^{1/2},
	\$
	which implies that $\| f_{\epsilon}^* - f_p^* \|_{\ell_2}  \leq \epsilon/\lambda \cdot \| \phi \|_{\ell_2}$ and thus $f_{\epsilon}^*$ converges to $f_p^*$ as $\epsilon$ goes to zero.  Finally, combining \eqref{eq:func_differential},  \eqref{eq:compute_grad1}, and \eqref{eq:compute_grad2}, we have for all $\phi$ that 
	\$
		\int_{\cX} \frac{\delta F } {\delta p} (x) \cdot \phi(x) ~\ud x = \lim_{\epsilon\rightarrow 0}   \frac{1}{\epsilon} \cdot \bigl [ F(p + \epsilon \cdot \phi) - F(p) \bigr ] = \int_{\cX} f_p^* (x) \cdot \phi(x) ~ \ud x
	\$
	which implies that $\delta F / \delta p = f_p^*$.
	
	Finally, by \eqref{eq:riem_grad}, for $F$ defined in \eqref{eq:var_func}, we have $\grad F(p) = - \dvg ( p \cdot \nabla f_p^*)$. Therefore, we complete the proof of this proposition.
\end{proof}

\subsection{Proof of Proposition \ref{prop:push_par}} \label{proof:prop:push_par}

\begin{proof}
	We define a curve $\gamma\colon [0, 1/ H) \rightarrow \cP_2(\cX)$ by letting $ \gamma(t) =[ \expm_{\cX} (t \cdot \nabla u) ]_{\sharp} p $  for all $t\in [0, 1/H)$.  By the definition of exponential mapping, to establish \eqref{eq:push_eq} of Proposition \ref{prop:push_par}, it suffices to show (i)  $\gamma(0) = p$, (ii) $\gamma'(0) = s$, and (iii) $\gamma$ is a geodesic on $\cP_2(\cX)$ for $t \in [0, 1/H)$. 
	First note  that $\gamma (0) = \id  _{\sharp} p = p$, where $\id  \colon \cX \rightarrow \cX$ is the  identity mapping on $\cX$.   
	In addition, the   following lemma  proves that $\gamma(t)$ is a geodesic on $\cP_2(\cX)$. 
	
	\begin{lemma}
		\label{lem_push_particle_gives_geodesic}
		Let $\cX$ be $\RR^d$ or a  closed convex  subset of $\RR^d$ with a periodic boundary condition. 
		Let $u: \cX\rightarrow \cX $ be a  twice continuously differentiable function on $\cX$  with gradient $\nabla u \colon \cX \rightarrow \cX$ being  $H$-Lipschitz continuous. Then,  for any $p \in \cP_2(\cX)$,  a curve $\gamma \colon [ 0, 1/ H ) \rightarrow \cP_2(\cX)$ defined by 
		$ 
		\gamma(t) =  [ \expm_{\cX} (t \cdot  \nabla u)  ] _{\sharp}p  
		$ 
		is a geodesic on $\cP_2(\cX)$. 
	\end{lemma}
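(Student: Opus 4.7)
The plan is to realize $\gamma$ as a McCann-type displacement interpolation and invoke Brenier's theorem (in its manifold variant) to conclude it is a constant-speed geodesic. Since $\cX$ is either $\RR^d$ or a convex subset of $\RR^d$ with periodic boundary (the torus), the exponential map on $\cX$ acts, in local charts, as $\expm_{y}(v) = y + v$ (with identification modulo the lattice in the torus case). Thus we may write $T_t(x) := \expm_{\cX}(x, t \cdot \nabla u(x)) = x + t \cdot \nabla u(x)$ in coordinates. The $H$-Lipschitz continuity of $\nabla u$ together with the twice-continuous-differentiability of $u$ gives $\|\nabla^2 u(x)\|_{\rm op} \le H$, so the Jacobian $I + t \cdot \nabla^2 u(x)$ is strictly positive definite with smallest eigenvalue at least $1 - tH > 0$ for every $t \in [0, 1/H)$. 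In particular, $T_t$ is a local diffeomorphism.

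Next, for each fixed $t_0 \in [0, 1/H)$, I would observe that $T_{t_0} = \nabla \phi_{t_0}$ where $\phi_{t_0}(x) := \|x\|^2/2 + t_0 \cdot u(x)$ has Hessian $I + t_0 \cdot \nabla^2 u \succeq (1 - t_0 H) \cdot I$, hence is strictly convex. Strict convexity of $\phi_{t_0}$ guarantees global injectivity of $T_{t_0}$ and, by Brenier's theorem (or its torus version due to \citet{cordero1999}), that $T_{t_0}$ is the unique optimal transport map from $p$ to $(T_{t_0})_{\sharp} p$ for the squared-distance cost. Consequently, the linear interpolation $S_\tau := (1-\tau)\cdot \id + \tau \cdot T_{t_0}$ is the McCann displacement interpolation, and the curve $\tau \mapsto (S_\tau)_{\sharp} p$ is a constant-speed Wasserstein geodesic on $[0,1]$ connecting $p$ to $(T_{t_0})_{\sharp} p$.

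The final step is to recognize that $S_\tau = \id + \tau t_0 \cdot \nabla u = T_{\tau t_0}$, so substituting $s = \tau t_0$ identifies $\gamma(s)$ with this geodesic for $s \in [0, t_0]$. Since $t_0$ is an arbitrary element of $[0, 1/H)$, concatenating these reparametrizations shows that $\gamma \colon [0, 1/H) \to \cP_2(\cX)$ is itself a geodesic. A direct verification of the constant-speed property is that, using the coupling $(T_s, T_{s'})_{\sharp} p$ and the fact that in local charts the geodesic from $T_s(x)$ to $T_{s'}(x)$ has length $|s' - s| \cdot \|\nabla u(x)\|$, one obtains $W_2^2(\gamma(s), \gamma(s')) \le (s' - s)^2 \cdot \int_{\cX} \|\nabla u(x)\|^2 \cdot p(x)\, \ud x$, and the Brenier optimality of $T_{t_0}$ upgrades this to an equality.

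The main obstacle will be handling the torus case cleanly: the Brenier optimality characterization is stated for convex functions on $\RR^d$, while on $\TT^d$ one must either lift to the universal cover $\RR^d$ (justified because $t \cdot \nabla u$ has uniformly small sup-norm when $t$ is small, so optimal transport on $\TT^d$ agrees with the Euclidean one locally), or invoke the manifold version of Brenier's theorem using $c$-concave potentials. Either route requires verifying that the displacement $x \mapsto t \cdot \nabla u(x)$ is small enough, for $t < 1/H$, that no wrap-around occurs in an optimal sense; this reduces to the same positive-definiteness bound $I + t \cdot \nabla^2 u \succ 0$ already established above.
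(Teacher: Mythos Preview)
Your proposal is correct and follows essentially the same route as the paper: define $\phi_{t_0}(x)=\|x\|^2/2+t_0\,u(x)$, use the $H$-Lipschitz bound on $\nabla u$ to get $\nabla^2\phi_{t_0}\succ 0$ for $t_0<1/H$, invoke Brenier to certify $T_{t_0}=\nabla\phi_{t_0}$ as the optimal map from $p$ to $\gamma(t_0)$, and then recognize $\gamma$ restricted to $[0,t_0]$ as the McCann displacement interpolation $s\mapsto ((1-s/t_0)\,\id+(s/t_0)\,T_{t_0})_\sharp p$. The paper carries out the last step by an explicit computation showing $W_2[\gamma(t_1),\gamma(t_2)]=(t_2-t_1)/t_0\cdot W_2[p,\gamma(t_0)]$, and handles the periodic case by periodic extension to $\RR^d$ together with the Riemannian version of Brenier due to McCann, which is exactly the alternative you flag for the torus.
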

	
	\begin{proof}
		We consider that $\cX $ is $ \RR^d$ or a subset of $\RR^d $ with periodic boundary condition separately.  We first consider the former case, where $\gamma(t)$ can be written as 	\#\label{eq:exp_map_form}
		[ \expm_{\cX} (t \cdot \nabla u ) ]_{\sharp} p = (\id + t \cdot \nabla u )_{\sharp} p, \qquad \forall t \in [0, 1/ H ). 
		\# 
 	To show that $\gamma$ is a geodesic, we  first show that $\gamma $   is indeed a curve in $\cP_2(\cX)$, i.e., $\gamma(t) \in \cP_2(\cX)$ for all $   t \in [0, 1/H)$. To this end, we define $\varphi_t(x) = \| x \|_2^2 /2 + t \cdot u(x)$ for all $t\in [0, 1/ H )$. By definition, we have $   \nabla  \varphi_t = \id + t \cdot \nabla u$, which implies that  $ \gamma (t) = [ \nabla \varphi_t ] _{\sharp} p$.  Since $\nabla u $ is $H$-Lipschitz continuous, for all $t\in [0, 1/ H)$, $ \varphi_t$ is strongly convex. 
		In addition, since $\varphi_t$ is also   twice continuously differentiable, the Jacobian of $\nabla \varphi_t$, i.e., $\nabla^2 \varphi_t$ is continuous and  positive definite, which implies that
		$\nabla \varphi_t  $ is an invertible mapping from $\RR^d$ to $\RR^d$.  Hence,   $[ \nabla \varphi_t]_{\sharp} p$ is absolutely continuous with respect to the Lebesgue measure and has a finite second moment. This shows that $\gamma(t) \in \cP_2(\cX) $  for all $t\in [0, 1/ H) $.  
		
		It remains to show that $\gamma$ is a geodesic. Note that we have $\gamma (t) = [\nabla \varphi_t ] _{\sharp} p $ for all $t \in [0,1/ H)$. The Brenier's Theorem (see, for example, Theorem 2.12 in \cite{villani2003topics}) shows that, for any fixed $t \in (0, 1 / H)$, there exists a unique  optimal transport plan between $p$ and $\gamma(t)$ that is induced by  the gradient of a convex function $\tilde \varphi_t $.  Specifically,  it holds that $ \gamma (t) = [ \nabla \tilde \varphi_t ] _{\sharp} p$. 
		Due to the uniqueness, we conclude that  $\nabla \varphi_t$ is the optimal transportation plan between $p$ and $\gamma(t)$. 
		Finally, 
		we fix any $\overline{t} \in [0, 1/H)$ and show that $\gamma$ is a geodesic when restricted to $ [0, \overline{t} ] $. For any $t\in [0, \overline t ]$, we have 
		\$
		\nabla \varphi_{t  }  = \id +   t  \cdot  \nabla u = [ 1 -  (t / \overline{t} ) ] \cdot \id + (t / \overline{t} )  \cdot \nabla  \varphi_{\overline{t}} ,
		\$ 
		where $ \varphi_{\overline{t}} $ is a strongly convex function.
		Thus, we can write $\gamma (t) $ as $\{ [ 1 -  (t / \overline{t} ) ] \cdot \id + (t / \overline{t} )  \cdot \nabla  \varphi_{\overline{t}} \} _{\sharp} p$.
		By direct computation, for any $0 \leq t_1 < t_2 \leq \overline {t}$,  we have 
		\#\label{eq:w2_geodesic}
		& W_2  [ \gamma(t_1) , \gamma(t_2) ]    \notag \\
		& \qquad = \biggl [ \int _{\cX}   \Bigl \|   \bigl \{  [ 1 -  (t _1 / \overline{t} )  ] x  + (t_1  / \overline{t} ) \cdot \nabla \varphi_{\overline{t}} (x) \bigl \}   - \bigl \{  [ 1 -  (t _2/ \overline{t} )  ] x  + (t_2  / \overline{t} ) \cdot \nabla \varphi_{\overline{t}} (x)  \bigr \} \Bigr \| _2 ^2 \cdot    p(  x) ~\ud x \biggr ]^{1/2}  \notag \\
		& \qquad  = (t_2 -t_1) / \overline{t}\cdot \biggl [ \int _{\RR^d}   \bigl \|     x  - \nabla \varphi_{\overline{t}} (x)   \bigr \| _2 ^2   \cdot   p(  x) ~\ud x  \biggr ]^{1/2}  = (t_2 -t_1) / \overline{t}  \cdot W_2\bigl[ p , \gamma(\overline {t} ) \bigr] .
		\#
		Thus, $\{ \gamma(t)\}_{t \in [0, \overline{t}]}$ is a reparametrized geodesic.  Since $\overline{t}$ is arbitrarily chosen within $[0, 1/H)$, it then follows that $\gamma(t)$ is a geodesic for $0 \leq t < 1/H$.

		It remains to prove this lemma for the case where $\cX$ is a closed convex subset of $\RR^d$ with a  periodic boundary condition.  In this case, any $x \in \cX$ can be identified with an equivalence class  of $\RR^d$. Each probability measure $p\in \cP(\cX)$  is unique identified with a periodic measure $\tilde p \in \cP(\RR^d)$ such that $\tilde p$ coincides with $p$ on $\cX$. 
		We  call    $\tilde p$  the periodic extension of $p$. 
		Since $p$ is absolutely continuous with respect to the Lebesgue measure and has a positive density, so is $\tilde p$.  
		Moreover, $u \colon \cX \rightarrow \RR$ can also be extended as a periodic function on $\RR^d$, and $\varphi_t (x) =  \| x \|_2^2  /2 + t\cdot u(x)  $ is a strongly convex, twice continuously differentiable, and  periodic function on $\RR^d$. 
		Thus, $(\id + t\cdot \nabla u)_{\sharp} \tilde p $ is  the periodic extension of $[ \expm_{\cX} (t \cdot \nabla u ) ]_{\sharp} p$  \citep{carlen2003constrained}. These two measures coincide on $\cX$, i.e., 
		\#\label{eq:periodic_extension}
		[ \expm_{\cX} (t \cdot \nabla u ) ]_{\sharp} p  = (\id + t\cdot \nabla u)_{\sharp} \tilde p\big\vert_{\cX}, 
		\#
		where $\cdot \vert_{\cX}$ denotes the restriction to $\cX$.
		Note that   $[\nabla \varphi_t]_{\sharp} \tilde p $ is absolutely continuous with respect to the Lebesgue measure and has a  finite second-order moment. Thus, restricting $[\nabla \varphi_t]_{\sharp} \tilde p $ to $\cX$, \eqref{eq:periodic_extension} implies that  $\gamma (t) \in  \cP_2(\cX)$ for all $t\in [0, 1/ H)$, i.e., $\gamma$ is a curve on $\cP_2(\cX)$.

		To show that $\gamma$ is a geodesic,  we utilize the generalization of Brenier's theorem to probability distributions over a Riemannian manifold \citep{mccann2001polar,gigli2011inverse}. For any $t \in [0, 1/ H)$, since $\gamma(t) \in \cP_2(\cX)$, there exists a unique optimal transportation plan  $\Upsilon\colon \cX \rightarrow \cX$ between $p$ and $\gamma(t)$  such that  $\gamma(t) = \Upsilon_{\sharp} p$. Specifically,    $\Upsilon$  takes the form  of $\Upsilon(x) = \expm_{x} [ -\nabla \psi(x) ] $ for some $\psi \colon \cX \rightarrow \RR$ such that $\| x \|_2^2 /2 - \psi(x)$ is convex. Thus, due to the uniqueness and the fact that $\nabla u$ is $H$-Lipschitz, $\expm_{\cX} ( - t\cdot \nabla u )$ is the optimal transportation plan between $p$ and $\gamma(t)$. 
		
		Now we fix any $\overline t \in [0, 1/H)$ and show that $\gamma$ is a geodesic for $t\in [0, \overline t]$. For any $0\leq t_1 < t_2 \leq \overline t$,   following the derivations in \eqref{eq:w2_geodesic}  and  \eqref{eq:periodic_extension},  we have 
		\$ 
		& W_2  [ \gamma(t_1) , \gamma(t_2) ]   = W_2 \bigl \{ [ \expm_{\cX} (t_1  \cdot \nabla u ) ]_{\sharp} p   , [ \expm_{\cX} (t_2  \cdot \nabla u ) ]_{\sharp} p    \bigr \}    \notag \\
		&  \qquad   =   W_2 \Bigl \{(\id + t_1 \cdot \nabla u)_{\sharp} \tilde p\big\vert_{\cX}  , (\id + t_2 \cdot \nabla u)_{\sharp} \tilde p\big\vert_{\cX}   \Bigr \} = (t_2 - t_1) / \overline t \cdot  W_2 \Bigl \{  \tilde p\big\vert_{\cX}  , (\id + \overline t \cdot \nabla u)_{\sharp} \tilde p\big\vert_{\cX}   \Bigr \}  \notag \\
		& \qquad  = (t_2 - t_1) / \overline t \cdot  W_2 [ p, \gamma(\overline t) ], 
		\$
		where the second and the last equality follows from \eqref{eq:periodic_extension}. Thus, we obtain that $\{ \gamma (t) \}_{t\in [0, \overline t]}$ is a geodesic up to reparametrization, which concludes the proof of Lemma \ref{lem_push_particle_gives_geodesic}.
	\end{proof}

	To conclude the proof of Proposition \ref{prop:push_par}, it remains to show that $\gamma'(0) = s$. 
	Similar to the proof of the above lemma, we  distinguish   the two cases  where  $\cX$ is $\RR^d$ and $\cX $ is a closed convex subset of $\RR^d$ with a periodic boundary condition.  
	
	For the former case,   to simplify the notation, we write $T_{t} = \id + t  \cdot \nabla u $, which is invertible for  $t \in [0, 1/ H)$. 
	By the definition of the pushforward mapping, we have 
	\$
	\gamma(t) (x) & =  \bigl [ (T_t) _{\sharp} p  \bigr ] (x) = p \bigl [ T_t^{-1} (x) \bigr ]  \cdot \biggl  | \frac{\ud }{\ud x} \bigl [ T_t^{-1} ( x) \bigr ]  \biggr  |,
	\$
	where the second equality follows from the change-of-variable formula and $ | \frac{\ud }{\ud x}  [ T_t^{-1} ( x)  ]   |$ is the determinant of the Jacobian.  
	When $t$ is sufficiently small, for any $x \in \cX $, the Taylor  expansion in $t$ yields that 
	$
	T_t^{-1} (x) = x - t \cdot ( \nabla u)  (x) + o(t) ,
	$
	which further implies that  
	\# \label{eq:differentiate} 
	\gamma(t) (x) & = p\bigl [ x - t \cdot ( \nabla u)  (x) + o(t)  \bigr ]  \cdot  \bigl | I_d - t \cdot  (\nabla^2  u) (x)  + o(t) \bigr |  \notag \\
	& =  \big[ p(x) - t \cdot \bigl \la 
	( \nabla u)  (x), ( \nabla p)(x)   \bigr \ra + o(t) \bigr ] \cdot  \bigl \{ 1  - t \cdot  \trace [ (\nabla^2  u) (x) ]   + o(t) \bigr \} \notag \\
	& =  p(x) - t \cdot \bigl \la 
	( \nabla u)  (x), ( \nabla p)(x)   \bigr \ra - t\cdot   \trace [ (\nabla^2  u) (x) ]  \cdot p(x) + o(t),
	\#
	where the second equality follows from the Taylor expansion of $p [ x - t \cdot ( \nabla u)  (x) + o(t)  ]$. 
	Since 
	\$ \bigl  \la 
	 \nabla u  (x), \nabla p(x)    \bigr \ra + \trace \bigl[ \nabla^2  u (x) \bigr] \cdot p(x)  = \dvg \bigl [p(x) \cdot  \nabla u (x)  \bigr ],
	\$  by \eqref{eq:differentiate} we obtain that 
	\$
	\gamma(t) (x)  = p(x) - t \cdot  \dvg \bigl [p(x) \cdot  \nabla u (x)  \bigr ] + o(t) , 
	\$
	which implies that $\gamma '(0) =  - \dvg (p \cdot \nabla u)   = s$. 
	
	It remains to show  $\gamma '(0) =  - \dvg (p \cdot \nabla u)   = s$ when $\cX$ is a closed compact subset of $\RR^d$ with periodic boundary condition.  As shown in the proof of Lemma \ref{lem_push_particle_gives_geodesic}, $p$ can be periodically extended to a measure $\tilde p$  on $\RR^d$ and that such an extension is unique. 
	The solution $u$ to  the elliptic equation $- \dvg(   p \cdot \nabla u) =s$ can also also be viewed as a periodic function on $\RR^d$. 
	Thus, $(\id + t\cdot \nabla u)_{\sharp} \tilde p $ is  the periodic extension of $[ \expm_{\cX} (t \cdot \nabla u ) ]_{\sharp} p$  and  \eqref{eq:periodic_extension} holds. 
	Note that  
	$\tilde \gamma (t) = (\id + t\cdot \nabla u)_{\sharp} \tilde p$ satisfies that $\tilde \gamma(0) = \tilde p$ and $\tilde \gamma' (0) = s$. 
	Therefore,  restricting $\tilde \gamma(t)$  to $\cX$, we conclude that $\gamma'(t) = s$, which completes the proof. 
\end{proof}

\end{document}